\documentclass{article} %
\usepackage{iclr2027_conference, times}
\usepackage{natbib}

\usepackage{url}
\usepackage{amssymb, amsmath}

\usepackage{amsmath,amsfonts,bm}

\def\eqref#1{equation~\ref{#1}}
\def\Eqref#1{Equation~\ref{#1}}

\def\1{\bm{1}}

\def\rc{{\textnormal{c}}}

\def\rz{{\textnormal{z}}}

\def\rvw{{\mathbf{w}}}
\def\rvx{{\mathbf{x}}}

\def\vmu{{\bm{\mu}}}
\def\vtheta{{\bm{\theta}}}

\def\vo{{\bm{o}}}

\def\vs{{\bm{s}}}

\def\vv{{\bm{v}}}

\def\vx{{\bm{x}}}
\def\vy{{\bm{y}}}
\def\vz{{\bm{z}}}

\def\mA{{\bm{A}}}

\def\mD{{\bm{D}}}

\def\mSigma{{\bm{\Sigma}}}

\DeclareMathAlphabet{\mathsfit}{\encodingdefault}{\sfdefault}{m}{sl}
\SetMathAlphabet{\mathsfit}{bold}{\encodingdefault}{\sfdefault}{bx}{n}

\def\gG{{\mathcal{G}}}

\def\sA{{\mathbb{A}}}
\def\sB{{\mathbb{B}}}
\def\sC{{\mathbb{C}}}
\def\sD{{\mathbb{D}}}
\def\sF{{\mathbb{F}}}

\def\sI{{\mathbb{I}}}
\def\sJ{{\mathbb{J}}}
\def\sK{{\mathbb{K}}}
\def\sL{{\mathbb{L}}}
\def\sM{{\mathbb{M}}}
\def\sN{{\mathbb{N}}}

\def\sS{{\mathbb{S}}}

\def\sU{{\mathbb{U}}}
\def\sV{{\mathbb{V}}}

\def\sX{{\mathbb{X}}}

\def\sZ{{\mathbb{Z}}}

\newcommand{\E}{\mathbb{E}}

\newcommand{\R}{\mathbb{R}}

\DeclareMathOperator*{\argmin}{arg\,min}

\usepackage{booktabs}
\usepackage{algorithm}
\usepackage{algpseudocode}
\usepackage{makecell}
\usepackage{graphicx,wrapfig}
\usepackage{tikz}
\usetikzlibrary{calc,shapes.geometric}

\usepackage{amsthm}
\usepackage{hyperref}
\hypersetup{
  hidelinks,
  pdftitle={Two-Point Local Optimality in k-Means via Boundary-Point Screening},
  pdfauthor={Wenlong Lyu, Xujie Xiao, Yuheng Jia}
}
\theoremstyle{definition}
\newtheorem{definition}{Definition}[section]

\theoremstyle{plain}
\newtheorem{proposition}[definition]{Proposition}
\newtheorem{theorem}[definition]{Theorem}

\title{Two-Point Local Optimality in $k$-Means via Boundary-Point Screening}

\author{Wenlong Lyu, Xujie Xiao \& Yuheng Jia \\
School of Computer Science and Engineering\\
Southeast University\\
Nanjing, Jiangsu 211189, China \\
\texttt{\{lwl25,220262246,yhjia\}@seu.edu.cn}
}

\iclrfinalcopy

\begin{document}
\maketitle
\lhead{Preprint}

\begin{abstract}
Lloyd's algorithm and the discrete local (D-local) optimization method \citep{li2025modified} for $k$-means provide only weak local-optimality guarantees, and their solution quality remains sensitive to initialization.
In this paper, we introduce \textbf{$r$-point local optimality}, under which no reassignment of at most $r$ samples decreases the objective function, and focus on $r=2$.
The main computational obstacle is the $\mathcal{O}(n^2(k^2+d))$ cost of exhaustive two-point certification for $n$ samples in $d$ dimensions and $k$ clusters.
To address this challenge, we prove that (i) every improving two-point move of a D-local optimum must involve a cluster shared by both reassignments, and (ii) only certificate-defined boundary points can participate in an improving pair.
Exploiting this structure, we propose \emph{Boundary-Point-Screened Two-Point Local Search} (BPS-2PLS), which terminates at a two-point local optimum.
For fixed $k,d$ and nonvanishing cluster occupancy, the number $m$ of retained candidates satisfies $m=\mathcal O_{\mathbb P}(\log n)$ under i.i.d.\ sampling from a bounded-support distribution with bounded density or from a Gaussian mixture.
Across twelve benchmarks, BPS-2PLS attains the lowest available mean WCSS on ten.
In a subsampling study, screening retains $0.10\%$--$2.81\%$ of samples on average at the largest tested sizes.
The code is available at \url{https://github.com/lwl-learning/BPS-2PLS}.
\end{abstract}

\section{Introduction}
$k$-means \citep{macqueen1967some} is a canonical clustering method because of its simple geometric interpretation, with broad applications including image analysis \citep{jia2025calibrated,li2025sample,qi2024superpixel,achanta2012slic}, large-scale similarity search \citep{jegou2011product}, single-cell genomics \citep{chari2024biophysically}, climate science \citep{lee2024dynamical}, and speech processing \citep{ambilduke2025tower}.
$k$-means seeks a partition that minimizes the within-cluster sum of squares (WCSS).
Despite the simplicity of its objective, finding a global minimizer is NP-hard \citep{aloise2009nphardness}.
Practical methods therefore seek a suboptimal solution from a given initialization.

Lloyd's algorithm alternates between assignment and centroid updates.
At termination, the assignments and centroids are each optimal with the other fixed \citep{lloyd1982least}.
This partial optimality does not exclude a reassignment that decreases WCSS after centroid recomputation.
Hartigan-style methods \citep{hartigan1979algorithm,telgarsky2010hartigan} address this weakness, and \citet{li2025modified} formalize \emph{D-local optimality}, under which no one-sample reassignment decreases WCSS after centroid recomputation.

However, two reassignments that are individually non-improving can yield an improvement when applied simultaneously, because their centroid updates interact through shared clusters.
This motivates \emph{$r$-point local optimality}, which excludes improvements from simultaneously reassigning at most $r$ samples and ranges from D-locality at $r=1$ to global optimality at $r=n$. We focus on $r=2$, the first level beyond one-point relocation.

Exact certification of two-point local optimality appears expensive because direct enumeration over sample pairs and their destination clusters yields $\binom{n}{2}(k-1)^2$ candidates.
Nevertheless, we establish that, at a D-local optimum, every improving two-point move must involve a cluster shared by both sample reassignments, reducing all potentially improving moves to six directed configurations.
To further reduce sample-pair enumeration, we construct, for each sample, an upper bound on the gains of connected two-point moves involving it, following the safe-screening principle \citep{elghaoui2012safe,fercoq2015mind}.
A sample with a non-positive upper bound cannot participate in any improving pair and can therefore be safely discarded.
After initial Lloyd convergence, alternating Min-D-LO one-point descent with exact search over the retained samples yields \emph{Boundary-Point-Screened Two-Point Local Search} (BPS-2PLS), which terminates at a two-point local optimum.
With $m$ retained candidates and lazily computed pair distances, screening and search cost $\mathcal O(k^2d+nk\log n+m^2(d+k))$ after each one-point descent phase, without a full sample-pair cache.

The benefit of screening depends on how $m$ grows with $n$.
We localize retained samples near empirical cluster boundaries and prove $m=\mathcal O_{\mathbb P}(\log n)$ for fixed $k,d$ under bounded-support sampling with bounded density or Gaussian-mixture sampling.
The bounds hold uniformly over D-local states with nonvanishing cluster occupancy.
Complementing this theoretical analysis, a six-dataset subsampling experiment shows mean pruning rates above $97\%$ at the largest tested sizes.
Together, these results show that exact two-point local optimality can be attained efficiently on the evaluated problems.

Overall, our contributions are summarized as follows:
\begin{itemize}
    \item We introduce a hierarchy from D-local to global optimality and characterize improving moves through connected cluster-transfer graphs.
    \item We develop BPS-2PLS, combining exact two-point search with safe sample screening to certify two-point local optimality.
    \item We establish uniform logarithmic candidate-size bounds $m=\mathcal{O}_{\mathbb{P}}(\log n)$ for two common distribution classes and a nonvanishing cluster-occupancy condition.
\end{itemize}

\section{Related Work}

Sample-reassignment methods include Hartigan-type one-sample relocations \citep{hartigan1979algorithm,telgarsky2010hartigan} and D-LO/Min-D-LO, which exclude improving one-point moves at termination \citep{li2025modified}.
Beyond single-sample relocations, \citet{tarsitano2003computational} studied general two-sample reassignments under the Friedman--Rubin determinant criterion by scanning sample pairs and their destination clusters.
In contrast to \citet{tarsitano2003computational}, we study the standard WCSS objective through a point-cardinality local-optimality hierarchy, with exact search and safe screening for its complete two-point neighborhood.

Center- or cluster-based neighborhoods include J-Means centroid relocation \citep{hansen2001jmeans}, merge-and-split repartitioning \citep{nielsen2014further}, and center swaps with approximation guarantees \citep{arya2004local,kanungo2004local}. These operations may change many labels, whereas our neighborhoods are defined by the number of reassigned samples.

A complementary line of work reduces computation without changing the output of a prescribed algorithm.
For example, exact Lloyd accelerations use triangle inequalities and distance bounds to reduce evaluations \citep{elkan2003triangle,hamerly2010making,newling2016fast}.
Safe screening in sparse optimization uses certificates to discard inactive variables without changing the target solution \citep{elghaoui2012safe,fercoq2015mind}.
BPS-2PLS combines these two goals: it goes beyond single-sample moves, uses safe screening to preserve every improving two-point move, and certifies two-point local optimality at termination.

\section{Problem Setup and Preliminaries}
\label{sec:preliminaries}

Throughout this paper, deterministic vectors and matrices are bold italic, sets are blackboard bold, and scalar entries are italic. Random draws use upright letters. All unqualified vector norms are Euclidean.
Let $\sX=(\vx_i)_{i=1}^n$ be an indexed collection of samples in $\R^d$, and write $[n]=\{1,\ldots,n\}$ and $[k]=\{1,\ldots,k\}$.
A clustering is represented by a label vector $\vz\in[k]^n$, with cluster index sets $\sI_\ell(\vz)=\{i\in[n]:z_i=\ell\}$, $\ell\in[k]$.
We restrict attention to partitions with nonempty clusters and to reassignments whose resulting partitions also satisfy this condition.
For a fixed state $\vz$, we suppress the dependence on $\vz$ in the cluster statistics and write $n_\ell=|\sI_\ell|, \vs_\ell=\sum_{i\in\sI_\ell}\vx_i, \vmu_\ell=\vs_\ell / n_\ell$.
The $k$-means objective, also known as the within-cluster sum of squares (WCSS), is
\begin{equation}
\textstyle
F(\vz)
=
\sum_{\ell=1}^k\sum_{i\in\sI_\ell}\lVert\vx_i-\vmu_\ell\rVert^2
=
\sum_{i=1}^n\lVert\vx_i\rVert^2
-
\sum_{\ell=1}^k\frac{\lVert\vs_\ell\rVert^2}{n_\ell}.
\label{eq:kmeans-objective}
\end{equation}
A reassignment from $\vz$ to $\vz'$ is improving precisely when $F(\vz)-F(\vz')>0$.
For later use, define the squared sample--centroid and sample--sample distances
\begin{equation}
D_{i\ell}=\lVert\vx_i-\vmu_\ell\rVert^2, \qquad p_{ij}=\lVert\vx_i-\vx_j\rVert^2.
\label{eq:distance-notation}
\end{equation}
Lloyd's algorithm alternates between assigning each sample to a nearest centroid and recomputing each centroid as the mean of its assigned samples \citep{lloyd1982least}.
At convergence, the resulting clustering satisfies
\begin{equation}
D_{i,z_i}\leq D_{ib},
\qquad
\forall i\in[n],\quad \forall b\in[k].
\label{eq:lloyd-fixed-point}
\end{equation}
This gives \emph{partial optimality}: assignments and centroids are each optimal with the other fixed \citep{li2025modified}.
However, partial optimality does not account for the centroid changes caused by reassigning a sample and therefore need not imply local optimality of the partition.

To account for these centroid changes, consider a one-point move $u=(i:s\to a)$, with $s=z_i$, $a\ne s$, and $n_s>1$. Its exact gain after recomputing the two affected centroids is
\begin{equation}
g_i(a)=\frac{n_s}{n_s-1}D_{is}-\frac{n_a}{n_a+1}D_{ia}.
\label{eq:one-point-gain}
\end{equation}
The first term is the reduction in WCSS obtained by removing $\vx_i$ from cluster $s$, whereas the second term is the increase caused by inserting it into cluster $a$.
Following \citet{li2025modified}, this gives the following notion of local optimality.
\begin{definition}[D-local optimality]
\label{def:d-local}
A clustering $\vz$ is \emph{D-locally optimal} if $g_i(a)\le0$ for every feasible one-point move $(i:z_i\to a)$, with the affected centroids recomputed.
\end{definition}
Every D-locally optimal clustering satisfies the Lloyd fixed-point condition, whereas the converse need not hold.
However, jointly reassigning multiple samples may still improve WCSS, motivating the point-cardinality hierarchy in the next section.

\section{\texorpdfstring{$r$}{r}-Point Local Optimality}
\label{sec:r-point-locality}

We extend D-local optimality to joint label changes and identify the cluster interactions that can produce improvements.
Let $\sZ_k=\{\vz\in[k]^n:\sI_a(\vz)\neq\varnothing, \forall a\in[k]\}$ be the feasible label space.
For $\vz\in\sZ_k$, a \emph{simultaneous $q$-point reassignment} is an unordered collection of $q$ sample reassignments
\begin{equation}
\sM=\{(i_t:a_t\to b_t)\}_{t=1}^q, \qquad a_t=z_{i_t},\quad b_t\in[k]\setminus\{a_t\},
\label{eq:simultaneous-move}
\end{equation}
where $i_1,\ldots,i_q$ are distinct.
The resulting vector $\vz^{\sM}$ changes each $z_{i_t}$ to $b_t$ and leaves all other entries unchanged.
Feasibility requires the final simultaneous reassignment to satisfy $\vz^{\sM}\in\sZ_k$; all affected centroids are then recomputed.

For $\vz,\vz'\in\sZ_k$, let $d_{\mathrm H}(\vz,\vz')=|\{i\in[n]:z_i'\neq z_i\}|$ denote the differing labels, and define the exact-$q$ shell and radius-$r$ point-reassignment neighborhood by
\begin{equation}
\textstyle
\sN_{=q}(\vz)=\{\vz'\in\sZ_k:d_{\mathrm H}(\vz,\vz')=q\},
\qquad
\sN_{\le r}(\vz)=\bigcup_{q=1}^r\sN_{=q}(\vz).
\label{eq:r-point-neighborhood}
\end{equation}
For a feasible move $\sM$, define its gain by $G(\sM;\vz)=F(\vz)-F(\vz^{\sM})$.

\begin{definition}[$r$-point local optimality] \label{def:r-point-locality}
A clustering $\vz\in\sZ_k$ is \emph{$r$-point locally optimal} if
\begin{equation}
F(\vz)\leq F(\vz'), \qquad \text{for every }\vz'\in\sN_{\le r}(\vz).
\label{eq:r-point-locality}
\end{equation}
\end{definition}

\begin{proposition}[Hierarchy] \label{prop:r-hierarchy}
Let $\sL_r$ denote the set of all $r$-point local optima. For $1\leq s<r\leq n$,
\begin{equation}
\sN_{\le s}(\vz)\subsetneq\sN_{\le r}(\vz),
\qquad
\sL_r\subseteq\sL_s.
\label{eq:hierarchy-inclusions}
\end{equation}
Moreover, $1$-point local optimality coincides with D-local optimality, whereas $n$-point local optimality coincides with global optimality over $\sZ_k$.
\end{proposition}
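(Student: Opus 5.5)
The plan is to prove the three claims in increasing order of difficulty: the inclusion $\sL_r\subseteq\sL_s$, then the two identifications, and finally the strictness of $\sN_{\le s}(\vz)\subsetneq\sN_{\le r}(\vz)$, which needs an explicit construction. Throughout I assume $k\ge 2$ and $n\ge k$, so that $\sZ_k\neq\varnothing$ and reassignments exist.

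\textbf{Inclusions and the two identifications.} The weak inclusion $\sN_{\le s}(\vz)\subseteq\sN_{\le r}(\vz)$ is immediate from the union definition in \eqref{eq:r-point-neighborhood}. Consequently, if $F(\vz)\le F(\vz')$ for all $\vz'\in\sN_{\le r}(\vz)$, the same holds on the smaller set, which gives $\sL_r\subseteq\sL_s$.

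For $r=1$, note that $\sN_{\le1}(\vz)=\sN_{=1}(\vz)$ is exactly the set of outcomes of one-point moves $(i:z_i\to a)$ whose result stays in $\sZ_k$, i.e.\ with $n_{z_i}>1$. By the exact gain formula \eqref{eq:one-point-gain}, $F(\vz)-F(\vz')=g_i(a)$, so \eqref{eq:r-point-locality} with $r=1$ is literally Definition~\ref{def:d-local}.

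For $r=n$, any $\vz'\in\sZ_k\setminus\{\vz\}$ has $1\le d_{\mathrm H}(\vz,\vz')\le n$. Hence $\sN_{\le n}(\vz)=\sZ_k\setminus\{\vz\}$, and $n$-point local optimality says $F(\vz)\le F(\vz')$ for all feasible $\vz'$, which is global optimality.

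\textbf{Strictness.} It suffices to show $\sN_{=r}(\vz)\neq\varnothing$ for every $2\le r\le n$. Such elements lie in $\sN_{\le r}(\vz)$ but not in $\sN_{\le s}(\vz)$ for $s<r$, since every element of $\sN_{\le s}(\vz)$ differs from $\vz$ in at most $s$ labels. The case $r=1$ never arises because $s<r$.

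The construction starts by fixing one representative index $i_\ell\in\sI_\ell$ for each $\ell\in[k]$, giving a set $\sS$ of $k$ representatives; the remaining $n-k$ indices are non-representatives. There are three cases.
\begin{itemize}
\item If $r\le n-k$, relabel $r$ non-representatives to arbitrary different clusters. This is possible since $k\ge2$, and every cluster keeps its representative.
\item If $r>n-k$, set $c=r-(n-k)\ge1$. If $c\ge2$, relabel all non-representatives arbitrarily. Then apply a cyclic shift of labels among $c$ of the representatives: each of these $c$ points changes label, and the set of their labels is preserved, so all $k$ clusters stay nonempty.
\item If $c=1$, then $n-k=r-1\ge1$. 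Instead relabel $n-k-1$ non-representatives and swap the labels of two representatives.
\end{itemize}
In every case the resulting $\vz'$ lies in $\sZ_k$ and $d_{\mathrm H}(\vz,\vz')=r$.

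I expect this feasibility bookkeeping to be the only real obstacle, especially the boundary case $c=1$ and the degenerate case $n=k$. When $n=k$, $\sN_{=1}(\vz)=\varnothing$, but the representative-permutation construction still covers every $r\ge2$.
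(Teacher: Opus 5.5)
Your proof is correct. There is nothing to compare it against, because the paper states Proposition~\ref{prop:r-hierarchy} without proof: Appendix~\ref{app:rpoint-proofs} proves only the gain decomposition, componentwise additivity, and Theorem~\ref{thm:connected-r-move}.

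Several parts of your argument are the definitional arguments the statement tacitly relies on:
\begin{itemize}
\item the weak inclusion $\sN_{\le s}(\vz)\subseteq\sN_{\le r}(\vz)$;
\item the monotonicity $\sL_r\subseteq\sL_s$;
\item the two identifications, via $F(\vz)-F(\vz')=g_i(a)$ on $\sN_{=1}(\vz)$ and $\sN_{\le n}(\vz)=\sZ_k\setminus\{\vz\}$.
\end{itemize}

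What your proposal adds is a justification of strictness, which the paper never supplies. Your representative-based construction correctly produces an element of $\sN_{=r}(\vz)$ for every $2\le r\le n$. It relabels non-representatives, then cyclically shifts $c\ge2$ representatives, or swaps two representatives when $c=1$. This includes the degenerate case $n=k$, where $\sN_{=1}(\vz)=\varnothing$ and only permutations of representatives are available.

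Your standing assumption $k\ge2$ is not merely a convenience; it is a hypothesis the proposition silently needs. For $k=1$, $\sZ_1$ contains a single labeling, every neighborhood is empty, and $\sN_{\le s}(\vz)\subsetneq\sN_{\le r}(\vz)$ fails, although the other claims hold trivially.
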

The objective is a sum of cluster contributions, so reassignments can interact only through shared clusters.
This motivates the following graph representation.

\begin{definition}[Cluster-transfer graph]
\label{def:transfer-graph}
For a simultaneous move $\sM$, its \emph{cluster-transfer graph} $\gG(\sM)$ is the directed multigraph whose vertex set $\sV(\sM)$ consists of the affected clusters.
For each reassignment, the graph contains one edge $a_t\to b_t$, labeled by $i_t$.
\end{definition}

Distinct components affect disjoint cluster statistics, so their gains are additive, yielding the following characterization.
\begin{theorem}[Connected-move characterization]
\label{thm:connected-r-move}
A clustering $\vz$ is $r$-point locally optimal if and only if every feasible simultaneous move $\sM$ with $1\leq|\sM|\leq r$ whose transfer graph $\gG(\sM)$ is weakly connected is non-improving.
Consequently, if $\vz$ is already $(r-1)$-point locally optimal, then every improving exact-$r$ move must have a weakly connected transfer graph.
\end{theorem}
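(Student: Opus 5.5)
The plan is to decompose an arbitrary feasible move into the sub-moves carried by the weakly connected components of its transfer graph, and to show two things: each such sub-move is itself feasible, and the gains add up exactly. The equivalence and its consequence then follow by a short counting argument.

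\textbf{Step 1: neighborhoods and moves correspond.} Each $\vz'\in\sN_{\le r}(\vz)$ is $\vz^{\sM}$ for the unique move $\sM=\{(i:z_i\to z'_i): z'_i\ne z_i\}$, and $|\sM|=d_{\mathrm H}(\vz,\vz')\le r$. Conversely, every feasible $\sM$ with $|\sM|\le r$ produces such a $\vz'$. Hence $\vz$ is $r$-point locally optimal if and only if $G(\sM;\vz)\le 0$ for every feasible $\sM$ with $1\le|\sM|\le r$. The ``only if'' direction of the theorem is now immediate, because connected moves are a subset of all moves.

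\textbf{Step 2: decomposition.} Let $\sM$ be feasible, and let $\gG(\sM)$ have weakly connected components $C_1,\dots,C_p$ with vertex sets $\sV_1,\dots,\sV_p$. These vertex sets are pairwise disjoint. Let $\sM_c$ be the set of reassignments whose edges lie in $C_c$. Then $\sM=\bigsqcup_c\sM_c$, $|\sM_c|\ge1$, and each $\gG(\sM_c)=C_c$ is weakly connected.

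\textbf{Step 3: feasibility of the sub-moves (the main point to check).} Take a cluster $\ell\in\sV_c$. Every reassignment that touches $\ell$, as source or destination, is an edge incident to $\ell$ and so belongs to $\sM_c$. Therefore the members of cluster $\ell$ under $\vz^{\sM_c}$ coincide with its members under $\vz^{\sM}$, which are nonempty by feasibility of $\sM$. Clusters outside $\sV_c$ are untouched by $\sM_c$ and remain nonempty. So each $\sM_c$ is feasible.

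\textbf{Step 4: additivity of gains.} By \eqref{eq:kmeans-objective},
\[
G(\sM;\vz)=\sum_{\ell}\Bigl(\tfrac{\lVert\vs'_\ell\rVert^2}{n'_\ell}-\tfrac{\lVert\vs_\ell\rVert^2}{n_\ell}\Bigr),
\]
where primes denote statistics after the move, and only clusters in $\bigcup_c\sV_c$ contribute. By the observation in Step 3, the post-move statistics $(n'_\ell,\vs'_\ell)$ for $\ell\in\sV_c$ are the same under $\sM$ and under $\sM_c$. Grouping the terms by component gives $G(\sM;\vz)=\sum_c G(\sM_c;\vz)$.

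\textbf{Step 5: conclusion.} For the ``if'' direction, suppose every connected feasible move of size at most $r$ is non-improving. Any feasible $\sM$ with $|\sM|\le r$ has $G(\sM;\vz)=\sum_c G(\sM_c;\vz)\le 0$, since each $\sM_c$ is connected, feasible, and satisfies $|\sM_c|\le|\sM|\le r$. Step 1 then gives $r$-point local optimality.

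For the consequence, suppose $\vz$ is $(r-1)$-point locally optimal and $\sM$ is an improving exact-$r$ move. If $\gG(\sM)$ were disconnected, then $p\ge2$ and every $|\sM_c|\le r-1$. Since $\sum_c G(\sM_c;\vz)>0$, some $\sM_c$ would be a feasible improving move of size at most $r-1$. This contradicts $(r-1)$-point local optimality, so $\gG(\sM)$ must be weakly connected.

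The only delicate step is Step 3: the standing nonempty-cluster restriction must be inherited by each component, which holds because each affected cluster lies in exactly one component.
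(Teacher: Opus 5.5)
Your proposal is correct and follows the paper's proof essentially step for step. The paper proves the same componentwise additivity: each component move is feasible because it induces the same $\Delta n_\ell$ and $\Delta\vs_\ell$ on its own clusters, and the exact gain splits over the disjoint component vertex sets. It then derives both directions of the equivalence and the exact-$r$ consequence exactly as you do; your Step 1 only makes explicit the move--neighborhood correspondence that the paper uses implicitly.
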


The gain decomposition and proofs are given in Appendix~\ref{app:rpoint-proofs}.
A connected graph with $q$ edges has at most $q+1$ vertices, so every connected $q$-point move affects at most $q+1$ clusters.

Connectedness alone need not reduce the worst-case order for general $r$ (Appendix~\ref{app:connected-shell-count}).
For $r=2$, every improving move at a D-local state affects at most three clusters, yielding six configurations whose destination choices admit exact optimization.
Given $\mD$ and the required $p_{ij}$, destination search costs $\mathcal O(n^2k)$ per pass, without requiring cluster balance.

\section{Exact Two-Point Local Search} \label{sec:exact-two-point-search}

At a D-local state $\vz$, enumerating sample pairs $i<j$ and their destination labels yields $\binom{n}{2}(k-1)^2$ candidates. By Theorem~\ref{thm:connected-r-move}, every improving pair $u=(i:s\to a)$, $v=(j:z_j\to b)$ is connected: $\{s,a\}\cap\{z_j,b\}\neq\varnothing$, where $s=z_i$.

Fixing $u=(i:s\to a)$ as the anchor gives the six partner configurations in Table~\ref{tab:two-point-configurations}, indexed by $\tau$. Write $G(u,v):=G(\{u,v\};\vz)$. Let $\sJ_\tau(u)$ and $\sD_\tau(u,j)$ be the feasible partner-index and destination sets, and $\sF(u)=\{\tau:\sJ_\tau(u)\ne\varnothing\}$.

\begin{proposition}[Exact WCSS changes under cluster updates]
\label{prop:cluster-update-expressions}
For feasible cluster updates, the exact WCSS changes after centroid recomputation can be expressed using pre-update sizes and distances as follows, with $i\ne j$ for two-sample updates:
\begin{align}
    R_\ell(i) &= \tfrac{n_\ell}{n_\ell - 1} D_{i \ell}, & \text{gain from removing } \vx_i \text{ from cluster } \ell, \\
    A_\ell(i) &= \tfrac{n_\ell}{n_\ell + 1} D_{i \ell}, & \text{cost of inserting } \vx_i \text{ into cluster } \ell, \\
    R_\ell(i, j) &= \tfrac{n_\ell (D_{i \ell} + D_{j \ell}) - p_{ij}}{n_\ell - 2}, & \text{gain from removing } \vx_i \text{ and } \vx_j \text{ from cluster } \ell, \\
    A_\ell(i, j) &= \tfrac{n_\ell (D_{i \ell} + D_{j \ell}) + p_{ij}}{n_\ell + 2}, & \text{cost of inserting } \vx_i \text{ and } \vx_j \text{ into cluster } \ell, \\
    T_\ell(i, j) &= D_{i \ell} - D_{j \ell} + \tfrac{p_{ij}}{n_\ell}, & \text{gain from replacing } \vx_i \text{ with } \vx_j \text{ in cluster } \ell.
\end{align}
\end{proposition}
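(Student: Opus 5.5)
The approach is to use the sum-of-squares form of~\eqref{eq:kmeans-objective}. Cluster $\ell$ contributes $C_\ell=\sum_{i\in\sI_\ell}\lVert\vx_i\rVert^2-\lVert\vs_\ell\rVert^2/n_\ell$. Every update in the statement alters a single cluster's membership, so each quantity equals $C_\ell^{\text{old}}-C_\ell^{\text{new}}$ (for gains) or its negative (for costs). The squared-norm terms of the moved samples cancel or appear explicitly. What remains is to rewrite $\lVert\vs_\ell'\rVert^2/n_\ell'$ using $\vs_\ell=n_\ell\vmu_\ell$ together with the identity
\[
\lVert\vx_i-\vmu_\ell\rVert^2=\lVert\vx_i\rVert^2-2\vx_i^\top\vmu_\ell+\lVert\vmu_\ell\rVert^2 .
\]

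I would first treat the general case. Remove a multiset of $q$ samples with sum $\vy$ and add $q'$ samples with sum $\vy'$, so the new size is $n'=n_\ell-q+q'$ and the new sum is $\vs'=\vs_\ell-\vy+\vy'$. Writing each moved sample as $\vx_t=\vmu_\ell+\vv_t$, the change in $C_\ell$ becomes a quadratic form in the $\vv_t$. The $\vmu_\ell$ terms cancel identically, because $\vs'=n'\vmu_\ell+(\text{sum of signed }\vv_t)$. This gives
\[
\Delta C_\ell=\pm\sum_t\lVert\vv_t\rVert^2-\frac{\lVert\sum_t\epsilon_t\vv_t\rVert^2}{n'},
\]
with $\epsilon_t=+1$ for insertions and $-1$ for removals. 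Each specific case then follows by inserting $\lVert\vv_t\rVert^2=D_{t\ell}$ and the cross term $2\vv_i^\top\vv_j=D_{i\ell}+D_{j\ell}-p_{ij}$.

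The cases then reduce to short computations.
\begin{itemize}
\item For single removal ($n'=n_\ell-1$), the gain is $D_{i\ell}+D_{i\ell}/(n_\ell-1)=\frac{n_\ell}{n_\ell-1}D_{i\ell}$. The insertion cost follows the same way with $n'=n_\ell+1$.
\item For pair removal, the gain is
\[
D_{i\ell}+D_{j\ell}+\frac{D_{i\ell}+D_{j\ell}+2\vv_i^\top\vv_j}{n_\ell-2}.
\]
Substituting the cross term and collecting over $n_\ell-2$ yields $\frac{n_\ell(D_{i\ell}+D_{j\ell})-p_{ij}}{n_\ell-2}$. The pair insertion cost is symmetric, with $+p_{ij}$ and denominator $n_\ell+2$.
\item For replacement ($i$ out, $j$ in, $n'=n_\ell$), the signed sum is $\vv_j-\vv_i=\vx_j-\vx_i$. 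The gain is therefore $D_{i\ell}-D_{j\ell}+p_{ij}/n_\ell$.
\end{itemize}

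Feasibility conditions ($n_\ell>1$, $n_\ell>2$, and $i\ne j$) keep the denominators positive. They also make the case where $j$ is already in $\ell$ a genuine replacement.

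The only delicate step is the sign and orientation convention, that is, gain versus cost and whether $D$ is measured to the pre-update centroid. I would fix this once in the general lemma, so that all five formulas are read off it without separate recomputation.
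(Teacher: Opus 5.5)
Your proposal is correct and follows essentially the paper's route: the paper just substitutes the updated sufficient statistics into $\operatorname{WCSS}(\sI_\ell)=\sum_{i\in\sI_\ell}\lVert\vx_i\rVert^2-\lVert\vs_\ell\rVert^2/n_\ell$, and your centered identity $\Delta C_\ell=\sum_t\epsilon_t\lVert\vv_t\rVert^2-\lVert\sum_t\epsilon_t\vv_t\rVert^2/n'$ carries out that substitution once for all five cases. One small fix: the first term should have $\epsilon_t$ inside the sum, not a single global $\pm$, because the replacement case mixes signs; your replacement computation already uses it this way.
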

The cluster contributions sum to the six gains in Table~\ref{tab:two-point-configurations} (proof in Appendix~\ref{app:pd2pls-details}).
For example, a forward chain removes $\vx_i$ from cluster $s$, replaces $\vx_j$ with $\vx_i$ in cluster $a$, and inserts $\vx_j$ into cluster $b$, giving $G(u,v)=R_s(i)+T_a(j,i)-A_b(j)$.

\begin{table}[t]
\centering
\footnotesize
\setlength{\tabcolsep}{3pt}
\caption{Search spaces for the partner $v=(j:z_j\to b)$ and exact gains for fixed $u=(i:s\to a)$.}
\label{tab:two-point-configurations}
\begin{tabular}{@{}cccc@{}}
\toprule
Configuration $\tau$ & Partner constraint & \makecell{Destination and\\feasibility} & Exact gain $G(u,v)$ \\
\midrule
Same source, same destination
& $j\in \sI_s\setminus\{i\}$
& $b=a$, $n_s>2$
& $R_s(i,j)-A_a(i,j)$ \\
\addlinespace
Same source, different destinations
& $j\in \sI_s\setminus\{i\}$
& $b\notin\{s,a\}$, $n_s>2$
& $R_s(i,j)-A_a(i)-A_{b}(j)$ \\
\midrule
Forward chain
& $j\in \sI_a$
& $b\notin\{s,a\}$, $n_s>1$
& $R_s(i)+T_a(j,i)-A_{b}(j)$ \\
\addlinespace
Swap
& $j\in \sI_a$
& $b=s$
& $T_s(i,j)+T_a(j,i)$ \\
\midrule
Different sources, same destination
& $j\notin \sI_s\cup \sI_a$
& {\raggedright $b=a$, $n_s>1$, $n_{z_j}>1$\par}
& $R_s(i)+R_{z_j}(j)-A_a(i,j)$ \\
\addlinespace
Backward chain
& $j\notin \sI_s\cup \sI_a$
& $b=s$, $n_{z_j}>1$
& $R_{z_j}(j)+T_s(i,j)-A_a(i)$ \\
\bottomrule
\end{tabular}
\end{table}

Once the required entries of $\mD$ and $p_{ij}$ are available, each gain takes $\mathcal O(1)$ time to evaluate, but unbalanced cluster sizes can still leave $\Theta(n^2k^2)$ candidates (Proposition~\ref{prop:connected-shell-size}). Only the \emph{same source, different destinations} and \emph{forward chain} configurations require optimizing a second destination $b$.

For a fixed anchor $u=(i:s\to a)$ and partner sample $j$, both gains depend on $b$ solely through $-A_b(j)$.
When the destination domain is nonempty, choose any minimizer
\[
b^\star(j;s,a)\in\argmin_{b\in[k]\setminus\{s,a\}}A_b(j).
\]
For each sample $j$, store an ordered \emph{Top-3 all-cluster list} $\sK_j$ of the $\min\{3,k\}$ smallest scores $A_b(j)$, ranking all labels including the current source.
The scores and all lists can be constructed from $\mD$ in $\mathcal O(nk)$ time.
Excluding at most two labels leaves a minimizing feasible label in $\sK_j\setminus\{s,a\}$, enabling constant-time destination queries and $\mathcal O(n^2k)$ anchor--partner evaluations per pass.

$\mathrm{ExactPairSearch}(\vz,\sI)$ restricts both moved indices to $\sI\subseteq[n]$ while using the full clustering statistics and destination domains. It returns the best positive gain and its move, or $(0,\varnothing)$. After initial Lloyd convergence, Pair-Driven Two-Point Local Search (PD-2PLS) alternates Min-D-LO one-point descent with $\mathrm{ExactPairSearch}(\vz,[n])$ (Appendix~\ref{app:two-point-algorithms}). Here the descent phase repeatedly accepts the feasible one-point move with the largest positive gain until D-locality is reached.

Throughout one- and two-point descent, $\{n_\ell,\vmu_\ell\}$ and $\mD$ are maintained across moves.
A one-point move changes two clusters, and a connected two-point move changes at most three.
The affected cluster counts and means are updated from the moved samples, and only the corresponding columns of $\mD$ are refreshed, costing $\mathcal O(nd)$ per accepted move.
A Min-D-LO gain scan costs $\mathcal O(nk)$, so one scan and its possible update cost $\mathcal O(nk+nd)$; no full $\mathcal O(ndk)$ distance recomputation is needed.

\section{Safe Boundary-Point Screening}
\label{sec:bps}

At a D-local state, anchored search still considers up to $n(n-1)$ ordered sample pairs.
Motivated by the boundary-point intuition, Boundary-Point Screening (BPS) constructs an upper bound $U_i$ on every feasible connected two-point gain involving $i$ and discards $i$ if $U_i\le0$.

Fix an anchor $u=(i:s\to a)$ with $s=z_i$.
For configuration $\tau$, the envelope key $e=e(\tau,u)$ in Table~\ref{tab:bps-envelope-families} records the cluster labels needed by the partner terms and the pivot; anchors with the same key share an envelope.

To construct this bound, we first remove the dependence on the partner destination $b$:
\begin{equation}
G(u,v)\le C_\tau(u)+Q_e(j)+\eta_e p_{ij}.
\label{eq:bps-separable-gain}
\end{equation}
Here $C_\tau(u)$ collects the anchor terms and $Q_e(j)$ the partner terms; their coefficients are listed in Table~\ref{tab:bps-separable-gains} in Appendix~\ref{app:bps-details}. Equality holds when $b$ is prescribed or chosen as $b^\star(j;s,a)$.

Next, introduce a pivot $\vo_e\in\R^d$ and write $\rho_e(h)=\lVert\vx_h-\vo_e\rVert$ for each sample $h$.
The triangle and reverse triangle inequalities, accounting for the sign of $\eta_e$, yield
\begin{equation}
\eta_e p_{ij}\le
\eta_e\rho_e(i)^2+2|\eta_e|\rho_e(i)\rho_e(j)+\eta_e\rho_e(j)^2.
\label{eq:bps-pivot-bound}
\end{equation}
Substituting this inequality into~\eqref{eq:bps-separable-gain}, we define the partner line and the resulting bound by
\begin{align}
L_{e,j}(\rho)&=2|\eta_e|\rho_e(j)\rho+Q_e(j)+\eta_e\rho_e(j)^2,\notag\\
G(u,v)&\le C_\tau(u)+\eta_e\rho_e(i)^2+L_{e,j}(\rho_e(i)).
\label{eq:bps-partner-line}
\end{align}
For a fixed key $e$, each partner $j$ defines an affine function of the scalar query radius $\rho$.

Finally, we maximize over the shared partner superset $\widetilde{\sJ}_e\supseteq\sJ_\tau(u)$ in Table~\ref{tab:bps-envelope-families} to obtain a bound depending only on the anchor and configuration:
\begin{equation}
U_\tau(u):=C_\tau(u)+\eta_e\rho_e(i)^2+H_e(\rho_e(i))\ge G(u,v), \quad \text{where} \quad H_e(\rho)=\max_{j\in\widetilde{\sJ}_e}L_{e,j}(\rho).
\label{eq:bps-type-upper-bound}
\end{equation}
The remaining computational issues are:
(i) How can $H_e$ be constructed and queried efficiently?
(ii) How can it be reused across anchors?
(iii) How should its pivot $\vo_e$ be chosen?

A line upper envelope can be constructed in $\mathcal O(|\widetilde{\sJ}_e|\log|\widetilde{\sJ}_e|)$ time and queried in $\mathcal O(\log|\widetilde{\sJ}_e|)$ time by binary search on its breakpoints~\citep{deberg2008computational}; construction details are in Appendix~\ref{app:bps-details}.

For each feasible key in Table~\ref{tab:bps-envelope-families}, the coefficients and pivot are fixed, so one envelope serves all corresponding anchors.
There are at most $6(k-1)n$ partner lines in total, and each envelope contains at most $n$ lines, giving $\mathcal O(nk\log n)$ total construction time.

\begin{table}[t]
\centering
\small
\setlength{\tabcolsep}{3pt}
\caption{Reusable envelope families for $u=(i:s\to a)$, with $s\neq a$.}
\label{tab:bps-envelope-families}
\begin{tabular}{cccc}
\toprule
Configuration & Pivot $\vo_e$ & Key $e$ & Partner superset $\widetilde{\sJ}_e$ \\
\midrule
{\raggedright Same source (both configurations)\par}
& $\vmu_s$ & $(\tau,s,a)$ & $\sI_s$ \\
\addlinespace
Forward chain / swap
& $(\vmu_s + \vmu_a) / 2$ & $(\tau,s,a)$ & $\sI_a$ \\
\addlinespace
{\raggedright Different sources, same destination\par}
& $\vmu_a$ & $(\tau,a)$ & $\{j:z_j\neq a,\ n_{z_j}>1\}$ \\
\addlinespace
Backward chain
& $\vmu_s$ & $(\tau,s)$ & $\{j:z_j\neq s,\ n_{z_j}>1\}$ \\
\bottomrule
\end{tabular}
\end{table}

Accordingly, we aggregate the configuration bounds into the sample certificate
\begin{equation}
U_i=\max_{a\in[k]\setminus\{z_i\}}\ \max_{\tau\in\sF(u)}U_\tau(u),
\qquad u=(i:z_i\to a).
\label{eq:bps-sample-upper-bound}
\end{equation}
Given the line coefficients and query radii, constructing the envelopes and computing all certificates takes $\mathcal O(nk\log n)$ time.
We therefore retain the boundary candidate set $\sB=\{i\in[n]:U_i>0\}$ and let $m=|\sB|$.
At a D-local state, every improving two-point move is connected, so both of its samples have positive certificates and belong to $\sB$.
Therefore, $\mathrm{ExactPairSearch}(\vz,\sB)$ preserves the maximum positive gain while restricting distance evaluation and search to retained pairs.

Figure~\ref{fig:bps-pivots} illustrates the pivot choices. For fixed radii,~\eqref{eq:bps-pivot-bound} is tight for opposing directions when $\eta_e>0$ and aligned directions when $\eta_e<0$.
The radii can be evaluated in constant time from $\mD$ and the squared centroid distance $\Delta_{s, a}=\lVert\vmu_s-\vmu_a\rVert^2$, computed in $\mathcal O(k^2d)$ time (Appendix~\ref{app:bps-details}).

\begin{figure}[t]
\centering
\setlength{\parskip}{0pt}
\setlength{\abovecaptionskip}{4pt}
\definecolor{bpsInk}{HTML}{27333E}
\definecolor{bpsSource}{HTML}{356DA8}
\definecolor{bpsTarget}{HTML}{2F7D4B}
\definecolor{bpsOther}{HTML}{8A549E}
\definecolor{bpsPivot}{HTML}{BD7920}
\newcommand{\bpsFigCentre}[1]{%
  \draw[bpsInk!65,line width=.5pt]
    ($(#1)+(-.05,0)$)--($(#1)+(.05,0)$)
    ($(#1)+(0,-.05)$)--($(#1)+(0,.05)$);}
\begin{tikzpicture}[
  x=1cm,y=1cm,text=bpsInk,
  font=\fontsize{8}{9.5}\selectfont,
  source cloud/.style={draw=bpsSource!70,fill=bpsSource!18,line width=.5pt},
  target cloud/.style={draw=bpsTarget!70,fill=bpsTarget!18,line width=.5pt},
  other cloud/.style={draw=bpsOther!70,fill=bpsOther!18,line width=.5pt},
  sample/.style={circle,fill=bpsInk,inner sep=0pt,minimum size=3pt},
  pivot/.style={diamond,fill=bpsPivot,draw=white,line width=.25pt, inner sep=0pt,minimum size=5pt},
  radius/.style={draw=bpsPivot,line width=.8pt},
  pair distance/.style={draw=bpsInk!80,densely dashed,line width=.65pt},
  heading/.style={font=\sffamily\fontsize{8}{9.5}\selectfont, anchor=north,align=center,text width=3.25cm},
  every label/.style={inner sep=1.3pt},
  label distance=1pt
]
\begin{scope}
  \node[heading] at (1.625,2.7) {\textbf{(a)} Same source};
  \coordinate (ao) at (1.55,1.36);
  \coordinate (ai) at (1.85,1.75);
  \coordinate (aj) at (2.65,1.5);
  \path[source cloud] (ao) ellipse [x radius=1.31,y radius=.68];
  \node[text=bpsSource] at (.7,1.5) {$\sI_s$};
  \draw[radius] (ai)--(ao)--(aj);
  \draw[pair distance] (ai)--(aj);
  \node[sample,label=left:{$\vx_i$}] at (ai) {};
  \node[sample,label=below:{$\vx_j$}] at (aj) {};
  \node[pivot, label=below left:{$\vo_e$}] at (ao) {};
  \node at (1.625,.18) {$\vo_e=\vmu_s, \; \eta_e < 0$};
\end{scope}

\begin{scope}[shift={(3.46,0)}]
  \node[heading] at (1.625,2.7) {\textbf{(b)} Forward chain / swap};
  \coordinate (bs) at (.61,1.35);
  \coordinate (ba) at (2.67,1.35);
  \coordinate (bo) at ($(bs)!.5!(ba)$);
  \coordinate (bi) at (.84,1.73);
  \coordinate (bj) at (2.42,1.73);
  \path[source cloud] (bs) ellipse [x radius=.54,y radius=.65];
  \path[target cloud] (ba) ellipse [x radius=.54,y radius=.65];
  \node[text=bpsSource] at (0.4,1) {$\sI_s$};
  \node[text=bpsTarget] at (2.8,1) {$\sI_a$};
  \draw[bpsInk!30,densely dotted,line width=.5pt] (bs)--(ba);
  \bpsFigCentre{bs}\bpsFigCentre{ba}
  \draw[radius] (bi)--(bo)--(bj);
  \draw[pair distance] (bi)--(bj);
  \node[sample,label=left:{$\vx_i$}] at (bi) {};
  \node[sample,label=right:{$\vx_j$}] at (bj) {};
  \node[pivot, label=below:{$\vo_e$}] at (bo) {};
  \node at (1.625,.18) {$\vo_e=(\vmu_s + \vmu_a) / 2, \; \eta_e > 0$};
\end{scope}

\begin{scope}[shift={(6.92,0)}]
  \node[heading] at (1.625,2.7) {\textbf{(c)} Same destination};
  \coordinate (co) at (2.52,1.37);
  \coordinate (ci) at (.6,1.8);
  \coordinate (cj) at (1.2,.9);
  \path[source cloud] (.54,1.82) ellipse [x radius=.47,y radius=.4];
  \path[other cloud] (1.03,.82) ellipse [x radius=.7,y radius=.4];
  \path[target cloud] (co) ellipse [x radius=.49,y radius=.59];
  \node[text=bpsSource] at (.26,1.8) {$\sI_s$};
  \node[text=bpsOther] at (.70,.85) {$\sI_{z_j}$};
  \node[text=bpsTarget] at (2.7,1.7) {$\sI_a$};
  \draw[radius] (ci)--(co)--(cj);
  \draw[pair distance] (ci)--(cj);
  \node[sample,label=above:{$\vx_i$}] at (ci) {};
  \node[sample,label=below:{$\vx_j$}] at (cj) {};
  \node[pivot, label=right:{$\vo_e$}] at (co) {};
  \node at (1.625,.18) {$\vo_e=\vmu_a, \; \eta_e < 0$};
\end{scope}

\begin{scope}[shift={(10.38,0)}]
  \node[heading] at (1.625,2.7) {\textbf{(d)} Backward chain};
  \coordinate (do) at (.70,1.36);
  \coordinate (di) at (1.0,1.07);
  \coordinate (dj) at (2.67,1.66);
  \path[source cloud] (do) ellipse [x radius=.62,y radius=.68];
  \path[other cloud] (2.79,1.65) ellipse [x radius=.45,y radius=.51];
  \node[text=bpsSource] at (.35,1.3) {$\sI_s$};
  \node[text=bpsOther] at (3.0,1.6) {$\sI_{z_j}$};
  \draw[radius] (di)--(do)--(dj);
  \draw[pair distance] (di)--(dj);
  \node[sample,label=left:{$\vx_i$}] at (di) {};
  \node[sample,label=above:{$\vx_j$}] at (dj) {};
  \node[pivot, label=above:{$\vo_e$}] at (do) {};
  \node at (1.625,.18) {$\vo_e=\vmu_s, \; \eta_e > 0$};
\end{scope}

\end{tikzpicture}
\caption{Schematic pivot choices for the six connected configurations: (a) both same-source configurations; (b) forward chains and swaps; (c) different sources with the same destination; and (d) backward chains. Shaded regions denote clusters.}
\label{fig:bps-pivots}
\end{figure}

BPS-2PLS uses the same initial Lloyd run and Min-D-LO descent, followed by screening and $\mathrm{ExactPairSearch}(\vz,\sB)$ (Algorithm~\ref{alg:bps2pls}); strict descent ensures finite termination at a two-point local optimum.
Screening does not require evaluating any $p_{ij}$.
Candidate distances are computed in blocks in $\mathcal O(m^2d)$ time and reused across destination choices, giving $\mathcal O(m^2(d+k))$ pair-search time.
For $T_L$ Lloyd iterations, $T_1$ one-point gain scans, and $T_2$ screening-and-search rounds, the total cost is
\begin{equation}
\mathcal O\!\left(T_Lndk+T_1(nd+nk)+T_2\bigl[nd+nk\log n+k^2d+m^2(d+k)\bigr]\right),
\label{eq:bps-total-cost}
\end{equation}
where $m$ is the maximum retained count over the $T_2$ rounds. The scan and round counts include unsuccessful termination checks; the $nd$ term in each round covers an accepted pair's local state update. Appendix~\ref{app:distance-policies} gives the bound with varying candidate counts and compares lazy evaluation with full caching.

\section{Candidate-Set Size Guarantees}
\label{sec:candidate-size}

We formalize the boundary-point intuition by locating retained samples where the squared distances to the nearest and second-nearest empirical centers are close, and then derive logarithmic candidate-size bounds under two sampling models.

For a D-local state $\vz$ with $k\ge2$, write $n_{\min}=\min_\ell n_\ell$ and $p_{\max}=\max_{i,j\in[n]}p_{ij}$, the minimum cluster size and the squared sample diameter.
Let $D_{(1)}(\vx;\vz)\le D_{(2)}(\vx;\vz)$ be the two smallest values of $\lVert\vx-\vmu_\ell(\vz)\rVert^2$, counting ties.
Suppressing $\vz$ when it is fixed, define the boundary region
\[
\textstyle
\sU(\vz)
:=\left\{\vx\in\R^d:
D_{(2)}(\vx)-D_{(1)}(\vx)\le\frac{25p_{\max}}{n_{\min}}\right\}.
\]
\begin{proposition}[Geometric localization of retained samples]
\label{prop:candidate-near-localization}
Let $\vz$ be a D-local state with $n_{\min}\ge4$.
For the exact-arithmetic BPS certificate defined in Section~\ref{sec:bps}, the candidate set $\sB(\vz)$  satisfies
\begin{equation}
\sB(\vz)
\subseteq\left\{i\in[n]:\vx_i\in\sU(\vz)\right\}.
\label{eq:candidate-near-localization}
\end{equation}
\end{proposition}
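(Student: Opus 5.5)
We argue by contraposition: fix $i$ with $D_{(2)}(\vx_i)-D_{(1)}(\vx_i)>25p_{\max}/n_{\min}$ and show $U_\tau(u)\le0$ for every $a\neq s=z_i$ and every $\tau\in\sF(u)$, so that $U_i\le0$ by~\eqref{eq:bps-sample-upper-bound}.

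\textbf{Step 1 (anchor gap).} D-locality implies the Lloyd fixed-point condition~\eqref{eq:lloyd-fixed-point}, so $D_{is}=D_{(1)}(\vx_i)$. Every $a\neq s$ then satisfies $D_{ia}\ge D_{(2)}(\vx_i)$, which gives
\[
D_{ia}-D_{is}>25p_{\max}/n_{\min}.
\]

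\textbf{Step 2 (a priori size bounds).} Every centroid $\vmu_\ell$ lies in the convex hull of the samples. Hence every $D_{h\ell}\le p_{\max}$. The same holds for every pivot $\vo_e$ in Table~\ref{tab:bps-envelope-families}, since each pivot is a centroid or a midpoint of two centroids, so $\rho_e(h)^2\le p_{\max}$ for all $h$. With $n_{\min}\ge4$ we also have the elementary estimates
\[
\tfrac{1}{n_\ell-1}\le\tfrac{4}{3n_{\min}},\qquad \tfrac{1}{n_\ell-2}\le\tfrac{2}{n_{\min}},\qquad \tfrac{n_\ell}{n_\ell-2}\le2.
\]
These are what turn every correction term into a multiple of $p_{\max}/n_{\min}$.

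\textbf{Step 3 (configuration-wise bound).} For each of the six configurations I will rewrite the right-hand side $C_\tau(u)+\eta_e\rho_e(i)^2+L_{e,j}(\rho_e(i))$ from~\eqref{eq:bps-partner-line}, for an arbitrary $j$ in the superset $\widetilde{\sJ}_e$, in a common form. The target is a sum of three pieces:
\begin{itemize}
\item an \emph{anchor part} $\approx D_{is}-D_{ia}$;
\item a \emph{partner part} $\approx D_{j,z_j}-D_{jb}$ or $D_{j,z_j}-D_{j\ell}$ for some other label $\ell$, which is $\le0$ by the Lloyd condition for $j$;
\item \emph{error terms}, each of the form (coefficient $\mathcal O(1/n_{\min})$)$\times$(a $D$, $\rho^2$, or $\rho\rho$ product), hence bounded by a constant times $p_{\max}/n_{\min}$ via Step~2.
\end{itemize}
Two examples show how this goes.
\begin{itemize}
\item \emph{Swap.} Writing $n_\ell/(n_\ell\pm1)=1\pm1/(n_\ell\pm1)$ and using $D_{ja}\le D_{js}$ gives
\[
T_s(i,j)+T_a(j,i)=D_{is}-D_{ia}+(D_{ja}-D_{js})+\eta_e p_{ij},\qquad \eta_e=\tfrac1{n_s}+\tfrac1{n_a}.
\]
The pivot bound~\eqref{eq:bps-pivot-bound} then contributes at most $4|\eta_e|p_{\max}\le 8p_{\max}/n_{\min}$.
\item \emph{Same source.} Here $\eta_e<0$ and the factor $n_s/(n_s-2)$ multiplies $D_{is}+D_{js}$. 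I will subtract $D_{is}-D_{ia}$ and the nonpositive quantity $D_{js}-D_{jb}$ (or $D_{js}-D_{ja}$) explicitly, and bound the remainders by $2p_{\max}/(n_s-2)$ and similar terms.
\end{itemize}
For the configurations with a free partner destination, I will use that $Q_e(j)$ is built from $\min_{b\notin\{s,a\}}A_b(j)$. That minimum is at least $\min_{b\ne z_j}D_{jb}-p_{\max}/(n_{\min}+1)\ge D_{j,z_j}-p_{\max}/(n_{\min}+1)$. Summing the constants should give a total error below $25p_{\max}/n_{\min}$, and then Step~1 yields $U_\tau(u)<0$.

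\textbf{Main obstacle.} The delicate step is the bookkeeping in Step~3. The exact coefficients in Table~\ref{tab:bps-separable-gains} matter there, and for some configurations $Q_e(j)$ is maximized over a superset (for example, all $j$ with $n_{z_j}>1$) rather than the feasible partners. One must therefore check that the partner part is still nonpositive for every $j\in\widetilde{\sJ}_e$, using only the Lloyd inequality for $j$ and never a feasibility constraint. I would handle this configuration by configuration, starting with the different-sources and backward-chain cases, where the partner's source is arbitrary. I would track constants crudely, with every $D$ and $\rho^2$ bounded by $p_{\max}$ and every $\rho\rho$ product bounded by $p_{\max}$, and check that the worst case (likely same source, where $\tfrac{n_s}{n_s-2}$ and $|\eta_e|=\tfrac{1}{n_s-2}+\tfrac{1}{n_a+2}$ enter) stays within the constant $25$.
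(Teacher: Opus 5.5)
Your proposal is correct, but it organizes the estimate differently from the paper.

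The paper proceeds in two steps. First it writes every gain as $g_i(a)+g_j(b)+I(u,v)$, with explicit interaction terms $I_\ell^{\pm}$ and $J_\ell$, and bounds the interaction plus radial slack by $22p_{\max}/n_{\min}$. It removes the partner term with D-locality ($g_j(b)\le0$), which forces it to check separately that the self-line and the extra partners in $\widetilde{\sJ}_e$ correspond to feasible one-point moves. This gives the intermediate inequality $U_i\le g_i^{\max}+22p_{\max}/n_{\min}$. Only then does it convert to a distance margin, via $0\le g_i(a)-(D_{is}-D_{ia})\le 3p_{\max}/n_{\min}$.

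You merge these two steps by expanding each line value directly around the fixed-center differences $D_{is}-D_{ia}$ and $D_{j,z_j}-D_{j\ell}$, and your bookkeeping closes. In the worst row (same source, same destination), the residuals $\frac{2}{n_s-2}(D_{is}+D_{js})+\frac{2}{n_a+2}(D_{ia}+D_{ja})$ give at most $12p_{\max}/n_{\min}$. The pivot terms give at most another $12p_{\max}/n_{\min}$ even with the crude slack bound $4|\eta_e|p_{\max}$; in fact they equal $\eta_e(\rho_e(i)-\rho_e(j))^2\le0$ when $\eta_e<0$. So the total is $24<25$, and the other rows give at most $18$.

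The obstacle you flag is also easier than you expect. Since $n_{\min}\ge4$ makes all one-point moves feasible, D-locality gives the Lloyd inequality for every sample. In every row of Table~\ref{tab:bps-separable-gains}, the partner part is $D_{j,z_j}$ minus a near-unit multiple of some $D_{j\ell}$. The supersets $\sI_s$ and $\sI_a$ fix $z_j$ in the first four rows, and the last two rows use $R_{z_j}(j)$. So no feasibility argument is needed for extra partners or for the self-line $j=i$.

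Each route buys something different. Yours uses D-locality only through Lloyd stationarity, so it actually yields the inclusion at any Lloyd fixed point with $n_{\min}\ge4$. The paper's route yields the interpretable statement that the certificate exceeds the best one-point gain by only $\mathcal O(p_{\max}/n_{\min})$, with the pairwise interactions isolated in Table~\ref{tab:candidate-interaction-bounds}.
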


\begin{wrapfigure}{r}{0.3\textwidth}
\centering
\definecolor{localizationBand}{RGB}{212,225,244}
\begin{tikzpicture}
\node[anchor=south west,inner sep=0] (localization) at (0,0)
  {\includegraphics[width=\linewidth]{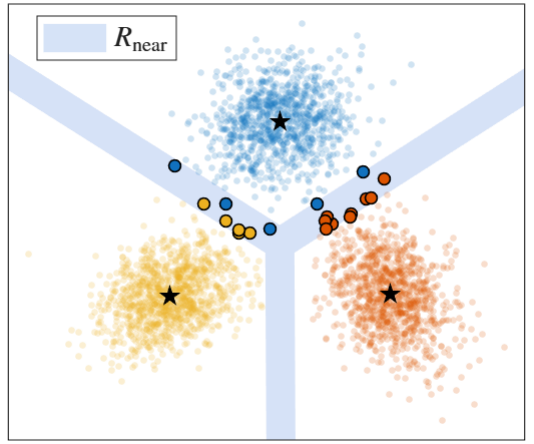}};
\begin{scope}[x={(localization.south east)},y={(localization.north west)}]
  \fill[white] (.045,.855) rectangle (.36,.972);
  \draw[gray!65,fill=white,line width=.25pt] (.05,.86) rectangle (.37,.965);
  \filldraw[draw=gray!65,fill=localizationBand,line width=.25pt]
    (.065,.88) rectangle (.18,.945);
  \node[anchor=west,inner sep=0,font=\fontsize{6}{7}\selectfont]
    at (.20,.9125) {$\sU(\vz)$};
\end{scope}
\end{tikzpicture}
\end{wrapfigure}
For distinct empirical centers, $\sU(\vz)$ is a union of portions of at most $\binom{k}{2}$ slabs around pairwise equal-distance hyperplanes.
The illustration on the right shows a two-dimensional Gaussian mixture, in which only 20 of the 3,000 samples lie in $\sU(\vz)$.
The proof is given in Appendix~\ref{app:candidate-localization}.

Localization alone does not guarantee a small candidate set: bounded, balanced two-cluster instances can have $m=n$ (Appendix~\ref{app:candidate-worst-case}). To obtain a sublinear bound, we control the boundary-region probability mass and squared sample diameter under nonvanishing empirical cluster occupancy.

\begin{theorem}[Logarithmic candidate-set size]
\label{thm:candidate-logarithmic}
Let $\rvx_1,\ldots,\rvx_n$ be i.i.d.\ random vectors from a fixed distribution $P$ on $\R^d$, and let $\vx_1,\ldots,\vx_n$ denote their realizations.
Fix $k,d$ and $\alpha\in(0,1/k]$.
Define the admissible state class
\begin{equation}
\sZ_{n,\alpha}
=\{\vz\in\sZ_k:\ \vz\text{ is D-local and }\min_\ell n_\ell(\vz)\ge\alpha n\},
\label{eq:candidate-state-class}
\end{equation}
and write $m(\vz)=|\sB(\vz)|$.
Suppose $P$ satisfies either of the following conditions:
\begin{enumerate}
    \item $P$ is supported on a fixed ball of radius $R$ centered at $\vx_0$ and has a Lebesgue density $f$ with $\lVert f\rVert _\infty<\infty$.
    \item $P$ is a Gaussian mixture $P=\sum_{\ell=1}^{k}\pi_\ell\mathcal N(\vtheta_\ell,\mSigma_\ell)$ with $\pi_\ell>0,\ \sum_{\ell=1}^{k}\pi_\ell=1,\ \mSigma_\ell\succ0$.
\end{enumerate}
The mixture parameters $(\pi_\ell,\vtheta_\ell,\mSigma_\ell)$ are fixed population quantities, whereas $\vmu_\ell(\vz)$ are empirical cluster means.
For the exact-arithmetic BPS certificate defined in Section~\ref{sec:bps}, set $\overline m_n=0$ if $\sZ_{n,\alpha}$ is empty. Otherwise,
\begin{equation}
\overline m_n:=\sup_{\vz\in\sZ_{n,\alpha}}m(\vz)
=\mathcal O_{\mathbb P}(\log n).
\label{eq:candidate-logarithmic}
\end{equation}
\end{theorem}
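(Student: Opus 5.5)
By Proposition~\ref{prop:candidate-near-localization}, applicable for large $n$ because $n_{\min}\ge\alpha n\ge4$, every $\vz\in\sZ_{n,\alpha}$ satisfies
\[
m(\vz)\le N_n(\vz):=\bigl|\{i:\vx_i\in\sU(\vz)\}\bigr|,
\qquad
\sU(\vz)\subseteq\bigcup_{\ell\ne\ell'}S_{\ell\ell'}(\vz),
\]
where $S_{\ell\ell'}$ is the set of $\vx$ with $\bigl|\lVert\vx-\vmu_\ell\rVert^2-\lVert\vx-\vmu_{\ell'}\rVert^2\bigr|\le 25p_{\max}/(\alpha n)$.
The difference of squared distances is affine in $\vx$:
\[
2\langle\vx,\vmu_{\ell'}-\vmu_\ell\rangle+\lVert\vmu_\ell\rVert^2-\lVert\vmu_{\ell'}\rVert^2.
\]
Hence each $S_{\ell\ell'}$ is a slab, that is, an intersection of two half-spaces, whose width is $w=25p_{\max}/(\alpha n\lVert\vmu_\ell-\vmu_{\ell'}\rVert)$.
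So $\overline m_n\le\binom{k}{2}\sup_{S}\mathbb{P}_n(S)\,n$, where the supremum runs over slabs of width at most $w_n:=25p_{\max}/(\alpha n\,\delta_n)$ and $\delta_n$ is the minimal centroid separation over $\sZ_{n,\alpha}$.
The plan is (i) bound $p_{\max}$, (ii) lower-bound $\delta_n$ uniformly, (iii) bound the population mass of thin slabs, and (iv) pass from population to empirical counts uniformly via VC theory.

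For (i): under condition 1, $p_{\max}\le4R^2$ deterministically. Under condition 2, Gaussian tails give $\max_i\lVert\vx_i\rVert^2=\mathcal O_{\mathbb P}(\log n)$, so $p_{\max}=\mathcal O_{\mathbb P}(\log n)$; this is where the $\log n$ in the Gaussian case enters.

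For (ii), which I expect to be the main obstacle: the D-local condition must prevent two centroids of large clusters from nearly coinciding. Take $\ell,\ell'$ with $n_\ell,n_{\ell'}\ge\alpha n$. D-locality gives, for each $i\in\sI_\ell$,
\[
\tfrac{n_\ell}{n_\ell-1}D_{i\ell}\le\tfrac{n_{\ell'}}{n_{\ell'}+1}D_{i\ell'}.
\]
Averaging over $i\in\sI_\ell$ yields
\[
\tfrac{n_\ell}{n_\ell-1}W_\ell\le\tfrac{n_{\ell'}}{n_{\ell'}+1}\bigl(W_\ell+\lVert\vmu_\ell-\vmu_{\ell'}\rVert^2\bigr),
\]
where $W_\ell$ is the mean within-cluster squared deviation. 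So $\lVert\vmu_\ell-\vmu_{\ell'}\rVert^2\gtrsim W_\ell/n$, which is weak. A better route is to argue that a cluster with $\ge\alpha n$ points under a bounded density, or under a Gaussian mixture, has $W_\ell\ge c>0$ w.h.p. uniformly, by an anti-concentration bound for any set of $\alpha n$ points, again via VC-type uniform bounds on balls. Then $\delta_n^2\ge c/n$, giving $w_n\lesssim p_{\max}/\sqrt n$.

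If only this weak separation is available, I would instead bypass $\delta_n$. The slab condition reads $|\langle\vx-\vc,\vv\rangle|\le w$ with $\vv$ a unit vector, and directly gives width $\mathcal O(p_{\max}/(n\lVert\vmu_\ell-\vmu_{\ell'}\rVert))$. In addition, the D-local Lloyd condition forces the Voronoi-type cells to carry $\alpha n$ points each, which should force separation of order one via a population argument: two centroids of large clusters cannot be $o(1)$-close unless the Voronoi boundary splits mass in a way contradicting D-locality. I would try this first with a contradiction argument using uniform convergence over Voronoi cells, which are intersections of $k-1$ half-spaces and so form a VC class. This yields $\delta_n\ge c$ w.h.p., hence $w_n=\mathcal O_{\mathbb P}(\log n/n)$.

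For (iii): a slab of width $w$ intersected with $B(\vx_0,R)$ has mass at most $\lVert f\rVert_\infty\,(2R)^{d-1}w$. For a Gaussian mixture, the marginal of each component along any unit direction is a one-dimensional Gaussian with variance at least $\lambda_{\min}(\mSigma_\ell)$, so a slab has mass at most $Cw$. For (iv): slabs form a VC class of dimension $\mathcal O(d)$. A relative (multiplicative) deviation inequality gives, uniformly over slabs,
\[
n\,\mathbb{P}_n(S)\le 2nP(S)+\mathcal O(d\log n)
\]
w.h.p. Combining, $\overline m_n\le\binom k2\bigl(2nC\,w_n+\mathcal O(\log n)\bigr)=\mathcal O_{\mathbb P}(\log n)$.
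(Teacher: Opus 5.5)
Your architecture is the paper's: localization via Proposition~\ref{prop:candidate-near-localization}, the $\binom{k}{2}$-slab cover, bounds on $p_{\max}$, the projection-density bound $P(S)\le\kappa w$, and a uniform relative VC count $N_n(S)\le 2nP(S)+\mathcal O(\log n)$ over data-dependent slabs. However, step (ii) is a genuine gap, and the logarithmic rate hinges on it. The slab term is $2n\kappa w_n=50\kappa p_{\max}/(\alpha\delta_n)$, so you need $\delta_n\ge c>0$ uniformly over $\sZ_{n,\alpha}$ on a high-probability event. Your averaging route gives at best $\delta_n\gtrsim n^{-1/2}$, which only yields $\overline m_n=\mathcal O(p_{\max}\sqrt n)$. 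Your second route (``should force separation of order one'', a contradiction ``using uniform convergence over Voronoi cells'') identifies no mechanism. The contradiction you need is not with D-locality as such but with the small probability mass of thin slabs, which your step (iv) already controls.

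The missing idea is a short averaging argument that uses only your own tools. D-locality makes every assigned center a nearest center, because $g_i(b)-(D_{is}-D_{ib})=D_{is}/(n_s-1)+D_{ib}/(n_b+1)\ge0$. Fix $a\ne b$ with $h=\lVert\vmu_b-\vmu_a\rVert>0$ and $\vv=(\vmu_b-\vmu_a)/h$; then $\vv^\top(\vx_i-\vmu_a)\le h/2$ for all $i\in\sI_a$. The nonnegative slacks $h/2-\vv^\top(\vx_i-\vmu_a)$ average exactly $h/2$, since $\vmu_a$ is the mean of $\sI_a$. By Markov's inequality, at least $n_a/2\ge\alpha n/2$ samples therefore lie in the slab $\{\vx:|\vv^\top(\vx-\vmu_a)|\le h/2\}$ of width $h$. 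On your uniform counting event this gives $\alpha/2\le 2\kappa h+\mathcal O(\log n/n)$, hence $h\ge\alpha/(8\kappa)$ for all large $n$, simultaneously for every state and center pair. Coincident centers must be ruled out separately. If $\vmu_a=\vmu_b$, then $g_i(b)=\bigl(1/(n_a-1)+1/(n_b+1)\bigr)D_{ia}\le0$ forces every point of $\sI_a$ onto the center, which contradicts the almost-sure distinctness of samples from an atomless $P$. With $\delta_n\ge\alpha/(8\kappa)$ in hand, your remaining steps give $\overline m_n\le\binom{k}{2}\bigl[400\kappa^2p_{\max}/\alpha^2+\mathcal O(\log n)\bigr]$, which is $\mathcal O_{\mathbb P}(\log n)$ under both conditions.
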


Here $\mathcal O_{\mathbb P}$ denotes order in probability~\citep{vandervaart1998asymptotic}. The proof combines uniform slab counting, diameter control, and empirical center separation (Appendices~\ref{app:candidate-counting}--\ref{app:candidate-distributions}). The bound covers data-dependent clusterings; the mixture case allows unequal weights and anisotropic covariances without requiring separated component means.

For fixed $d,k$, if a sequence of D-local outputs satisfies the occupancy condition with probability tending to one, then $m/n\to0$ in probability and the screening-and-search cost after one-point descent, including lazy distance evaluation, is $\mathcal O_{\mathbb P}(n\log n)$.
The theorem neither guarantees occupancy attainment nor bounds the number of Lloyd iterations, one-point scans, or two-point rounds.

\section{Experiments}
\suppressfloats[t]
We evaluate clustering quality and cost, heuristic enhancement, and candidate-set scaling. Datasets, methods, metrics, and timing protocols are detailed in Appendix~\ref{app:experimental-setup}.

\subsection{Comparison of Sample-Reassignment-Based Methods}
\label{sec:quality-cost}

We compare six sample-reassignment methods using ten shared initializations, with each method--initialization pair timed five times (Appendix~\ref{app:experimental-setup}).
In Table~\ref{tab:quality-cost}, BPS-2PLS attains the lowest available mean WCSS on ten of twelve datasets, including ties.
Hartigan yields lower WCSS on MNIST-2000 and Sensorless; however, neighborhood inclusion does not impose an ordering on independently reached local optima.

Table~\ref{tab:quality-cost} shows that BPS-2PLS reduces mean WCSS relative to Lloyd by $4.30\%$ on Yale and $7.28\%$ on ORL.
On five of the six largest datasets, the reductions are below $0.06\%$, including the CIFAR-10 tie; the reduction on Sensorless is $1.11\%$.
The diminishing influence of individual samples on larger clusters offers a possible explanation (Appendix~\ref{app:additional-experiments}).

In terms of computational cost, BPS-2PLS takes $1.08$--$2.57\times$ as long as D-LO on eight datasets and less time on the other four.
Hartigan takes $11.09\times$ and $242.76\times$ as long as BPS-2PLS on MNIST-2000 and Sensorless, respectively.
Attributing speedups specifically to screening requires paired BPS-on/off timings.

\begin{table}[t]
\centering
\caption{Mean WCSS and runtime (seconds) over ten shared initializations. Bold: lowest available WCSS. Dashes mark Hartigan runs omitted because of excessive runtime.}
\label{tab:quality-cost}
\scriptsize
\setlength{\tabcolsep}{3pt}
\begin{tabular}{llrrrrrr}
\toprule
Dataset & Metric & Lloyd & lite-KM & Hartigan & D-LO & Min-D-LO & BPS-2PLS \\
\midrule
Yale & WCSS & 3328.242 & 3328.242 & 3197.414 & 3214.216 & 3214.216 & \textbf{3185.016} \\
 & Time & 0.011 & 0.009 & 0.356 & 0.219 & 0.246 & 0.327 \\
\addlinespace[2pt]
ORL & WCSS & 2902.673 & 2902.673 & 2723.878 & 2735.73 & 2735.73 & \textbf{2691.304} \\
 & Time & 0.025 & 0.018 & 1.481 & 0.935 & 1.055 & 2.403 \\
\addlinespace[2pt]
USPS-1000 & WCSS & 22400.15 & 22400.15 & 22234.94 & 22254.42 & 22254.42 & \textbf{22229.35} \\
 & Time & 0.057 & 0.020 & 4.910 & 1.035 & 1.193 & 1.280 \\
\addlinespace[2pt]
COIL20 & WCSS & 32974.89 & 32974.89 & 32871.45 & 32783.05 & 32783.05 & \textbf{32772.27} \\
 & Time & 0.152 & 0.067 & 7.821 & 1.155 & 1.385 & 1.253 \\
\addlinespace[2pt]
MNIST-2000 & WCSS & 76682.46 & 76682.46 & \textbf{76439.91} & 76501.68 & 76501.68 & 76499.59 \\
 & Time & 0.242 & 0.087 & 27.96 & 2.013 & 2.345 & 2.521 \\
\addlinespace[2pt]
COIL100 & WCSS & 74693.18 & 74693.18 & --- & 73837.04 & 73837.04 & \textbf{73642.79} \\
 & Time & 1.305 & 0.854 & --- & 38.85 & 42.93 & 63.79 \\
\addlinespace[2pt]
20News & WCSS & 13421.12 & 13421.12 & 13422.92 & 13418.46 & 13418.47 & \textbf{13418.44} \\
 & Time & 1.640 & 0.756 & 391.89 & 35.25 & 16.82 & 21.26 \\
\addlinespace[2pt]
Sensorless & WCSS & 1641518 & 1641518 & \textbf{1622398} & 1629887 & 1629887 & 1623231 \\
 & Time & 0.938 & 0.608 & 1020.68 & 10.48 & 2.802 & 4.205 \\
\addlinespace[2pt]
CIFAR-10 & WCSS & \textbf{101000000} & \textbf{101000000} & --- & \textbf{101000000} & \textbf{101000000} & \textbf{101000000} \\
 & Time & 8.080 & 1.886 & --- & 22.59 & 24.51 & 13.94 \\
\addlinespace[2pt]
CIFAR-100 & WCSS & 85660060 & 85660060 & --- & 85610702 & 85610702 & \textbf{85610581} \\
 & Time & 19.67 & 6.664 & --- & 550.56 & 619.29 & 624.69 \\
\addlinespace[2pt]
MNIST & WCSS & 2751844 & 2751844 & --- & \textbf{2751804} & \textbf{2751804} & \textbf{2751804} \\
 & Time & 21.26 & 4.130 & --- & 100.81 & 113.23 & 176.45 \\
\addlinespace[2pt]
Fashion-MNIST & WCSS & 2242899 & 2242899 & --- & \textbf{2242898} & \textbf{2242898} & \textbf{2242898} \\
 & Time & 19.68 & 3.562 & --- & 65.00 & 50.51 & 39.87 \\
\bottomrule
\end{tabular}
\end{table}

\begin{figure}[t]
\centering
\includegraphics[width=\linewidth]{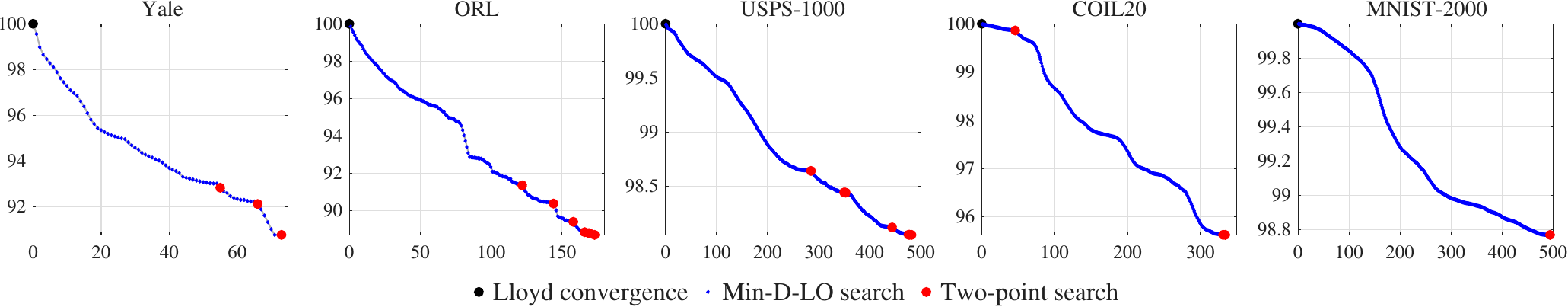}
\caption{BPS-2PLS descent trajectories on five datasets. The $x$-axis shows the iteration count after Lloyd convergence, and the $y$-axis shows WCSS as a percentage of the WCSS at Lloyd convergence.}
\label{fig:lossiter}
\end{figure}

Figure~\ref{fig:lossiter} shows how two-point moves extend descent beyond one-point local optima.
On Yale, ORL, USPS-1000, and COIL20, successful two-point moves are followed by further one-point improvements.
For example, on COIL20, the first D-local solution retains $99.87\%$ of the initial Lloyd WCSS, whereas the final solution retains $95.62\%$; most of this additional reduction comes from one-point moves made possible by the two-point updates.
The displayed MNIST-2000 run instead terminates at its first D-local solution, where two-point search finds no improvement.

\subsection{Enhancing Heuristic Methods}
\label{sec:heuristic-enhancement}

Table~\ref{tab:heuristic-enhancement} evaluates terminal (\emph{Post}) and internal (\emph{Inner}) BPS-2PLS refinement; insertion points are detailed in Appendix~\ref{app:heuristic-refinements}.
Among the four heuristic methods, Post reduces mean WCSS in all eighteen available comparisons.
For LocalSearch, Post reduces mean WCSS by $38.12$--$41.87\%$; this refinement includes centroid recomputation and one-point descent, which are absent from the baseline.
Inner lowers mean WCSS relative to Baseline in twelve of thirteen available comparisons but does not consistently outperform Post.
For J-Means, Inner uses $37.6\%$ and $40.2\%$ of the baseline runtime on Yale and ORL, while Post achieves lower mean WCSS.
For MergeSplit, Inner takes $1.56$--$5.32\times$ as long as Post and gives lower mean WCSS on Yale, ORL, and MNIST-2000; on COIL20, it has higher mean WCSS than both Baseline and Post.

\begin{table}[t]
\centering
\caption{Heuristic enhancement: mean WCSS and runtime (seconds). Bold: lowest WCSS within each method and dataset, including ties; dashes: omitted runs.}
\label{tab:heuristic-enhancement}
\scriptsize
\setlength{\tabcolsep}{2pt}
\begin{tabular}{ll*{5}{rr}}
\toprule
Method & Variant & \multicolumn{2}{c}{Yale} & \multicolumn{2}{c}{ORL} & \multicolumn{2}{c}{USPS-1000} & \multicolumn{2}{c}{COIL20} & \multicolumn{2}{c}{MNIST-2000} \\
\cmidrule(lr){3-4}\cmidrule(lr){5-6}\cmidrule(lr){7-8}\cmidrule(lr){9-10}\cmidrule(lr){11-12}
 &  & WCSS & Time & WCSS & Time & WCSS & Time & WCSS & Time & WCSS & Time \\
\midrule
Min-D-LO & Baseline & 3214.216 & 0.246 & 2735.73 & 1.055 & 22254.42 & 1.193 & 32783.05 & 1.385 & 76501.68 & 2.345 \\
 & Inner & \textbf{3185.016} & 0.327 & \textbf{2691.304} & 2.403 & \textbf{22229.35} & 1.280 & \textbf{32772.27} & 1.253 & \textbf{76499.59} & 2.521 \\
\addlinespace[2pt]
LocalSearch & Baseline & 5204.958 & 0.074 & 4594.401 & 0.196 & 35930.89 & 0.106 & 51511.93 & 0.439 & 123605.5 & 0.352 \\
 & Post & \textbf{3180.593} & 0.422 & \textbf{2670.928} & 2.439 & \textbf{22235.22} & 1.003 & \textbf{31799.29} & 2.329 & \textbf{76382.11} & 3.156 \\
\addlinespace[2pt]
MultiSwap & Baseline & 3347.979 & 0.045 & 2839.742 & 0.178 & 22363.17 & 0.090 & 33014.32 & 0.356 & 76669.54 & 0.488 \\
 & Post & 3197.036 & 0.365 & \textbf{2674.842} & 2.915 & \textbf{22244.28} & 1.071 & \textbf{32723.21} & 1.895 & \textbf{76571.75} & 2.466 \\
 & Inner & \textbf{3196.986} & 0.374 & 2683.272 & 2.948 & \textbf{22244.28} & 1.065 & \textbf{32723.21} & 1.883 & \textbf{76571.75} & 2.426 \\
\addlinespace[2pt]
J-Means & Baseline & 3214.364 & 16.51 & 2630.903 & 465.05 & 22354.6 & 16.30 & --- & --- & --- & --- \\
 & Post & \textbf{3160.917} & 16.74 & \textbf{2607.935} & 464.91 & \textbf{22237.06} & 17.27 & --- & --- & --- & --- \\
 & Inner & 3169.434 & 6.208 & 2608.329 & 186.88 & 22244.28 & 15.73 & --- & --- & --- & --- \\
\addlinespace[2pt]
MergeSplit & Baseline & 3191.945 & 0.791 & 2616.383 & 12.90 & 22242.59 & 3.220 & 31307.97 & 43.97 & 76494.29 & 96.93 \\
 & Post & 3167.338 & 1.023 & 2613.44 & 13.43 & \textbf{22236.35} & 3.403 & \textbf{31298.45} & 44.23 & 76493.35 & 97.25 \\
 & Inner & \textbf{3166.465} & 5.435 & \textbf{2608.782} & 71.40 & 22237.87 & 9.834 & 31324.05 & 94.50 & \textbf{76487.16} & 151.67 \\
\bottomrule
\end{tabular}
\end{table}

\subsection{Candidate-Set Size versus Sample Size}
\label{sec:candidate-scaling}

We study random subsets drawn from six datasets, recording $m$ at the first D-local state before any two-point move.
Figure~\ref{fig:candidate-scaling} summarizes five subsets per size, each averaged over three initializations; subset sizes and aggregation details are in Appendix~\ref{app:candidate-scaling-protocol}.

\begin{figure}[t]
\centering
\includegraphics[width=\linewidth]{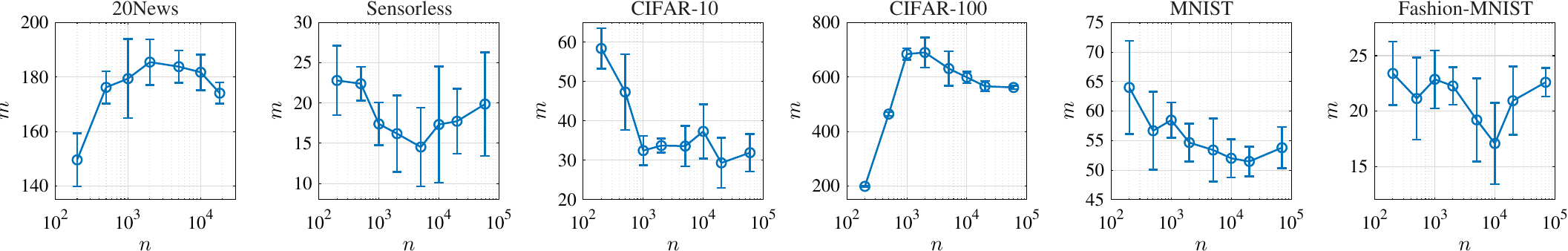}
\caption{Retained candidate count $m$ versus sample size $n$ at the first D-local state. Markers show means over five random subsets, each averaged over three initializations; error bars indicate one standard deviation across subsets.}
\label{fig:candidate-scaling}
\end{figure}

The candidate counts show no sustained increase across the evaluated range: 20News and CIFAR-100 rise over smaller sample sizes before leveling off or decreasing, while the other datasets are flat, decreasing, or non-monotone.
At the largest available size, mean retained fractions range from $0.10\%$ to $2.81\%$, corresponding to mean pruning rates above $97\%$ in all six cases.
These observations motivate studying sharper bounds for algorithmic outputs (Appendix~\ref{app:candidate-scaling-protocol}).

\section{Conclusion}

We introduced $r$-point local optimality and developed BPS-2PLS to certify two-point local optima of $k$-means.
Connected moves yield six configurations, while safe screening confines distance evaluation and search to retained samples.
Under the stated sampling and occupancy conditions, geometric localization gives $m=\mathcal O_{\mathbb P}(\log n)$ and $\mathcal O_{\mathbb P}(n\log n)$ screening-and-search time after one-point descent for fixed $d,k$.
Experiments show that BPS-2PLS enables further descent beyond one-point local optima and can improve existing heuristics when used as a refinement.

Screening can retain all samples, and the theorem bounds neither one-point descent time nor the number of rounds. Future work includes larger neighborhoods, faster screening and descent, and sharper candidate-size bounds for algorithmic outputs.

\subsection*{AI use statement}
In this work, we used generative AI tools solely to assist with grammar, spelling, and sentence clarity.
We have reviewed all AI-assisted edits and assume full responsibility for the final manuscript.
The generative AI contributed in no other way to this work.
The research idea, experimental design, and all other substantive content were developed and completed by the authors.

\subsection*{Ethics statement}
We confirm that our work adheres to the ICLR Code of Ethics.
This work uses publicly available datasets that contain no personally identifiable information.
Given the strictly technical nature of this contribution, we foresee no ethical issues or potential misuse.

\subsection*{Reproducibility statement}
Proofs of the theoretical results are provided in Appendices~\ref{app:rpoint-proofs}, \ref{app:connected-shell-count}, and~\ref{app:candidate-size}; detailed algorithms and screening constructions are given in Appendices~\ref{app:pd2pls-details} and~\ref{app:bps-details}.
Appendix~\ref{app:experimental-setup} specifies the dataset sources and representations, comparison methods and search budgets, initialization protocol, evaluation metrics, timing procedure, and computing environment.
Appendices~\ref{app:heuristic-refinements}, ~\ref{app:candidate-scaling-protocol} and~\ref{app:runtime-phases} describe the heuristic refinement insertion points and candidate-set scaling protocol.

\bibliography{ref}

@inproceedings{macqueen1967some,
  title     = {Some Methods for Classification and Analysis of Multivariate Observations},
  author    = {MacQueen, James B.},
  booktitle = {Proceedings of the Fifth Berkeley Symposium on Mathematical Statistics and Probability},
  volume    = {1},
  pages     = {281--297},
  publisher = {University of California Press},
  address   = {Berkeley, CA},
  year      = {1967},
  url       = {https://digicoll.lib.berkeley.edu/record/113015}
}

@article{lloyd1982least,
  title={Least squares quantization in PCM},
  author={Lloyd, Stuart},
  journal={IEEE transactions on information theory},
  volume={28},
  number={2},
  pages={129--137},
  year={1982},
  publisher={IEEE},
  doi = {10.1109/TIT.1982.1056489}
}

@article{aloise2009nphardness,
  title={NP-hardness of Euclidean sum-of-squares clustering},
  author={Aloise, Daniel and Deshpande, Amit and Hansen, Pierre and Popat, Preyas},
  journal={Machine learning},
  volume={75},
  number={2},
  pages={245--248},
  year={2009},
  publisher={Springer},
  doi = {10.1007/s10994-009-5103-0}
}

@inproceedings{li2025modified,
  title     = {Modified {K}-Means Algorithm with Local Optimality Guarantees},
  author    = {Li, Mingyi and Metel, Michael R. and Takeda, Akiko},
  booktitle = {Proceedings of the 42nd International Conference on Machine Learning},
  series    = {Proceedings of Machine Learning Research},
  volume    = {267},
  pages     = {35656--35684},
  publisher = {PMLR},
  year      = {2025},
  url       = {https://proceedings.mlr.press/v267/li25bt.html}
}

@inproceedings{jia2025calibrated,
  title     = {Towards Calibrated Deep Clustering Network},
  author    = {Jia, Yuheng and Cheng, Jianhong and Liu, Hui and Hou, Junhui},
  booktitle = {The Thirteenth International Conference on Learning Representations},
  year      = {2025},
  url       = {https://openreview.net/forum?id=JvH4jDDcG3}
}

@inproceedings{li2025sample,
  title     = {Learning from Sample Stability for Deep Clustering},
  author    = {Li, Zhixin and Jia, Yuheng and Liu, Hui and Hou, Junhui},
  booktitle = {Proceedings of the 42nd International Conference on Machine Learning},
  series    = {Proceedings of Machine Learning Research},
  volume    = {267},
  pages     = {34904--34919},
  publisher = {PMLR},
  year      = {2025},
  url       = {https://proceedings.mlr.press/v267/li25am.html}
}

@article{qi2024superpixel,
  title={Superpixel graph contrastive clustering with semantic-invariant augmentations for hyperspectral images},
  author={Qi, Jianhan and Jia, Yuheng and Liu, Hui and Hou, Junhui},
  journal={IEEE Transactions on Circuits and Systems for Video Technology},
  volume={34},
  number={11},
  pages={11360--11372},
  year={2024},
  publisher={IEEE},
  doi = {10.1109/TCSVT.2024.3418610}
}

@article{chari2024biophysically,
  title={Biophysically interpretable inference of cell types from multimodal sequencing data},
  author={Chari, Tara and Gorin, Gennady and Pachter, Lior},
  journal={Nature Computational Science},
  volume={4},
  number={9},
  pages={677--689},
  year={2024},
  publisher={Nature Publishing Group US New York},
  doi = {10.1038/s43588-024-00689-2}
}

@article{lee2024dynamical,
  title={The dynamical footprint of year-round North American Weather Regimes},
  author={Lee, Simon H and Messori, Gabriele},
  journal={Geophysical Research Letters},
  volume={51},
  number={2},
  pages={e2023GL107161},
  year={2024},
  publisher={Wiley Online Library},
  doi = {10.1029/2023GL107161}
}

@inproceedings{ambilduke2025tower,
  title={From Tower to Spire: Adding the Speech Modality to a Translation-Specialist LLM.},
  author={Ambilduke, Kshitij and Peters, Ben and Sannigrahi, Sonal and Keshwani, Anil and Lam, Tsz Kin and Martins, Bruno and Martins, Andr{\'e} FT and Boito, Marcely Zanon},
  booktitle={EMNLP (Findings)},
  pages={19658--19673},
  year={2025},
  doi = {10.18653/v1/2025.findings-emnlp.1071},
  url = {https://aclanthology.org/2025.findings-emnlp.1071/}
}

@article{achanta2012slic,
  title={SLIC superpixels compared to state-of-the-art superpixel methods},
  author={Achanta, Radhakrishna and Shaji, Appu and Smith, Kevin and Lucchi, Aurelien and Fua, Pascal and S{\"u}sstrunk, Sabine},
  journal={IEEE transactions on pattern analysis and machine intelligence},
  volume={34},
  number={11},
  pages={2274--2282},
  year={2012},
  publisher={IEEE},
  doi = {10.1109/TPAMI.2012.120}
}

@article{jegou2011product,
  title   = {Product Quantization for Nearest Neighbor Search},
  author  = {J{\'e}gou, Herv{\'e} and Douze, Matthijs and Schmid, Cordelia},
  journal = {IEEE Transactions on Pattern Analysis and Machine Intelligence},
  volume  = {33},
  number  = {1},
  pages   = {117--128},
  year    = {2011},
  doi     = {10.1109/TPAMI.2010.57}
}

@inproceedings{telgarsky2010hartigan,
  title     = {Hartigan's Method: $k$-means Clustering without Voronoi},
  author    = {Telgarsky, Matus and Vattani, Andrea},
  booktitle = {Proceedings of the Thirteenth International Conference on Artificial Intelligence and Statistics},
  pages     = {820--827},
  volume    = {9},
  series    = {Proceedings of Machine Learning Research},
  year      = {2010}
}

@article{elghaoui2012safe,
  title   = {Safe Feature Elimination for the LASSO and Sparse Supervised Learning Problems},
  author  = {El Ghaoui, Laurent and Viallon, Vivian and Rabbani, Tarek},
  journal = {Pacific Journal of Optimization},
  volume  = {8},
  number  = {4},
  pages   = {667--698},
  year    = {2012}
}

@inproceedings{fercoq2015mind,
  title     = {Mind the Duality Gap: Safer Rules for the Lasso},
  author    = {Fercoq, Olivier and Gramfort, Alexandre and Salmon, Joseph},
  booktitle = {Proceedings of the 32nd International Conference on Machine Learning},
  pages     = {333--342},
  volume    = {37},
  series    = {Proceedings of Machine Learning Research},
  year      = {2015}
}

@article{hartigan1979algorithm,
  title   = {Algorithm {AS} 136: A {K}-Means Clustering Algorithm},
  author  = {Hartigan, J. A. and Wong, M. A.},
  journal = {Applied Statistics},
  volume  = {28},
  number  = {1},
  pages   = {100--108},
  year    = {1979},
  doi     = {10.2307/2346830}
}

@article{arya2004local,
  title   = {Local Search Heuristics for {k}-Median and Facility Location Problems},
  author  = {Arya, Vijay and Garg, Naveen and Khandekar, Rohit and Meyerson, Adam and
             Munagala, Kamesh and Pandit, Vinayaka},
  journal = {SIAM Journal on Computing},
  volume  = {33},
  number  = {3},
  pages   = {544--562},
  year    = {2004},
  doi     = {10.1137/S0097539702416402}
}

@article{kanungo2004local,
  title   = {A Local Search Approximation Algorithm for {k}-Means Clustering},
  author  = {Kanungo, Tapas and Mount, David M. and Netanyahu, Nathan S. and
             Piatko, Christine D. and Silverman, Ruth and Wu, Angela Y.},
  journal = {Computational Geometry},
  volume  = {28},
  number  = {2--3},
  pages   = {89--112},
  year    = {2004},
  doi     = {10.1016/j.comgeo.2004.03.003}
}

@book{vandervaart1998asymptotic,
  title     = {Asymptotic Statistics},
  author    = {van der Vaart, A. W.},
  series    = {Cambridge Series in Statistical and Probabilistic Mathematics},
  publisher = {Cambridge University Press},
  year      = {1998},
  doi       = {10.1017/CBO9780511802256}
}

@article{hansen2001jmeans,
  title     = {{J-Means}: A New Local Search Heuristic for Minimum Sum of Squares Clustering},
  author    = {Hansen, Pierre and Mladenovi{\'c}, Nenad},
  journal   = {Pattern Recognition},
  volume    = {34},
  number    = {2},
  pages     = {405--413},
  year      = {2001},
  doi       = {10.1016/S0031-3203(99)00216-2}
}

@article{nielsen2014further,
  title   = {Further Heuristics for {$k$}-Means: The Merge-and-Split Heuristic and the {$(k,l)$}-Means},
  author  = {Nielsen, Frank and Nock, Richard},
  journal = {arXiv preprint arXiv:1406.6314},
  year    = {2014},
  url     = {https://arxiv.org/abs/1406.6314}
}

@inproceedings{elkan2003triangle,
  title     = {Using the Triangle Inequality to Accelerate {$k$}-Means},
  author    = {Elkan, Charles},
  booktitle = {Proceedings of the Twentieth International Conference on Machine Learning},
  pages     = {147--153},
  year      = {2003},
  url       = {https://cseweb.ucsd.edu/~elkan/kmeansicml03.pdf}
}

@inproceedings{hamerly2010making,
  title     = {Making {$k$}-Means Even Faster},
  author    = {Hamerly, Greg},
  booktitle = {Proceedings of the 2010 SIAM International Conference on Data Mining},
  pages     = {130--140},
  publisher = {SIAM},
  year      = {2010},
  doi       = {10.1137/1.9781611972801.12}
}

@inproceedings{newling2016fast,
  title     = {Fast {$k$}-Means with Accurate Bounds},
  author    = {Newling, James and Fleuret, Fran{\c{c}}ois},
  booktitle = {Proceedings of the 33rd International Conference on Machine Learning},
  series    = {Proceedings of Machine Learning Research},
  volume    = {48},
  pages     = {936--944},
  publisher = {PMLR},
  year      = {2016},
  url       = {https://proceedings.mlr.press/v48/newling16.html}
}

@article{tarsitano2003computational,
  title   = {A Computational Study of Several Relocation Methods for K-Means Algorithms},
  author  = {Tarsitano, Agostino},
  journal = {Pattern Recognition},
  volume  = {36},
  number  = {12},
  pages   = {2955--2966},
  year    = {2003},
  doi     = {10.1016/S0031-3203(03)00190-0}
}

@book{deberg2008computational,
  author    = {de Berg, Mark and Cheong, Otfried and van Kreveld, Marc and Overmars, Mark},
  title     = {Computational Geometry: Algorithms and Applications},
  edition   = {3},
  publisher = {Springer},
  year      = {2008},
  doi       = {10.1007/978-3-540-77974-2}
}

@misc{cai2011litekmeans,
  author = {Cai, Deng},
  title  = {{Litekmeans}: The Fastest {MATLAB} Implementation of Kmeans},
  year   = {2011},
  note   = {Software},
  url    = {http://www.zjucadcg.cn/dengcai/Data/Clustering.html}
}

@inproceedings{lattanzi2019better,
  title     = {A Better {$k$}-Means++ Algorithm via Local Search},
  author    = {Lattanzi, Silvio and Sohler, Christian},
  booktitle = {Proceedings of the 36th International Conference on Machine Learning},
  series    = {Proceedings of Machine Learning Research},
  volume    = {97},
  pages     = {3662--3671},
  publisher = {PMLR},
  year      = {2019},
  url       = {https://proceedings.mlr.press/v97/lattanzi19a.html}
}

@inproceedings{beretta2023multiswap,
  title     = {{Multi-Swap} {$k$}-Means++},
  author    = {Beretta, Lorenzo and Cohen-Addad, Vincent and Lattanzi, Silvio and Parotsidis, Nikos},
  booktitle = {Advances in Neural Information Processing Systems},
  volume    = {36},
  year      = {2023},
  url       = {https://papers.nips.cc/paper_files/paper/2023/hash/52d63f9e4b81f866bf69fb3c834aad47-Abstract-Conference.html}
}
\bibliographystyle{plainnat}

\appendix
\section{Proofs for \texorpdfstring{$r$}{r}-Point Local Optimality}
\label{app:rpoint-proofs}

\subsection{Exact Simultaneous-Reassignment Gain}

To analyze interactions among the reassignments in a simultaneous move $\sM$, we first define the changes in the size and sum of each cluster $\ell$ by
\begin{equation}
\Delta n_\ell(\sM) = \bigl|\{t:b_t=\ell\}\bigr|-\bigl|\{t:a_t=\ell\}\bigr|,
\qquad
\Delta \vs_\ell(\sM) = \sum_{t:b_t=\ell}\vx_{i_t}-\sum_{t:a_t=\ell}\vx_{i_t}.
\label{eq:cluster-statistic-changes}
\end{equation}
Let $\sV(\sM)$ be the set of affected clusters. \Eqref{eq:kmeans-objective} then yields the following exact gain.

\begin{proposition}
\label{prop:general-gain}
For every feasible simultaneous reassignment $\sM$,
\begin{equation}
G(\sM;\vz)
=
\sum_{\ell\in\sV(\sM)}
\left(
\frac{\lVert \vs_\ell+\Delta \vs_\ell(\sM)\rVert^2}{n_\ell+\Delta n_\ell(\sM)}
-
\frac{\lVert \vs_\ell\rVert^2}{n_\ell}
\right).
\label{eq:exact-simultaneous-gain}
\end{equation}
\end{proposition}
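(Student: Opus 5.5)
The plan is to compute $F(\vz^{\sM})$ directly from the second form of the objective in \eqref{eq:kmeans-objective} and subtract. That expression writes $F$ as a state-independent term $\sum_{i}\lVert\vx_i\rVert^2$ minus a sum of per-cluster terms $\lVert\vs_\ell\rVert^2/n_\ell$. The first term is the same for $\vz$ and $\vz^{\sM}$, because the samples themselves are unchanged, so it cancels in $G(\sM;\vz)=F(\vz)-F(\vz^{\sM})$. This gives
\[
G(\sM;\vz)=\sum_{\ell=1}^k\left(\frac{\lVert\vs_\ell(\vz^{\sM})\rVert^2}{n_\ell(\vz^{\sM})}-\frac{\lVert\vs_\ell\rVert^2}{n_\ell}\right).
\]

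Next I will identify the post-move statistics. By the definition of $\vz^{\sM}$, the index set of cluster $\ell$ after the move is $\sI_\ell$, with every $i_t$ having $a_t=\ell$ removed and every $i_t$ having $b_t=\ell$ added. The indices $i_t$ are distinct, and each $i_t$ lies in $\sI_{a_t}$ before the move. So these removals and insertions are all genuine, and no sample is counted twice.

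Counting and summing then gives $n_\ell(\vz^{\sM})=n_\ell+\Delta n_\ell(\sM)$ and $\vs_\ell(\vz^{\sM})=\vs_\ell+\Delta\vs_\ell(\sM)$, with the changes as in \eqref{eq:cluster-statistic-changes}. Feasibility, $\vz^{\sM}\in\sZ_k$, ensures $n_\ell+\Delta n_\ell(\sM)\ge1$, so every quotient is well defined.

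Finally, I restrict the sum to $\sV(\sM)$. Any cluster $\ell\notin\sV(\sM)$ appears as neither a source $a_t$ nor a destination $b_t$. For such a cluster, $\Delta n_\ell(\sM)=0$ and $\Delta\vs_\ell(\sM)=\vzero$, so its summand vanishes. This yields \eqref{eq:exact-simultaneous-gain}.

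There is no real obstacle here. The only care needed is in the bookkeeping step: the move is applied simultaneously, with all labels read from $\vz$, so a cluster that both loses and gains samples is handled correctly by taking the net changes.
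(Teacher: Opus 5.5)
Your proposal is correct and follows the paper's proof essentially step by step. You cancel the common $\sum_i\lVert\vx_i\rVert^2$ term from the second form of \eqref{eq:kmeans-objective}, identify the post-move statistics as $n_\ell+\Delta n_\ell(\sM)$ and $\vs_\ell+\Delta\vs_\ell(\sM)$, use feasibility to keep the denominators positive, and drop the untouched clusters — and your remark that distinct indices make the net-change bookkeeping valid for clusters that both lose and gain samples spells out a detail the paper leaves implicit.
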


\begin{proof}[Proof of Proposition~\ref{prop:general-gain}]
For each cluster $\ell$, the move changes its size and sum to
\[
n_\ell^{\sM}=n_\ell+\Delta n_\ell(\sM),
\qquad
\vs_\ell^{\sM}=\vs_\ell+\Delta \vs_\ell(\sM).
\]
Because $\sM$ is feasible, $n_\ell^{\sM}\geq1$ for every $\ell$.
By~\eqref{eq:kmeans-objective}, the common term $\sum_i\lVert\vx_i\rVert^2$ cancels in $F(\vz)-F(\vz^{\sM})$, as do the contributions of all untouched clusters.
Substituting the updated statistics of the affected clusters yields~\eqref{eq:exact-simultaneous-gain}.
\end{proof}

\subsection{Componentwise Additivity and Connected Moves}

\begin{proposition}[Componentwise additivity]
\label{prop:component-additivity}
Let $\sM$ be feasible and let $\sM_1,\ldots,\sM_h$ be the moves induced by the weakly connected components of $\gG(\sM)$.
Then every $\sM_j$ is feasible when applied directly to $\vz$, and $G(\sM;\vz)=\sum_{j=1}^hG(\sM_j;\vz)$.
\end{proposition}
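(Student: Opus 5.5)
The plan is to work directly from the exact gain formula in Proposition~\ref{prop:general-gain}, using the fact that the weakly connected components of $\gG(\sM)$ partition both the edge set (the reassignments) and the vertex set $\sV(\sM)$ (the affected clusters). Write $\sV_j$ for the vertex set of the $j$-th component, so that $\sV(\sM)=\bigsqcup_{j=1}^h\sV_j$ and $\sM=\bigsqcup_j\sM_j$. By construction, $\sV(\sM_j)=\sV_j$.

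The first step is a locality observation for the cluster statistics. Let $\ell\in\sV_j$. Every reassignment $t$ with $a_t=\ell$ or $b_t=\ell$ is an edge incident to $\ell$, so it lies in the same component and therefore belongs to $\sM_j$. Reading off the definitions in \eqref{eq:cluster-statistic-changes}, this gives
\[
\Delta n_\ell(\sM)=\Delta n_\ell(\sM_j),
\qquad
\Delta\vs_\ell(\sM)=\Delta\vs_\ell(\sM_j).
\]
For a cluster $\ell\notin\sV_j$, no edge of $\sM_j$ touches $\ell$, so $\Delta n_\ell(\sM_j)=0$ and $\Delta\vs_\ell(\sM_j)=\vzero$.

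Feasibility of each $\sM_j$ follows from this observation. Applied directly to $\vz$, the move $\sM_j$ is a well-formed simultaneous move: its sample indices are distinct because they form a subset of those of $\sM$, and each source label $a_t=z_{i_t}$ is unchanged. The resulting cluster sizes are as follows.
\begin{itemize}
\item For $\ell\in\sV_j$, the size is $n_\ell+\Delta n_\ell(\sM_j)=n_\ell+\Delta n_\ell(\sM)\ge1$, because $\sM$ is feasible.
\item For $\ell\notin\sV_j$, the size is $n_\ell\ge1$, because $\vz\in\sZ_k$.
\end{itemize}
Hence $\vz^{\sM_j}\in\sZ_k$.

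Additivity then follows by splitting the sum in \eqref{eq:exact-simultaneous-gain} over the disjoint vertex sets $\sV_j$ and replacing each $\Delta(\sM)$ by the corresponding $\Delta(\sM_j)$. The $j$-th block is exactly \eqref{eq:exact-simultaneous-gain} evaluated for $\sM_j$, since $\sV(\sM_j)=\sV_j$. Summing the blocks yields $G(\sM;\vz)=\sum_jG(\sM_j;\vz)$.

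The only delicate point is the feasibility step. It needs the observation that every edge touching a vertex of a component belongs to that component, which holds by the definition of weak connectivity. Beyond that, the argument is bookkeeping.
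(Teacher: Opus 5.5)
Your proposal is correct and follows essentially the same route as the paper: you show that $\Delta n_\ell$ and $\Delta\vs_\ell$ for a cluster in a component coincide with those of the full move $\sM$, deduce feasibility of each component move from feasibility of $\sM$, and then split the sum in Proposition~\ref{prop:general-gain} over the disjoint component vertex sets. Your separate check that clusters outside a component keep their original positive sizes, and that each component move is a valid simultaneous move, just spells out details that the paper states in a single line.
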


\begin{proof}[Proof of Proposition~\ref{prop:component-additivity}]
Different weakly connected components of $\gG(\sM)$ have disjoint vertex sets and therefore modify disjoint sets of cluster sufficient statistics.
For any cluster $\ell$ touched by component $\sM_j$, no transfer from another component can enter or leave $\ell$, so
\[
\Delta n_\ell(\sM_j)=\Delta n_\ell(\sM),
\qquad
\Delta \vs_\ell(\sM_j)=\Delta \vs_\ell(\sM).
\]
Since the full move is feasible, every touched cluster has positive final size under its component move as well, while all other clusters remain unchanged; hence each $\sM_j$ is feasible.
Partitioning the sum in~\eqref{eq:exact-simultaneous-gain} over the disjoint component vertex sets gives
\[
G(\sM;\vz)=\sum_{j=1}^hG(\sM_j;\vz).
\]
\end{proof}

\begin{proof}[Proof of Theorem~\ref{thm:connected-r-move}]
If $\vz$ is $r$-point locally optimal, every connected feasible move of size at most $r$ is non-improving by definition.
Conversely, assume that every such connected move is non-improving and consider an arbitrary feasible move $\sM$ with $|\sM|\leq r$.
If $\gG(\sM)$ is connected, then $G(\sM;\vz)\leq0$ by assumption.
Otherwise, let $\sM_1,\ldots,\sM_h$ denote its component moves.
By Proposition~\ref{prop:component-additivity}, each $\sM_j$ is feasible and connected, with $|\sM_j|\leq r$, and therefore $G(\sM_j;\vz)\leq0$.
Hence
\[
G(\sM;\vz)=\sum_{j=1}^hG(\sM_j;\vz)\leq0.
\]
Thus no feasible move involving at most $r$ samples improves the objective, proving $r$-point local optimality.

For the final statement, suppose $\vz$ is $(r-1)$-point locally optimal and an improving exact-$r$ move $\sM$ is disconnected.
Every component of $\gG(\sM)$ contains at most $r-1$ edges and is therefore non-improving.
Componentwise additivity would then imply $G(\sM;\vz)\leq0$, a contradiction.
\end{proof}

\section{Counting Connected Exact-\texorpdfstring{$r$}{r} Moves}
\label{app:connected-shell-count}

Connectedness removes structurally redundant moves, but it does not by itself make general exact-$r$ certification inexpensive.
Let
\begin{equation}
\sC_{=r}(\vz) = \{\vz^{\sM}\in\sN_{=r}(\vz):\gG(\sM)\text{ is weakly connected}\}
\label{eq:connected-shell}
\end{equation}
denote the connected exact-$r$ shell.

\begin{proposition}[Size of the connected exact-$r$ shell] \label{prop:connected-shell-size}
Fix $r\geq2$. In general,
\begin{equation}
|\sC_{=r}(\vz)| \leq \binom{n}{r}(k-1)^r.
\label{eq:connected-upper-trivial}
\end{equation}
This order is attainable up to $r$-dependent constants: a cluster of size $\Theta(n)$ can yield $\Theta_r(n^r k^r)$ connected moves.
If instead the partition is $\beta$-balanced, meaning $n_a\leq \beta n/k$ for every $a$, then
\begin{equation}
|\sC_{=r}(\vz)| = \mathcal{O}_r(\beta^r n^rk).
\label{eq:balanced-connected-upper}
\end{equation}
For equal-size balanced partitions with $k\geq r+1$ and $n/k\geq2$, this order is tight:
\begin{equation}
|\sC_{=r}(\vz)|=\Theta_r(n^rk).
\label{eq:balanced-connected-theta}
\end{equation}
\end{proposition}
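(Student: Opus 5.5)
The plan is to prove the three claims separately by direct counting. Each element of $\sC_{=r}(\vz)$ is the image $\vz^{\sM}$ of a move $\sM$, so it suffices to count moves. Distinct moves give distinct label vectors, because $\vz^{\sM}$ determines both the moved indices and their destinations.

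\emph{Trivial upper bound.} I choose the $r$ moved samples in $\binom{n}{r}$ ways and a destination $b_t\neq z_{i_t}$ for each in $(k-1)^r$ ways. This bounds $|\sN_{=r}(\vz)|$, and hence $|\sC_{=r}(\vz)|$.

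\emph{Attainability.} I take a state with one cluster $\ell$ of size $n-(k-1)\ge n/2$ and the remaining clusters of size one, assuming $n\ge 2(k-1)$. Every move of $r$ samples of $\ell$ to arbitrary destinations $b_t\neq\ell$ has a star-shaped transfer graph centered at $\ell$, so it is weakly connected. It is feasible as long as $n_\ell-r\ge1$. This yields $\binom{n_\ell}{r}(k-1)^r=\Theta_r(n^rk^r)$ connected moves.

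\emph{Balanced upper bound.} A weakly connected directed multigraph with $r$ edges has at most $r+1$ vertices. I therefore fix the underlying connected multigraph shape on vertex slots $\{1,\dots,v\}$ with $v\le r+1$; the number of such shapes depends only on $r$. The key step is to order the edges so that each edge after the first shares a vertex with an earlier one, which is possible by connectivity. I then count label assignments along this order:
\begin{itemize}
\item The first edge's source cluster can be any of the $k$ clusters.
\item Its sample is one of at most $\beta n/k$ members of that cluster.
\item Its destination is any of at most $k$ clusters.
\item Each later edge attaches to an already labeled cluster.
\end{itemize}
Each later edge needs care. If its source is already labeled, the sample has at most $\beta n/k$ choices. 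Its destination is then either already labeled, giving $\mathcal O_r(1)$ choices, or new, giving at most $k$ choices. If instead its destination is already labeled and its source is new, the source cluster has at most $k$ choices and the sample at most $\beta n/k$.

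Each of the $r$ edges contributes a factor $\beta n/k$. Each of the at most $v-1\le r$ newly introduced vertices beyond the first contributes a factor $k$. So the total is $\mathcal O_r\bigl(k\cdot(\beta n/k)^r\cdot k^{v-1}\bigr)\le\mathcal O_r(\beta^r n^r k)$, using $v-1\le r$. The main obstacle is this bookkeeping: the $k^{v-1}$ factor from new vertices must be paired against the $k^{-r}$ from cluster sizes. The careful point is that the count must come out as $n^rk$ rather than $n^rk^{r+1}/k^r$ misassigned. I would check that the first vertex alone contributes the free factor $k$.

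\emph{Tight lower bound.} I take equal clusters of size $n/k\ge2$ and $k\ge r+1$, and use forward paths $c_0\to c_1\to\cdots\to c_r$ through distinct clusters. There are $k(k-1)\cdots(k-r)=\Theta_r(k^{r+1})$ choices of cluster sequence. For each edge $c_{t-1}\to c_t$, I pick a sample in $c_{t-1}$, giving $(n/k)^r$ choices. Every intermediate cluster $c_1,\dots,c_{r-1}$ loses one sample and gains one, cluster $c_0$ loses one, and $c_r$ gains one. Feasibility then holds because $n/k\ge2$ keeps $c_0$ nonempty. Distinct choices give distinct moves, so the count is $\Theta_r(k^{r+1}(n/k)^r)=\Theta_r(n^rk)$. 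Combined with the balanced upper bound at $\beta=1$, this gives~\eqref{eq:balanced-connected-theta}.
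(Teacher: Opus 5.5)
Your proof is correct. The trivial bound and the star-shaped lower bound are the paper's arguments. The paper applies the star count to any cluster with $n_a\ge r+1$, so your specific one-large-cluster-plus-singletons state is not needed, though it suffices for ``can yield.''

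The genuine difference is in the balanced upper bound. The paper splits moves by source diversity $s$, the number of distinct source clusters among the $r$ moved samples. It bounds the number of $r$-sample supports with diversity $s$ by $\mathcal O_r(\beta^r n^r k^{s-r})$ by summing $\binom{s\beta n/k}{r}$ over the $\binom{k}{s}$ source sets. It then bounds the destination assignments by $\mathcal O_r(k^{r+1-s})$, since at most $r+1-s$ destination-only clusters can appear.

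You instead fix an abstract connected shape on $v\le r+1$ slots. You count injective cluster labelings of the slots, at most $k^v$, times per-edge sample choices, at most $(\beta n/k)^r$. This gives $\mathcal O_r\bigl(k^{v-r}\beta^r n^r\bigr)$ per shape.

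Your route is more uniform. It also shows directly that tree-shaped transfer graphs ($v=r+1$) carry the $\beta^r n^r k$ order, while shapes with a cycle or parallel edges contribute only $\mathcal O_r(\beta^r n^r)$. The paper's route avoids enumerating graph shapes and makes the trade-off between source diversity and destination-only clusters explicit.

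One simplification: once the shape is fixed, the edge ordering you call the key step is unnecessary. The cluster choice is just an injection of the $v$ slots into $[k]$, and the $\mathcal O_r(1)$ choices among already-labeled vertices are absorbed into the finite shape count.

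For tightness, you use directed $r$-edge paths through $r+1$ distinct clusters, while the paper uses in-stars with $r$ distinct sources and one common outside destination. Both families are trees on $r+1$ clusters. Each is recoverable uniquely from its move, has size $\Theta_r(k^{r+1}(n/k)^r)=\Theta_r(n^rk)$, and needs only $k\ge r+1$ and $n/k\ge 2$.
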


\begin{proof}[Proof of Proposition~\ref{prop:connected-shell-size}]
The upper bound in~\eqref{eq:connected-upper-trivial} follows immediately from
$\sC_{=r}(\vz)\subseteq\sN_{=r}(\vz)$ and
$|\sN_{=r}(\vz)|\leq\binom{n}{r}(k-1)^r$.
To see that connectedness alone does not improve the worst-case order, suppose cluster $a$ has size $n_a\geq r+1$.
Choose any $r$ samples from $a$ and independently send each to any of the other $k-1$ clusters.
The source cluster remains nonempty, and all transfer edges share the source vertex $a$, so every such move is feasible and connected.
Consequently,
\[
|\sC_{=r}(\vz)|
\geq
\binom{n_a}{r}(k-1)^r.
\]
If $n_a=\Theta(n)$ and $r$ is fixed, this matches the upper bound up to $r$-dependent constants and gives the stated $\Theta_r(n^rk^r)$ order.

It remains to prove the balanced bound.
For an $r$-sample support $\sI\subseteq[n]$, let $s=|\{z_i:i\in\sI\}|$ be its number of distinct source clusters.
Fix $s\in\{1,\ldots,\min(r,k)\}$ and a set $\sA\subseteq[k]$ of $s$ source clusters.
Under $\beta$-balance,
\[
\sum_{a\in \sA}n_a\leq s\beta\frac{n}{k}.
\]
Hence the number of $r$-sample supports whose source clusters are contained in $\sA$ is at most
\[
\binom{\sum_{a\in \sA}n_a}{r}
\leq
\frac{1}{r!}\left(s\beta\frac{n}{k}\right)^r.
\]
Summing over the at most $\binom{k}{s}$ choices of $\sA$ shows that the number of supports with source diversity $s$ is
\begin{equation}
\mathcal{O}_r\!\left(\beta^r n^r k^{s-r}\right).
\label{eq:appendix-s-source-support-count}
\end{equation}

Now fix one such support.
A connected graph with $r$ edges has at most $r+1$ vertices, so in addition to its $s$ source clusters it can contain at most $r+1-s$ destination-only clusters.
If exactly $t$ destination-only clusters are used, choose them in at most $\binom{k-s}{t}$ ways.
Once these clusters are fixed, each destination lies among at most $s+t$ cluster labels.
Ignoring self-loop and connectivity restrictions only enlarges the count, so the number of destination assignments is at most
\[
\sum_{t=0}^{r+1-s}
\binom{k-s}{t}(s+t)^r
=
\mathcal{O}_r\!\left(k^{r+1-s}\right).
\]
Multiplying by~\eqref{eq:appendix-s-source-support-count} gives
$\mathcal{O}_r(\beta^r n^rk)$ connected candidates for each $s$.
Summing over at most $r$ possible values of $s$ proves~\eqref{eq:balanced-connected-upper}.

For sharpness, suppose all clusters have the same size $n/k\geq2$ and $k\geq r+1$.
Choose $r$ distinct source clusters, one sample from each, and one common destination cluster outside those sources.
Sending all selected samples to the common destination gives a feasible connected in-star.
The number of such moves is
\[
\binom{k}{r}\left(\frac{n}{k}\right)^r(k-r)
=
\Theta_r(n^rk),
\]
which matches the upper bound for $\beta=1$ and proves~\eqref{eq:balanced-connected-theta}.
\end{proof}

\section{Details for Exact Two-Point Local Search}
\label{app:pd2pls-details}

\subsection{Algorithms and Shared Search Routine}
\label{app:two-point-algorithms}
Algorithms~\ref{alg:pd2pls} and~\ref{alg:bps2pls} give the complete PD-2PLS and BPS-2PLS procedures in exact arithmetic.
Both run Lloyd initially and then alternate best-improvement one-point descent with a two-point search.
In the pseudocode, $\mathrm{Min\text{-}D\text{-}LO}$ denotes the one-point descent phase: scan all feasible moves using~\eqref{eq:one-point-gain}, accept a move with the largest positive gain, update the affected state, and repeat until no positive gain remains.
This phase reuses the supplied statistics and $\mD$ without restarting Lloyd.
This is the schedule used by \texttt{Model.myLocal2KM}; the standalone \texttt{Model.MinDLO\_KM} baseline resumes Lloyd after an accepted one-point move, and \texttt{Model.myLocal2KMv2} resumes Lloyd after an accepted pair.
Both call $\mathrm{ExactPairSearch}(\vz,\sI)$ (Algorithm~\ref{alg:exact-pair-search}), with $\sI=[n]$ for PD-2PLS and $\sI=\sB$ for BPS-2PLS.
The shared routine uses the counts and sample--centroid distances $\mD$ of the full current clustering; only the two moved sample indices are restricted to $\sI$.
Destination minima can be obtained using the Top-3 lists or the equivalent $\mathcal O(k)$ scans described in Appendix~\ref{app:sample-pair-ordering}.
The algorithms below compute sample-pair distances lazily, after $\sI$ is known, using the equivalent pair ordering in Appendix~\ref{app:sample-pair-ordering}. Each distance is reused across all destination choices for its pair; no full sample-pair cache is initialized.
Feasibility is checked jointly using Table~\ref{tab:two-point-configurations}, and cases with empty destination domains are skipped.

\begin{algorithm}[H]
\caption{Pair-Driven Two-Point Local Search (PD-2PLS)}
\label{alg:pd2pls}
\begin{algorithmic}[1]
\Require data $\sX$, cluster count $k$, initial clustering $\vz\in\sZ_k$
\State $(\vz,\mD,\{n_\ell,\vmu_\ell\})\gets\mathrm{Lloyd}(\sX,\vz)$
\While{true}
    \State $(\vz,\mD,\{n_\ell,\vmu_\ell\})
        \gets\mathrm{Min\text{-}D\text{-}LO}(\sX,\vz,\mD,\{n_\ell,\vmu_\ell\})$
    \State $(G_{\max},\sM_{\max})\gets\Call{ExactPairSearch}{\vz,[n]}$
    \If{$G_{\max}\leq0$}
        \State \Return $\vz$
    \EndIf
    \State Apply $\sM_{\max}$ to $\vz$; update affected $n_\ell,\vmu_\ell$ and columns of $\mD$
\EndWhile
\end{algorithmic}
\end{algorithm}

\begin{algorithm}[H]
\caption{Boundary-Point-Screened Two-Point Local Search (BPS-2PLS)}
\label{alg:bps2pls}
\begin{algorithmic}[1]
\Require data $\sX$, cluster count $k$, initial clustering $\vz\in\sZ_k$
\State $(\vz,\mD,\{n_\ell,\vmu_\ell\})\gets\mathrm{Lloyd}(\sX,\vz)$
\While{true}
    \State $(\vz,\mD,\{n_\ell,\vmu_\ell\})
        \gets\mathrm{Min\text{-}D\text{-}LO}(\sX,\vz,\mD,\{n_\ell,\vmu_\ell\})$
    \State Compute insertion scores $\mA$, Top-3 lists $\{\sK_j\}$,
        and center distances $\bm{\Delta}$
    \State Build the shared envelopes specified in Table~\ref{tab:bps-envelope-families}
    \State Compute $U_i$ for all $i\in[n]$ using~\eqref{eq:bps-sample-upper-bound}
    \State $\sB\gets\{i\in[n]:U_i>0\}$
    \State $(G_{\max},\sM_{\max})\gets\Call{ExactPairSearch}{\vz,\sB}$
    \If{$G_{\max}\leq0$}
        \State \Return $\vz$
    \EndIf
    \State Apply $\sM_{\max}$ to $\vz$; update affected $n_\ell,\vmu_\ell$ and columns of $\mD$
\EndWhile
\end{algorithmic}
\end{algorithm}

\begin{algorithm}[H]
\caption{$\mathrm{ExactPairSearch}(\vz,\sI)$}
\label{alg:exact-pair-search}
\begin{algorithmic}[1]
\Require data $\sX$, D-local clustering $\vz$, sample set $\sI\subseteq[n]$,
    full-clustering statistics and $\mD$
\State $G_{\max}\gets0$; $\sM_{\max}\gets\varnothing$
\ForAll{blocks covering unordered pairs $i<j$ with $i,j\in\sI$}
    \State Compute $p_{ij}=\lVert\vx_i-\vx_j\rVert^2$ for the current block
    \ForAll{pairs $i<j$ in the block}
        \State Find the best feasible connected move $\sM$ for this pair
            \Statex\hspace{\algorithmicindent}\hspace{\algorithmicindent}\hspace{\algorithmicindent}
            using Table~\ref{tab:two-point-configurations} and Appendix~\ref{app:sample-pair-ordering}
        \If{such a move exists and $G(\sM;\vz)>G_{\max}$}
            \State $G_{\max}\gets G(\sM;\vz)$; $\sM_{\max}\gets\sM$
        \EndIf
    \EndFor
    \State Discard the current distance block
\EndFor
\State \Return $(G_{\max},\sM_{\max})$
\end{algorithmic}
\end{algorithm}

\paragraph{Local state updates.}
For a one- or two-point move $\sM$, we use the pre-move statistics and the changes defined in Appendix~\ref{app:rpoint-proofs}.
For each affected cluster $\ell\in\sV(\sM)$, set
\[
n'_\ell=n_\ell+\Delta n_\ell(\sM),\qquad
\vmu'_\ell=\frac{n_\ell\vmu_\ell+\Delta\vs_\ell(\sM)}{n'_\ell},\qquad
D'_{i\ell}=\lVert\vx_i-\vmu'_\ell\rVert^2\quad(i\in[n]).
\]
All other counts, centroids, and columns of $\mD$ remain valid.
Updating the counts costs $\mathcal O(1)$, and updating the centroids costs $\mathcal O(d)$ for a one-point or connected two-point move; refreshing the two or at most three affected distance columns costs $\mathcal O(nd)$.
Locality refers to the affected columns, not to restricting their rows to moved samples. The same update is used inside Min-D-LO and after a pair is accepted.
Insertion scores, destination lists, and screening envelopes are refreshed from the current state before the next screening round; they must not be reused unchanged after centroid or count updates.

\subsection{Cluster-Update Primitives and Feasibility}
For a cluster index set $\sI_\ell$, recall its size $n_\ell$, sum $\vs_\ell$, and centroid $\vmu_\ell=\vs_\ell/n_\ell$.
Its within-cluster sum of squares is
\[
\operatorname{WCSS}(\sI_\ell)
=
\sum_{i\in \sI_\ell}\lVert \vx_i\rVert^2-\frac{\lVert \vs_\ell\rVert^2}{n_\ell}.
\]
Substituting the updated sufficient statistics after removing, inserting, or replacing samples gives
\begin{align*}
\operatorname{WCSS}(\sI_\ell)-\operatorname{WCSS}(\sI_\ell\setminus\{i\})
&=\frac{n_\ell}{n_\ell-1}D_{i\ell}=R_\ell(i),\\
\operatorname{WCSS}(\sI_\ell\cup\{i\})-\operatorname{WCSS}(\sI_\ell)
&=\frac{n_\ell}{n_\ell+1}D_{i\ell}=A_\ell(i),\\
\operatorname{WCSS}(\sI_\ell)-\operatorname{WCSS}(\sI_\ell\setminus\{i,j\})
&=\frac{n_\ell(D_{i\ell}+D_{j\ell})-p_{ij}}{n_\ell-2}=R_\ell(i,j),\\
\operatorname{WCSS}(\sI_\ell\cup\{i,j\})-\operatorname{WCSS}(\sI_\ell)
&=\frac{n_\ell(D_{i\ell}+D_{j\ell})+p_{ij}}{n_\ell+2}=A_\ell(i,j),\\
\operatorname{WCSS}(\sI_\ell)-\operatorname{WCSS}((\sI_\ell\setminus\{i\})\cup\{j\})
&=D_{i\ell}-D_{j\ell}+\frac{p_{ij}}{n_\ell}=T_\ell(i,j).
\end{align*}
Combining the contributions of the affected clusters yields the six gains in Table~\ref{tab:two-point-configurations}.

For $u=(i:z_i\to a)$, both same-source configurations require $n_{z_i}>2$. In a different-source/same-destination move, neither source is replenished, so $n_{z_i}>1$ and $n_{z_j}>1$ are required. A forward chain has $z_j=a$ and $b\notin\{z_i,a\}$: cluster $a$ receives $\vx_i$ while losing $\vx_j$, and only $n_{z_i}>1$ is needed. In a backward chain, $z_j\notin\{z_i,a\}$ and $b=z_i$, so only $n_{z_j}>1$ is required. A swap has $z_j=a$ and $b=z_i$ and preserves both source sizes. These are exactly the size conditions in Table~\ref{tab:two-point-configurations}. Any case requiring an unavailable third label is omitted.

\subsection{Exact Top-3 Destination Queries}
We use $\max\varnothing=-\infty$ and $\min\varnothing=+\infty$; the search routine clips its returned gain at zero, and ties may be broken arbitrarily.
The formula for $A_\ell(i)$ defines a ranking score for every $i,\ell$; its interpretation as an insertion cost requires $i\notin\sI_\ell$.
Computing the insertion-score matrix from $\mD$ costs $\mathcal O(nk)$ time, and selecting each sample's three smallest scores costs $\mathcal O(k)$, hence $\mathcal O(nk)$ in total.
Let $\sK_j$ contain the $\min\{3,k\}$ labels with the smallest insertion scores over all clusters, including $z_j$ in the ranking. A query excludes $\{z_i,a\}$, which contains $z_j$ in both configurations under consideration. If $k\leq3$, all labels are cached. If $k>3$, at most two of the three cached labels are excluded, so at least one remains. Whenever the admissible domain is nonempty, every uncached score is at least as large as each cached score; hence an admissible minimizer lies in $\sK_j$. This justifies constant-time destination queries using Top-3 lists, including ties. When the admissible domain is empty, the corresponding configuration is absent.
The screening implementation obtains these lists by sorting each row of $\mA$, which costs $\mathcal O(nk\log k)$ rather than linear-time selection. Since $k\le n$, this is covered by the $\mathcal O(nk\log n)$ screening bound.

\subsection{Equivalent Sample-Pair Ordering}
\label{app:sample-pair-ordering}
Algorithm~\ref{alg:exact-pair-search} regroups the anchored search of Section~\ref{sec:exact-two-point-search} by unordered sample pairs. Fix $i<j$ in $\sI$ and abbreviate their source labels as $s=z_i$ and $c=z_j$. If $s=c$, the best distinct-destination insertion cost is
\[
\min_{\substack{a,b\in[k]\setminus\{s\}\\a\neq b}}
\bigl(A_a(i)+A_b(j)\bigr).
\]
This minimum can be recovered from the constant-size product $(\sK_i\setminus\{s\})\times(\sK_j\setminus\{s\})$ subject to $a\neq b$.
To see this, start with an optimal feasible pair $(a,b)$. With $b$ fixed, the excluded labels for $i$ are $\{s,b\}$, so the Top-3 query yields $a'\in\sK_i\setminus\{s,b\}$ with $A_{a'}(i)\leq A_a(i)$.
Fixing $a'$ then yields $b'\in\sK_j\setminus\{s,a'\}$ with $A_{b'}(j)\leq A_b(j)$.
The pair $(a',b')$ remains feasible and lies in the cached product without increasing the total cost.
If no distinct pair exists, the minimum is $+\infty$.

For $s=c$, the two configuration optima are therefore
\[
\max_{a\neq s}\{R_s(i,j)-A_a(i,j)\},
\qquad
R_s(i,j)-\min_{\substack{a,b\in[k]\setminus\{s\}\\a\ne b}}\bigl(A_a(i)+A_b(j)\bigr),
\]
subject to $n_s>2$. If $s\neq c$, the common-destination optimum is
\[
\max_{a\notin\{s,c\}}
\{R_s(i)+R_c(j)-A_a(i,j)\},
\]
and the two chain orientations and swap have gains
\begin{align*}
R_s(i)+T_c(j,i)-\min_{b\notin\{s,c\}}A_b(j),\\
R_c(j)+T_s(i,j)-\min_{a\notin\{s,c\}}A_a(i),\\
T_s(i,j)+T_c(j,i),
\end{align*}
respectively, subject to the joint feasibility conditions above. Each chain query excludes the two source labels $\{s,c\}$, so the same Top-3 argument applies. Only the common-destination cases scan $\mathcal O(k)$ labels; all other destination optimizations take constant time. Given $p_{ij}$, destination optimization therefore costs $\mathcal O(k)$ per unordered pair. Computing $p_{ij}$ once adds $\mathcal O(d)$, rather than multiplying the distance cost by the number of destinations. This ordering returns the same maximum positive gain as anchored enumeration for every $\sI\subseteq[n]$.

The implemented gain kernel instead scans the admissible labels for each chain and, for a same-source pair, records the two smallest insertion costs for each sample during the common-destination scan. If the two best labels coincide, it compares the two choices that use one runner-up. This computes the same distinct-destination minimum without requiring precomputed Top-3 lists in the pair-search routine, and still takes $\mathcal O(k)$ time per pair.

\subsection{Exactness and Termination}
At a D-local state, every improving exact two-point move is connected by Theorem~\ref{thm:connected-r-move}. Choose either of its transfers as $u=(i:z_i\to a)$ and write the other as $v=(j:z_j\to b)$. If $z_j=z_i$, then $b=a$ or $b\notin\{z_i,a\}$. If $z_j=a$, then $b=z_i$ or $b\notin\{z_i,a\}$. If $z_j\notin\{z_i,a\}$, connectedness requires $b\in\{z_i,a\}$. These alternatives are exactly the six rows of Table~\ref{tab:two-point-configurations}, including their joint feasibility conditions.

Algorithm~\ref{alg:exact-pair-search} with $\sI=[n]$ visits every unordered sample pair. The destination optimizations in Appendix~\ref{app:sample-pair-ordering} cover all six connected configurations, including both chain orientations, and return a feasible move with gain at least that of any other connected move for the pair. Hence the maximum positive gain returned equals the maximum gain over $\sN_{=2}(\vz)$, clipped at zero. If it returns zero, no improving exact two-point move exists, and D-locality already excludes improving one-point moves. The clustering is therefore two-point locally optimal by Definition~\ref{def:r-point-locality}.

After an accepted pair, PD-2PLS (Algorithm~\ref{alg:pd2pls}) locally updates the affected cluster statistics and distance columns, then resumes Min-D-LO descent from that state. Every accepted one- or two-point move strictly decreases WCSS, and there are finitely many feasible clusterings. Hence the algorithm terminates. BPS-2PLS has the same guarantee because screening preserves every improving pair; its envelopes are rebuilt after one-point descent reaches the next D-local state.

\subsection{Distance Evaluation, Time, and Memory}
\label{app:distance-policies}
The sample--centroid cache $\mD$ and the sample--sample distances $p_{ij}$ have different roles. Lloyd initializes $\mD$ and the cluster statistics, which are then maintained through local updates during Min-D-LO and after accepted pairs. Screening uses these quantities, the insertion scores, and $\bm{\Delta}$; it does not evaluate any $p_{ij}$. In particular, the diameter $p_{\max}$ in Section~\ref{sec:candidate-size} is only an analysis quantity and is not computed by BPS. Sample--sample distances are needed only for exact gain evaluation after screening.

\paragraph{Full caching.}
One may compute all $p_{ij}$ once in $\mathcal O(n^2d)$ time and store them in $\mathcal O(n^2)$ space. They remain valid across clustering updates because the samples do not change. Each later lookup costs $\mathcal O(1)$, yielding $\mathcal O(k^2d+nk\log n+m^2k)$ screening-and-search time after one-point descent. The quadratic preprocessing and storage remain even when screening retains very few samples.

\paragraph{Lazy block evaluation.}
Algorithms~\ref{alg:pd2pls}--\ref{alg:exact-pair-search} instead compute distances only within the current search set. For BPS-2PLS this set is $\sB$, so at most $\binom m2$ distinct pair distances are required. A block with at most $w$ rows of retained samples and at most $m$ retained columns uses $\mathcal O(wm)$ distance storage. Processing the blocks, reusing each distance across all destination choices and discarding a block after use, costs $\mathcal O(m^2d)$ time. The resulting screening-and-search cost after one-point descent is
\begin{equation}
\mathcal O\bigl(k^2d+nk\log n+m^2(d+k)\bigr),\qquad m=|\sB|,
\label{eq:bps-round-cost}
\end{equation}
with no $\mathcal O(n^2d)$ preprocessing. Distances may be recomputed in later rounds; the per-round bound includes this work.

\begin{table}[htbp]
\centering
\small
\caption{Distance-policy costs for BPS-2PLS. Screening-and-search time excludes one-point descent and accepted-move state updates; peak memory includes the input data and clustering workspace. The block row limit is $w$.}
\label{tab:distance-policy-costs}
\setlength{\tabcolsep}{4pt}
\begin{tabular}{@{}lcc@{}}
\toprule
 & Full sample-pair cache & Lazy candidate-pair blocks \\
\midrule
Pair-distance preprocessing & $\mathcal O(n^2d)$ & None \\
Screening and search per round & $\mathcal O(k^2d+nk\log n+m^2k)$ & $\mathcal O(k^2d+nk\log n+m^2(d+k))$ \\
Peak memory & $\mathcal O(nd+nk+k^2+n^2)$ & $\mathcal O(nd+nk+k^2+wm)$ \\
\bottomrule
\end{tabular}
\end{table}

The common storage comprises the $nd$ input coordinates, the $nk$ sample--centroid and insertion-score entries, and the $k^2$ centroid-distance entries; storing all envelopes requires at most $\mathcal O(nk)$ additional space. A copied candidate-data array uses $\mathcal O(md)\le\mathcal O(nd)$ space, already covered in Table~\ref{tab:distance-policy-costs}. With fixed $w$, lazy evaluation avoids quadratic sample-pair storage even if $m=n$. The two policies evaluate the same gains and have the same exact-arithmetic optimality guarantee.

\paragraph{Total runtime.}
Let $T_L$ count Lloyd iterations, $T_1$ count complete one-point gain scans across all Min-D-LO phases, and $T_2$ count screening-and-search rounds. These counts include the unsuccessful scans and the final search used to certify termination. Initialization and construction of $\mD$ take $\mathcal O(ndk)$, absorbed into $\mathcal O(T_Lndk)$ since $T_L\ge1$. Each one-point scan costs $\mathcal O(nk)$ and accepts at most one move, whose local update costs $\mathcal O(nd)$. Each two-point round rebuilds the screening quantities in $\mathcal O(k^2d+nk\log n)$ time and, if successful, adds a local update costing $\mathcal O(nd)$. If $m_t$ samples survive in round $t$, the lazy implementation therefore takes
\begin{equation}
\mathcal O\!\left(T_Lndk+T_1(nd+nk)+\sum_{t=1}^{T_2}\bigl[nd+k^2d+nk\log n+m_t^2(d+k)\bigr]\right).
\label{eq:bps-total-cost-varying}
\end{equation}
Setting $m=\max_{1\le t\le T_2}m_t$ gives~\eqref{eq:bps-total-cost}; an average candidate count cannot replace this maximum in that worst-case bound. The $\mathcal O(nd)$ update allowance is unnecessary in unsuccessful scans or rounds but gives a uniform upper bound. These counts refer to individual scans and rounds, not to calls of a complete Min-D-LO solver; no convergence-rate bound on $T_L,T_1,T_2$ is asserted.

Without screening, PD-2PLS uses all $n$ samples and takes $\mathcal O(T_Lndk+T_1(nd+nk)+T_2n^2(d+k))$ time with lazy distances. Under full caching, the bound includes the one-time $\mathcal O(n^2d)$ preprocessing term, with $m_t^2(d+k)$ replaced by $m_t^2k$ in the BPS-2PLS bound. Under Theorem~\ref{thm:candidate-logarithmic}, the lazy-distance term is $\mathcal O_{\mathbb P}(d\log^2 n)$, so the screening-and-search cost after one-point descent, including an accepted pair's $\mathcal O(nd)$ update, remains $\mathcal O_{\mathbb P}(n\log n)$ for fixed $d,k$.

\section{Details for Safe Boundary-Point Screening}
\label{app:bps-details}

\subsection{Configuration Coefficients}
Table~\ref{tab:bps-separable-gains} gives the coefficients in~\eqref{eq:bps-separable-gain}, obtained from the gains in Table~\ref{tab:two-point-configurations} after optimizing any free partner destination.
The first four configurations use $e=(\tau,s,a)$, different sources with the same destination use $e=(\tau,a)$, and the backward chain uses $e=(\tau,s)$.

\begin{table}[t]
\centering
\footnotesize
\setlength{\tabcolsep}{3pt}
\caption{Separable gains for $u=(i:s\to a)$, with $s=z_i$ and envelope key $e=e(\tau,u)$.}
\label{tab:bps-separable-gains}
\begin{tabular}{cccc}
\toprule
Configuration $\tau$ & $C_\tau(u)$ & $Q_e(j)$ & $\eta_e$ \\
\midrule
Same source same destination
& $\frac{n_s}{n_s - 2} D_{is} - \frac{n_a}{n_a + 2} D_{ia}$
& $\frac{n_s}{n_s - 2} D_{js} - \frac{n_a}{n_a + 2} D_{ja}$
& $-\frac{1}{n_s - 2} - \frac{1}{n_a + 2}$ \\
\addlinespace
Same source, different destinations
& $\frac{n_s}{n_s - 2} D_{is} - A_a(i)$
& $\frac{n_s}{n_s - 2} D_{js} - A_{b^\star(j;s,a)}(j)$
& $-\frac{1}{n_s - 2}$ \\
\addlinespace
Forward chain
& $R_s(i) - D_{ia}$
& $D_{ja} - A_{b^\star(j;s,a)}(j)$
& $\frac{1}{n_a}$ \\
\addlinespace
Swap
& $D_{is}-D_{ia}$
& $D_{ja}-D_{js}$
& $\frac{1}{n_s}+\frac{1}{n_a}$ \\
\addlinespace
Different sources, same destination
& $R_s(i)-\frac{n_a}{n_a+2}D_{ia}$
& $R_{z_j}(j)-\frac{n_a}{n_a+2}D_{ja}$
& $-\frac{1}{n_a+2}$ \\
\addlinespace
Backward chain
& $D_{is}-A_a(i)$
& $R_{z_j}(j)-D_{js}$
& $\frac{1}{n_s}$ \\
\bottomrule
\end{tabular}
\end{table}

\subsection{Envelope Construction and Queries}
We use a standard construction for line upper envelopes~\cite{deberg2008computational}.
For a fixed key $e$, we sort the $|\widetilde{\sJ}_e|$ lines by increasing slope, retaining only the largest intercept among lines with equal slope.
We then scan the sorted lines while maintaining a stack of active lines and their intersection breakpoints, removing redundant lines until the breakpoints are strictly increasing.
Each line is inserted and removed at most once, so sorting and envelope construction take $\mathcal O(|\widetilde{\sJ}_e|\log|\widetilde{\sJ}_e|)$ time and $\mathcal O(|\widetilde{\sJ}_e|)$ space.
An individual query at $\rho_e(i)$ can use binary search on the breakpoints in $\mathcal O(\log|\widetilde{\sJ}_e|)$ time, with constant-time handling of envelopes with at most one line. The implementation instead sorts the query radii for each envelope and scans its breakpoints monotonically, then restores the original query order. With $q$ queries this takes $\mathcal O(q\log q+q+|\widetilde{\sJ}_e|)$ time; the total number of queries and lines across families is $\mathcal O(nk)$, so the same $\mathcal O(nk\log n)$ aggregate bound holds.

\subsection{Cached Pivot Radii}
The triangle and reverse triangle inequalities give
\[
|\rho_e(i)-\rho_e(j)|\le\lVert\vx_i-\vx_j\rVert\le\rho_e(i)+\rho_e(j).
\]
Using the upper bound for $\eta_e>0$ and the lower bound for $\eta_e<0$ yields~\eqref{eq:bps-pivot-bound}.
With these pivot choices, the radii needed for line construction and envelope queries can be computed from the cached distances $\mD$.
We first compute the matrix $\bm{\Delta}=(\Delta_{sa})$, where $\Delta_{sa}=\lVert\vmu_s-\vmu_a\rVert^2$, for all cluster pairs in $\mathcal O(k^2d)$ time.
For any sample index $h$, the required squared radius is
\begin{equation}
\rho_e(h)^2 = \begin{cases}
D_{h\ell}, & \vo_e=\vmu_\ell,\\[2pt]
\frac12D_{hs}+\frac12D_{ha}-\frac14\Delta_{sa}, & \vo_e=(\vmu_s+\vmu_a)/2.
\end{cases}
\label{eq:bps-midpoint-radius}
\end{equation}

\section{Proofs for Candidate-Set Size Guarantees}
\label{app:candidate-size}

We first prove Proposition~\ref{prop:candidate-near-localization} and give the construction with $m=n$.
We then bound the number of samples in $\sU(\vz)$ using a cover by slabs and prove Theorem~\ref{thm:candidate-logarithmic} for the two distribution classes.

\subsection{Deterministic Localization of Retained Samples}
\label{app:candidate-localization}

Fix a D-local state $\vz$ with $n_{\min}=\min_\ell n_\ell\ge4$, and let $p_{\max}=\max_{i,j\in[n]}p_{ij}$.
We analyze the exact-arithmetic certificate in~\eqref{eq:bps-sample-upper-bound} with the pivots and partner supersets in Table~\ref{tab:bps-envelope-families}.
The localization argument is deterministic.

\begin{proof}[Proof of Proposition~\ref{prop:candidate-near-localization}]
Let $g_i^{\max}=\max_{a\ne z_i}g_i(a)$ be the best one-point gain for sample $i$.
The condition $n_{\min}\ge4$ ensures that every constituent one-point move considered below is feasible, and D-locality makes its gain nonpositive.
We first bound the screening certificate by $g_i^{\max}+22p_{\max}/n_{\min}$, then translate this bound into a squared-distance margin.

For a cluster $\ell$, define the interactions
\begin{align}
I_\ell^-(i,j)
&=\frac{n_\ell(D_{i\ell}+D_{j\ell})}{(n_\ell-2)(n_\ell-1)}
  -\frac{p_{ij}}{n_\ell-2},\notag\\
I_\ell^+(i,j)
&=\frac{n_\ell(D_{i\ell}+D_{j\ell})}{(n_\ell+1)(n_\ell+2)}
  -\frac{p_{ij}}{n_\ell+2},\notag\\
J_\ell(i,j)
&=\frac{p_{ij}}{n_\ell}
  -\frac{D_{i\ell}}{n_\ell-1}
  -\frac{D_{j\ell}}{n_\ell+1}.
\label{eq:candidate-interactions}
\end{align}
The cluster-update identities in Proposition~\ref{prop:cluster-update-expressions} give
\[
I_\ell^-=R_\ell(i,j)-R_\ell(i)-R_\ell(j),\qquad
I_\ell^+=A_\ell(i)+A_\ell(j)-A_\ell(i,j),
\]
and $J_\ell(i,j)=T_\ell(i,j)-R_\ell(i)+A_\ell(j)$.
Therefore, for $u=(i:s\to a)$ and $v=(j:c\to b)$ with $s=z_i$ and $c=z_j$, each exact gain has the form
\begin{equation}
G(u,v)=g_i(a)+g_j(b)+I(u,v),
\label{eq:candidate-gain-decomposition}
\end{equation}
where Table~\ref{tab:candidate-interaction-bounds} lists $I(u,v)$.
When the partner destination is free, choose the optimizing destination from Section~\ref{sec:exact-two-point-search}; the identity still holds.

\begin{table}[ht]
\centering
\small
\caption{Interactions and their upper bounds after radial relaxation, for the anchor $i:s\to a$. The last column bounds the excess over the two one-point gains.}
\label{tab:candidate-interaction-bounds}
\begin{tabular}{@{}lll@{}}
\toprule
Configuration & Interaction $I(u,v)$ & Excess upper bound \\
\midrule
Same source, same destination
& $I_s^-(i,j)+I_a^+(i,j)$ & $(16/n_s+6/n_a)p_{\max}$ \\
Same source, different destinations
& $I_s^-(i,j)$ & $16p_{\max}/n_s$ \\
Forward chain
& $J_a(j,i)$ & $5p_{\max}/n_a$ \\
Swap
& $J_s(i,j)+J_a(j,i)$ & $5(1/n_s+1/n_a)p_{\max}$ \\
Different sources, same destination
& $I_a^+(i,j)$ & $6p_{\max}/n_a$ \\
Backward chain
& $J_s(i,j)$ & $5p_{\max}/n_s$ \\
\bottomrule
\end{tabular}
\end{table}

To verify the last column, recall that $p_{\max}=\max_{i,j\in[n]}p_{ij}$.
Since each empirical center is a cluster mean, convexity gives
\[
D_{i\ell}\le\frac1{n_\ell}\sum_{h\in \sI_\ell}p_{ih}\le p_{\max}.
\]
The condition $n_\ell\ge n_{\min}\ge4$ ensures
$(n_\ell-1)^{-1}\le2/n_\ell$ and $(n_\ell-2)^{-1}\le2/n_\ell$.
Using these inequalities and discarding negative terms in~\eqref{eq:candidate-interactions} gives
\[
I_\ell^-\le\frac{8p_{\max}}{n_\ell},\qquad
I_\ell^+\le\frac{2p_{\max}}{n_\ell},\qquad
J_\ell\le\frac{p_{\max}}{n_\ell}.
\]
Next, fix an envelope key $e$ and abbreviate $\rho_h=\rho_e(h)=\lVert\vx_h-\vo_e\rVert$.
A center pivot gives $\rho_h^2=D_{h\ell}\le p_{\max}$, while a midpoint pivot gives
\[
\rho_h^2=\left\lVert\vx_h-\frac{\vmu_s+\vmu_a}{2}\right\rVert^2
\le\frac{D_{hs}+D_{ha}}2\le p_{\max}.
\]
The radial replacement of $\eta_e p_{ij}$ has slack
\begin{align}
\epsilon_e(i,j)
&=\eta_e(\rho_i^2+\rho_j^2-p_{ij})+2|\eta_e|\rho_i\rho_j,\notag\\
0\le\epsilon_e(i,j)&\le4|\eta_e|\rho_i\rho_j\le4|\eta_e|p_{\max}.
\label{eq:candidate-radial-slack}
\end{align}
Indeed, $\rho_i^2+\rho_j^2-p_{ij}=2(\vx_i-\vo_e)^\top(\vx_j-\vo_e)$, so the bounds follow from Cauchy--Schwarz.
Using the coefficients $\eta_e$ in Table~\ref{tab:bps-separable-gains} and
$(n_\ell-2)^{-1}\le2/n_\ell$
yields the bound in every row of Table~\ref{tab:candidate-interaction-bounds}.
Each such bound is at most $22p_{\max}/n_{\min}$.

The same bounds cover the extra partners introduced by envelope sharing.
In the two same-source configurations, the self-line $j=i$ satisfies the same algebraic identities with $p_{ii}=0$.
Its two one-point terms are $g_i(a)$ and either $g_i(a)$ or $g_i(b)$, where $b\notin\{s,a\}$.
For different sources with the same destination, the enlarged set also includes $j\in\sI_s$; its partner term is $g_j(a)$.
For a backward chain, the extra partners have $j\in \sI_a$ and partner term $g_j(s)$.
All of these one-point gains are feasible and nonpositive.
Configurations requiring a third label are omitted if that label is unavailable.
Thus every relaxed gain $C_\tau(u)+\eta_e\rho_i^2+L_{e,j}(\rho_i)$ is at most $g_i^{\max}+22p_{\max}/n_{\min}$. Taking maxima over partners, configurations, and anchor destinations yields
\begin{equation}
U_i\le g_i^{\max}+\frac{22p_{\max}}{n_{\min}}.
\label{eq:candidate-margin-bridge}
\end{equation}
These algebraic checks are needed because a shared envelope also contains lines that do not represent feasible two-point moves.

If $i\in\sB(\vz)$ and $s=z_i$, there is an $a\ne s$ with
$g_i(a)=g_i^{\max}>-22p_{\max}/n_{\min}$.
The difference between this gain and the fixed-center squared-distance difference is
\begin{equation}
g_i(a)-(D_{is}-D_{ia})
=\frac{D_{is}}{n_s-1}+\frac{D_{ia}}{n_a+1}
\in[0,3p_{\max}/n_{\min}].
\label{eq:candidate-distance-correction}
\end{equation}
Applying~\eqref{eq:candidate-distance-correction} with any destination $b\ne s$, D-locality gives
\[
D_{is}-D_{ib}\le g_i(b)\le0.
\]
Thus the assigned center is a nearest center, and $D_{(1)}(\vx_i)=D_{is}$.
For the selected destination $a$, the lower gain bound and~\eqref{eq:candidate-distance-correction} give $D_{ia}-D_{is}<25p_{\max}/n_{\min}$.
Consequently,
\[
\begin{aligned}
D_{(2)}(\vx_i)-D_{(1)}(\vx_i)
&=\min_{b\ne s}(D_{ib}-D_{is})\\
&\le D_{ia}-D_{is}<\frac{25p_{\max}}{n_{\min}}.
\end{aligned}
\]
Hence $\vx_i\in\sU(\vz)$, proving Proposition~\ref{prop:candidate-near-localization}.
Taking the contrapositive, any sample outside this region has $U_i\le0$.
The inclusion imposes a necessary condition for retention, so samples inside the region may still be discarded.
\end{proof}

\subsection{A Worst-Case Construction}
\label{app:candidate-worst-case}

We construct a deterministic family of bounded, balanced D-local states with $m=n$.
For every even $N\ge4$, set $a^2=1/[2(N-1)]$ and form two clusters in $\R^2$:
\[
\sI_1:\quad \frac N2\text{ copies each of }(-a,1)\text{ and }(-a,-1),
\qquad
\sI_2:\quad \frac N2\text{ copies each of }(a,1)\text{ and }(a,-1).
\]
The instance is bounded, has $k=2$ and $n=2N$, and both cluster sizes equal $N$.
The centroids are $\vmu_1=(-a,0)^\top$ and $\vmu_2=(a,0)^\top$.
Each sample has squared distance $1$ to its own centroid and $1+4a^2$ to the other, so every one-point gain is
\[
\frac{N}{N-1}-\frac{N}{N+1}(1+4a^2)=0.
\]
The displayed state is therefore D-local.
For samples $\vx_i=(-a,1)^\top$ and $\vx_j=(a,-1)^\top$, the swap $u=(i:1\to2)$, $v=(j:2\to1)$ has gain
\[
G(u,v)
=2-2(1+4a^2)+\frac2N(4+4a^2)
=\frac4N>0.
\]
The same calculation applies with the second coordinates reversed.
Every sample participates in an improving swap, so any sample-level screening rule preserving all improving pairs must retain all $n$ samples.
In particular, BPS has $m=n$ on this sequence of instances.
The support locations in this deterministic family vary with $n$.
The construction rules out a distribution-free sublinear bound, while Theorem~\ref{thm:candidate-logarithmic} concerns independent sampling from a fixed distribution satisfying its stated conditions.

\subsection{Uniform Slab Counting and Automatic Center Separation}
\label{app:candidate-counting}

Now let $\rvx_1,\ldots,\rvx_n$ be i.i.d.\ random vectors from a fixed distribution $P$, and consider the state class $\sZ_{n,\alpha}$ in~\eqref{eq:candidate-state-class}.
We control sample counts in $\sU(\vz)$ by covering it with slabs.
We first establish a counting bound that holds for all slabs simultaneously.
This uniformity allows both the centers and the covering slabs to depend on the observations.

Let $\sA_d$ be the class of Euclidean slabs, let
$N_n(\sA)=\sum_{i=1}^n\1_{\{\rvx_i\in \sA\}}$, and set
\begin{equation}
\Gamma_d(n,\delta)
=1+2(d+1)\log(2n+1)+\log(3/\delta).
\label{eq:candidate-vc-complexity}
\end{equation}
The following relative counting bound is uniform over slab locations, directions, and widths, including those chosen from the observations.

\begin{proposition}[Uniform slab counts]
\label{prop:candidate-slab-counts}
For i.i.d.\ random vectors from $P$ and every $\delta\in(0,1)$, with probability at least $1-\delta$,
\begin{equation}
N_n(\sA)\le2nP(\sA)+100\Gamma_d(n,\delta)
\qquad\text{for all }\sA\in\sA_d.
\label{eq:candidate-relative-count}
\end{equation}
\end{proposition}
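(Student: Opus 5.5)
The plan is to derive this relative (multiplicative) uniform deviation bound from a standard VC ratio inequality combined with a growth-function bound for slabs. We view each slab $\sA\in\sA_d$ as an intersection of two half-spaces, $\{\vx:t_1\le\vv^\top\vx\le t_2\}$. The class $\mathcal H_d$ of closed or open half-spaces in $\R^d$ has VC dimension $d+1$. By Sauer's lemma, its shatter coefficient on $N$ points is at most $(N+1)^{d+1}$ (in crude form, $\sum_{i\le d+1}\binom{N}{i}\le(N+1)^{d+1}$). Intersections of two members of $\mathcal H_d$ therefore realize at most $(N+1)^{2(d+1)}$ distinct traces on $N$ points. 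Taking $N=2n$ for the symmetrized double sample, the shatter coefficient of $\sA_d$ is at most $(2n+1)^{2(d+1)}$. Its logarithm, $2(d+1)\log(2n+1)$, is exactly the term appearing in $\Gamma_d(n,\delta)$.

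Next we invoke a relative deviation inequality of Vapnik--Chervonenkis / Anthony--Shawe-Taylor type for the empirical measure $P_n=N_n/n$. One convenient form states that, with probability at least $1-\delta$, simultaneously for all sets $\sA$ in a class $\mathcal C$,
\[
P_n(\sA)\le P(\sA)+2\sqrt{P_n(\sA)\,\frac{\log S_{\mathcal C}(2n)+\log(4/\delta)}{n}},
\]
or its variant with $P(\sA)$ under the root. If the reader prefers a self-contained route, we would prove it via ghost-sample symmetrization plus random-swap conditioning and a union bound over the at most $S_{\mathcal C}(2n)$ traces. We set $\epsilon=(\log S(2n)+\log(3/\delta)+1)/n$, so that $n\epsilon\le\Gamma_d(n,\delta)$; the extra $1$ and the $\log 3$ absorb the constants from symmetrization.

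The conversion to the stated linear form is then elementary. Write $x=P_n(\sA)$ and $p=P(\sA)$, so that $x\le p+c\sqrt{x\epsilon}$. By AM--GM, $c\sqrt{x\epsilon}\le x/2+c^2\epsilon/2$. This gives $x\le 2p+c^2\epsilon$, and multiplying by $n$ yields $N_n(\sA)\le 2nP(\sA)+c^2n\epsilon\le 2nP(\sA)+100\,\Gamma_d(n,\delta)$, provided $c^2\le 100$. That constant is generous for every standard version of the ratio inequality. Uniformity holds because the event is defined over the whole class, so data-dependent slabs are covered automatically. Measurability is fine since the supremum can be reduced to countably many rational slabs by a closure argument, or handled by the trace count directly.

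The main obstacle is bookkeeping rather than ideas. We must pick a version of the relative deviation inequality whose symmetrization constants (the $2n$ double sample, the factor $4$ or $3$ inside the log, and whether $P$ or $P_n$ sits under the root) fit inside $\Gamma_d$ with the constant $100$. If the $P$-under-root version is used, we would instead solve $x\le p+c\sqrt{p\epsilon}\le 2p+c^2\epsilon/4$, which is even easier. We would also check that slabs allow degenerate widths and both open and closed boundaries without breaking the VC count; allowing both types at most doubles the count per half-space, and this is absorbed into the constant.
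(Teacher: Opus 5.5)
Your proposal is correct, but after the shared first step it follows a different route from the paper. Both arguments use the same growth bound: at most $(2n+1)^{2(d+1)}$ slab traces on the pooled double sample.

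\textbf{Your route.} You treat symmetrization as a black box. You cite the Vapnik/Anthony--Shawe-Taylor relative deviation inequality in its $\sqrt{P_n}$-normalized form,
$P_n(\sA)\le P(\sA)+2\sqrt{P_n(\sA)\,(\log S(2n)+\log(4/\delta))/n}$.
AM--GM then converts this into $N_n(\sA)\le2nP(\sA)+4\Gamma_d(n,\delta)$, using $\log(4/\delta)\le1+\log(3/\delta)$.

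\textbf{The paper's route.} The paper proves the factor-$2$ statement directly, and that factor is what keeps its argument elementary.
\begin{itemize}
\item If $N_n(\sA)>2nP(\sA)+t$, Markov's inequality alone shows that the ghost count satisfies $N_n'(\sA)\le\frac34N_n(\sA)$ with conditional probability above $1/3$.
\item Hence the failure probability is at most three times that of the event $\{N_n(\sA)>t,\ N_n'(\sA)\le\frac34N_n(\sA)\}$, which no longer involves $P$.
\item After random swaps, the count difference is a sum of at most $S=N_n+N_n'$ symmetric signs. On that event, $S>t$ and the difference is at least $S/7$, so Hoeffding gives $e^{-S/98}\le e^{-t/98}$ per pooled trace.
\item A union bound with $t=100\Gamma_d(n,\delta)$ finishes.
\end{itemize}

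\textbf{Comparison.} Your route is shorter and gives a much smaller constant (about $4$ instead of $100$). However, it inherits the exact form, constants, and hypotheses of an external theorem, which you must match. The paper's route is self-contained and needs no variance or ratio normalization. It also avoids the $P_n(\sA)=0$ edge case of the ratio form.

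\textbf{One correction.} Discard the fallback you mention: an upper-deviation bound $P_n(\sA)\le P(\sA)+c\sqrt{P(\sA)\epsilon}$ with no additive term. It cannot hold uniformly over slabs. A slab through a sample point can be made thin enough that $P(\sA)$ is arbitrarily small (zero for a degenerate width under a density), while $N_n(\sA)\ge1$. Vapnik's $\sqrt{P}$-normalized inequality controls only the lower deviation $P(\sA)-P_n(\sA)$. For the upper deviation, use the $\sqrt{P_n}$ form you cited first, or a Bernstein-type bound with an extra $O(\epsilon)$ term. Either one still yields $N_n(\sA)\le2nP(\sA)+O(\Gamma_d(n,\delta))$.
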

\begin{proof}
Affine halfspaces in $\R^d$ have at most $(M+1)^{d+1}$ distinct traces on $M$ points.
A slab is the intersection of two halfspaces, so $\sA_d$ has at most $(M+1)^{2(d+1)}$ traces.
Introduce an independent sample $\rvx'_1,\ldots,\rvx'_n$, and let $\rc'$ count its observations in $\sA$.
If a slab selected from the original sample satisfies $\rc:=N_n(\sA)>2nP(\sA)+t$, then Markov's inequality, conditional on this sample and the selected slab, gives
\[
\mathbb P\{\rc'\le3\rc/4\mid \rvx_1,\ldots,\rvx_n\}
\ge1-\frac{4nP(\sA)}{3\rc}>\frac13.
\]
Hence
\[
\mathbb P\{\exists \sA:N_n(\sA)>2nP(\sA)+t\}
\le3\mathbb P\{\exists \sA:\rc>t,\ \rc'\le3\rc/4\}.
\]
Independently exchange $\rvx_i$ and $\rvx'_i$ with probability $1/2$ for every $i$, which preserves their joint distribution.
Conditional on the unordered pairs, a fixed trace of $\sA$ on the pooled observations has fixed total $S=\rc+\rc'$.
The difference $\rc-\rc'$ is a sum of $M_{\sA}$ independent symmetric signs, where $M_{\sA}\le S$ counts pairs with exactly one point in $\sA$.
The bound $\cosh u\le e^{u^2/2}$ therefore gives
\[
\mathbb P\{\rc-\rc'\ge x\mid\text{unordered pairs}\}
\le\exp\!\left(-\frac{x^2}{2M_{\sA}}\right)
\]
if $M_{\sA}>0$; a positive deviation is impossible otherwise.
On the event $\rc>t,\ \rc'\le3\rc/4$, we have $S>t$ and $\rc-\rc'\ge S/7$.
The conditional probability of this event for a fixed trace is at most $e^{-t/98}$.
A union bound over the pooled traces bounds the original failure probability by
$3(2n+1)^{2(d+1)}e^{-t/98}$.
Taking $t=100\Gamma_d(n,\delta)$ makes this at most $\delta$.
\end{proof}

Let $\rvx\sim P$. Suppose every unit projection has a density satisfying
\begin{equation}
\sup_{\lVert \vv\rVert =1}\lVert f_{\vv^\top\rvx}\rVert _\infty\le \kappa<\infty.
\label{eq:candidate-projection-density}
\end{equation}
A slab of total width $w$ then has probability at most $\kappa w$, uniformly over its location and direction.
We use this property to derive empirical center separation from D-locality and the occupancy condition.
Write $h_\alpha=\alpha/(8\kappa)$.
For $n$ such that $\alpha n\ge4$ and $100\Gamma_d(n,\delta)/n\le\alpha/4$, we show that the event in~\eqref{eq:candidate-relative-count} implies
\begin{equation}
\min_{a\ne b}\lVert \vmu_a-\vmu_b\rVert \ge h_\alpha
\qquad\text{for every }\vz\in\sZ_{n,\alpha}.
\label{eq:candidate-automatic-separation}
\end{equation}
Bounded projection densities exclude point atoms, so all observations are distinct almost surely.
If two centers coincided, then for each $i\in \sI_a$,
\[
g_i(b)
=\left(\frac1{n_a-1}+\frac1{n_b+1}\right)D_{ia}.
\]
D-locality would force all points in $\sI_a$ to equal that center, contradicting distinctness and $n_a\ge4$.

For distinct centers, put $h=\sqrt{\Delta_{ab}}$ and $\vv=(\vmu_b-\vmu_a)/h$.
By~\eqref{eq:candidate-distance-correction}, D-locality makes each assigned center a nearest center, so
$\vv^\top(\vx_i-\vmu_a)\le h/2$ for $i\in\sI_a$.
The nonnegative distances $h/2-\vv^\top(\vx_i-\vmu_a)$ average to $h/2$, because the cluster mean is $\vmu_a$.
At least $n_a/2$ samples therefore lie in the slab
\[
\sA=\{\vx:|\vv^\top(\vx-\vmu_a)|\le h/2\}.
\]
Its probability is at most $\kappa h$ by~\eqref{eq:candidate-projection-density}.
Applying~\eqref{eq:candidate-relative-count} gives
\[
\frac{\alpha}{2}\le\frac{N_n(\sA)}n
\le2\kappa h+\frac{100\Gamma_d(n,\delta)}n
\le2\kappa h+\frac{\alpha}{4},
\]
which proves $h\ge h_\alpha$.
All statements hold on a single event for every empirical state and every center pair.

On that event, define the auxiliary slabs
\[
\sS_{ab}(\vz)
:=\left\{\vx\in\R^d:
\left|\lVert \vx-\vmu_a\rVert ^2-\lVert \vx-\vmu_b\rVert ^2\right|
\le\frac{25p_{\max}}{n_{\min}}\right\},
\qquad a<b.
\]
For any $\vx\in\sU(\vz)$, its nearest and second-nearest centers identify a pair whose squared-distance difference is at most $25p_{\max}/n_{\min}$.
Therefore,
\begin{equation}
\sU(\vz)
\subseteq\bigcup_{1\le a<b\le k}\sS_{ab}(\vz).
\label{eq:candidate-slab-cover}
\end{equation}
Combining this cover with Proposition~\ref{prop:candidate-near-localization} gives
\[
m(\vz)\le N_n\!\left(\sU(\vz)\right)
\le\sum_{a<b}N_n\!\left(\sS_{ab}(\vz)\right).
\]
For $\vv=(\vmu_b-\vmu_a)/\sqrt{\Delta_{ab}}$,
\[
\lVert\vx-\vmu_a\rVert^2-\lVert\vx-\vmu_b\rVert^2
=2\sqrt{\Delta_{ab}}\,
\vv^\top\left(\vx-\frac{\vmu_a+\vmu_b}{2}\right).
\]
Thus $\sS_{ab}(\vz)$ is a slab around the equal-distance hyperplane, with geometric half-width $25p_{\max}/(2n_{\min}\sqrt{\Delta_{ab}})$.
Its total projection width is $25p_{\max}/(n_{\min}\sqrt{\Delta_{ab}})$, so its probability is at most $25\kappa p_{\max}/(n_{\min}\sqrt{\Delta_{ab}})$.
The squared sample diameter $p_{\max}$ is common to all empirical states, and $n_{\min}\ge\alpha n$ throughout $\sZ_{n,\alpha}$.
Summing~\eqref{eq:candidate-relative-count} over the $\binom{k}{2}$ slabs yields
\begin{equation}
\overline m_n\le\binom{k}{2}\left[
\frac{50\kappa p_{\max}}{\alpha h_\alpha}+100\Gamma_d(n,\delta)
\right]
\label{eq:candidate-uniform-count}
\end{equation}
with probability at least $1-\delta$, under the stated sample-size conditions.
The bound holds even though $p_{\max}$ and the empirical centers depend on the observations, because the counting event is uniform over all slabs.
For fixed $k,d,\alpha,P$ and $\delta$, its right-hand side is a term proportional to $p_{\max}$ plus $\mathcal O(\log n)$.

\subsection{The Two Distribution Classes}
\label{app:candidate-distributions}

\begin{proof}[Proof of Theorem~\ref{thm:candidate-logarithmic}]
First suppose $P$ is supported on a ball of radius $R$ centered at $\vx_0$ and $\lVert f\rVert _\infty<\infty$.
For a unit vector $\vv$, integration over $\vv^\perp$ gives the projection density
\[
f_{\vv^\top\rvx}(t)=\int_{\vv^\perp}f(t\vv+\vy)\,d\vy.
\]
Every nonempty slice of the support has $(d-1)$-dimensional volume at most
$V_{d-1}(R)$, the volume of a radius-$R$ ball in $\R^{d-1}$.
Consequently,~\eqref{eq:candidate-projection-density} holds with
$\kappa=V_{d-1}(R)\lVert f\rVert _\infty$, using $V_0(R)=1$ when $d=1$.
Any two samples are at distance at most $2R$, so $p_{\max}\le4R^2$.
For every fixed $\delta\in(0,1)$, the sample-size conditions of Appendix~\ref{app:candidate-counting} hold eventually, and~\eqref{eq:candidate-uniform-count} is $\mathcal O(\log n)$.
This proves the first case.

Next suppose $P$ is the fixed $k$-component Gaussian mixture in the theorem.
Positive definiteness and finiteness give constants
\[
\sigma_-^2=\min_{\ell\in[k]}\lambda_{\min}(\mSigma_\ell)>0,
\qquad
\sigma_+^2=\max_{\ell\in[k]}\lambda_{\max}(\mSigma_\ell)<\infty.
\]
For any unit $\vv$, the projection within component $\ell$ is a univariate Gaussian with variance $\vv^\top\mSigma_\ell \vv\ge\sigma_-^2$.
Each component density is bounded by $(\sqrt{2\pi}\sigma_-)^{-1}$, and so is their convex mixture.
Thus~\eqref{eq:candidate-projection-density} holds with this value of $\kappa$.

To bound the distance scale, write
$\rvx_i=\vtheta_{\rz_i}+\mSigma_{\rz_i}^{1/2}\rvw_i$,
where the $\rvw_i$ are independent standard Gaussian vectors, independent of the component labels $\rz_i\in[k]$, with $\mathbb P\{\rz_i=\ell\}=\pi_\ell$.
Here upright symbols denote random draws, while $\vx_i$ denotes a fixed realization.
For any fixed $\vx_0$, let $R_0=\max_{\ell\in[k]}\lVert\vtheta_\ell-\vx_0\rVert$.
Fix $\delta\in(0,1)$.
The Gaussian identity $\E e^{\lVert\rvw_i\rVert^2/4}=2^{d/2}$, Markov's inequality, and a union bound imply
\[
\mathbb P\!\left\{
\max_i\lVert\rvw_i\rVert^2>2d\log2+4\log(2n/\delta)
\right\}\le\delta/2.
\]
On the complementary event, the realized squared sample diameter satisfies
\begin{equation}
p_{\max}\le4\max_i\lVert\vx_i-\vx_0\rVert^2
\le4\left[R_0+\sigma_+\sqrt{2d\log2+4\log(2n/\delta)}\right]^2.
\label{eq:candidate-gmm-scale}
\end{equation}
Intersect this event with the uniform counting event at confidence $1-\delta/2$.
Applying~\eqref{eq:candidate-uniform-count} with $\Gamma_d(n,\delta/2)$ gives $\overline m_n=\mathcal O(\log n)$ on an event of probability at least $1-\delta$.
Since $\delta$ is arbitrary, this proves~\eqref{eq:candidate-logarithmic} for the second case.
The argument requires neither separation of component means nor recovery of the mixture-component labels by the empirical clustering.
\end{proof}

The supremum in~\eqref{eq:candidate-logarithmic} makes the result uniform over all qualifying states, including states selected adaptively from the observations.
For a D-local algorithmic output $\widehat{\vz}_n$ and any $C>0$,
\[
\mathbb P\{m(\widehat{\vz}_n)>C\log n\}
\le
\mathbb P\{\widehat{\vz}_n\notin\sZ_{n,\alpha}\}
+\mathbb P\{\overline m_n>C\log n\}.
\]
If the occupancy failure probability tends to zero, this proves
$m(\widehat{\vz}_n)=\mathcal O_{\mathbb P}(\log n)$.
The convention for an empty state class does not assume that a state in $\sZ_{n,\alpha}$ exists or is attained algorithmically.

\section{Experimental Setup}
\label{app:experimental-setup}

\paragraph{Datasets.}
The clustering-quality comparison uses the twelve datasets in Table~\ref{tab:experimental-datasets}. The dataset names in the table link to public benchmark source pages.
The five marked datasets are used for the descent-trajectory study and heuristic comparisons; Table~\ref{tab:heuristic-enhancement} reports the available heuristic results.
For each real dataset, $k$ is set to the number of reference classes, and the clustering algorithms receive only the feature vectors and $k$.
Pixel representations are scaled to $[0,1]$, while CIFAR-10 and CIFAR-100 use pre-extracted DINOv2 features.
20News uses the supplied TF--IDF representation reduced to 100 dimensions by singular value decomposition, and Sensorless uses the supplied measurement features.
All methods use the same representations, without additional normalization or dimensionality reduction.
The candidate-set scaling study uses random subsets of six larger datasets, with its sampling and aggregation protocol specified in Appendix~\ref{app:candidate-scaling-protocol}.

\paragraph{DINOv2 features.}
For CIFAR-10 and CIFAR-100, we use the pretrained DINOv2 ViT-B/14 backbone without register tokens (\texttt{dinov2\_vitb14}) from the \href{https://github.com/facebookresearch/dinov2/tree/7764ea0f912e53c92e82eb78a2a1631e92725fc8}{official implementation}, revision \texttt{7764ea0}.
We combine the original training and test sets and use the fine labels for CIFAR-100.
Each RGB image is resized by bicubic interpolation to a shorter-side length of 256 pixels, center-cropped to $224\times224$, converted to a tensor with pixel values in $[0,1]$, and normalized channelwise using ImageNet means $(0.485,0.456,0.406)$ and standard deviations $(0.229,0.224,0.225)$.
The 768-dimensional feature is the final CLS token after LayerNorm (\texttt{x\_norm\_clstoken}), with no subsequent samplewise $\ell_2$ normalization.
Feature extraction uses PyTorch \texttt{2.11.0+cu128} in float32, with autocast and TF32 disabled; the saved features are converted to double precision for clustering.
Feature extraction is performed before clustering and is excluded from the reported clustering runtimes.

\begin{table}[t]
\centering
\caption{Datasets used in the clustering-quality comparison. Here $n$, $d$, and $k$ denote the number of samples, feature dimension, and number of reference classes, respectively. Dataset names link to benchmark source pages; the listed dimensions describe the representations used here. Datasets marked with $\dagger$ are used for the descent-trajectory study and heuristic comparisons.}
\label{tab:experimental-datasets}
\small
\begin{tabular}{ccccc}
\toprule
Dataset & $n$ & $d$ & $k$ & Representation \\
\midrule
\href{https://vision.ucsd.edu/datasets/yale-face-database}{Yale}$^{\dagger}$        & 165    & 1,024 & 15  & Pixels \\
\href{https://www.cl.cam.ac.uk/research/dtg/attarchive/facedatabase.html}{ORL}$^{\dagger}$         & 400    & 1,024 & 40  & Pixels \\
\href{https://www.csie.ntu.edu.tw/~cjlin/libsvmtools/datasets/multiclass.html#usps}{USPS-1000}$^{\dagger}$   & 1,000  & 256   & 10  & Pixels \\
\href{https://www.cs.columbia.edu/CAVE/software/softlib/coil-20.php}{COIL20}$^{\dagger}$      & 1,440  & 1,024 & 20  & Pixels \\
\href{https://huggingface.co/datasets/ylecun/mnist}{MNIST-2000}$^{\dagger}$  & 2,000  & 784   & 10  & Pixels \\
\href{https://www.cs.columbia.edu/CAVE/software/softlib/coil-100.php}{COIL100}                & 7,200  & 1,024 & 100 & Pixels \\
\href{http://qwone.com/~jason/20Newsgroups/}{20News}                 & 18,282 & 100   & 20  & TF--IDF + SVD \\
\href{https://doi.org/10.24432/C5VP5F}{Sensorless}             & 58,509 & 48    & 11  & Measurements \\
\href{https://www.cs.toronto.edu/~kriz/cifar.html}{CIFAR-10}               & 60,000 & 768   & 10  & DINOv2 \\
\href{https://www.cs.toronto.edu/~kriz/cifar.html}{CIFAR-100}              & 60,000 & 768   & 100 & DINOv2 \\
\href{https://huggingface.co/datasets/ylecun/mnist}{MNIST}                  & 70,000 & 784   & 10  & Pixels \\
\href{https://github.com/zalandoresearch/fashion-mnist}{Fashion-MNIST}          & 70,000 & 784   & 10  & Pixels \\
\bottomrule
\end{tabular}
\end{table}

\paragraph{Comparison methods.}
We organize the evaluated methods into two categories according to their update mechanisms.

\noindent\textbf{Sample-reassignment-based methods.}
These methods update sample memberships through batch assignments or one- and two-point reassignments.
\begin{itemize}
    \item \textbf{Lloyd} \citep{lloyd1982least}: Alternates between assigning samples to their nearest centroids and recomputing cluster means until the assignments stabilize.
    \item \textbf{lite-KM} \citep{cai2011litekmeans}: A vectorized MATLAB implementation of Lloyd's algorithm, included as an efficient baseline for batch reassignment.
    \item \textbf{Hartigan} \citep{telgarsky2010hartigan}: Sequentially scans samples and moves each sample to the destination with the largest positive gain after centroid recomputation, immediately updating the affected centroids.
    \item \textbf{D-LO} \citep{li2025modified}: Alternates Lloyd iterations with one-point local search, accepting the first improving feasible reassignment and then resuming Lloyd iterations.
    \item \textbf{Min-D-LO} \citep{li2025modified}: Uses the same framework as D-LO but selects the one-point reassignment with the largest objective decrease in each local-search step.
    \item \textbf{BPS-2PLS (ours)}: After initial Lloyd convergence, alternates the best-improvement one-point descent of Min-D-LO with exact two-point search, maintaining cluster statistics and distance columns through local updates. Lloyd is not restarted between local moves. Safe boundary-point screening reduces the candidate set while preserving every improving two-point move (Algorithm~\ref{alg:bps2pls}).
\end{itemize}

\noindent\textbf{Heuristic methods.}
These methods explore center replacements or cluster repartitions, which can change many sample memberships at once.
\begin{itemize}
    \item \textbf{LocalSearch} \citep{lattanzi2019better}: Samples a data point with probability proportional to its squared distance to the nearest center and accepts the best improving replacement of one current center by this point.
    \item \textbf{MultiSwap} \citep{beretta2023multiswap}: Extends center swaps to multiple sampled points. Our implementation retains the best $k$ centers from the pool of current centers and sampled points, and applies Lloyd refinement after an improving exchange.
    \item \textbf{J-Means} \citep{hansen2001jmeans}: Escapes Lloyd fixed points by relocating a centroid to a data sample. Our implementation scores candidate jumps after one assignment-and-centroid update and follows the best improving jump with Lloyd refinement.
    \item \textbf{MergeSplit} \citep{nielsen2014further}: Merges two clusters and repartitions their union using repeated $2$-means trials. An improving split is accepted and followed by global Lloyd refinement.
\end{itemize}
We evaluate the four heuristic methods with and without BPS-2PLS; the refinement insertion points are described in Appendix~\ref{app:heuristic-refinements}.

\paragraph{Heuristic search budgets.}
LocalSearch uses $20$ sampled replacement attempts.
MultiSwap samples two candidate points per outer step and runs for at most $10$ steps.
MergeSplit uses five random $2$-means trials per cluster pair, while J-Means stops when no improving jump is found.
Each baseline and its enhanced variants use the same method-specific budget and acceptance rule; refinement time is included in the reported runtime.
Hartigan on COIL100, CIFAR-10, CIFAR-100, MNIST, and Fashion-MNIST, and all J-Means variants on COIL20 and MNIST-2000, were omitted because of excessive runtime; these runs are marked with dashes.

\paragraph{Initialization.}
For the quality and enhancement comparisons, we generate ten independent $k$-means++ initial center sets per dataset and share them across methods.
Each randomized outer method and its enhanced variants use paired random seeds and the same outer search budgets.

\paragraph{Evaluation metrics.}
We report squared-error loss, clustering accuracy (ACC), and normalized mutual information (NMI).
Loss is recomputed as $\sum_{i=1}^n\lVert\vx_i-\vmu_{z_i}\rVert^2$ from each method's returned assignments $\vz$ and centers $\vmu_1,\ldots,\vmu_k$.
ACC uses an optimal matching between cluster labels and reference labels, and NMI is defined as $2\mathrm I(\vz;\vy)/(\mathrm H(\vz)+\mathrm H(\vy))$, where $\vy$ is the reference-label vector, and $\mathrm I$ and $\mathrm H$ denote mutual information and entropy computed from empirical label frequencies.
Across the ten initializations, we report the best value, mean, and standard deviation of each quality metric.
Best means minimum loss or maximum ACC/NMI; these extrema are selected independently and need not come from the same initialization.

\paragraph{Timing.}
For the quality and enhancement comparisons, each fixed initialization is run five times, resetting any algorithm-specific random stream before each repetition.
We first average the five runtimes for each initialization and then report the mean and standard deviation across the ten resulting averages; the repetitions do not constitute additional independent initializations.
Quality metrics are recorded once per initialization.
Each method is warmed up before measurement, and method order is rotated across repetitions.
Reported runtimes include all algorithmic work, including distance calculations and screening, but exclude data loading, shared initial-center generation, quality evaluation, and result serialization.
Phase diagnostics and MATLAB profiling are collected in separate runs to avoid adding their overhead to the ordinary timing measurements.

\paragraph{Implementation.}
All methods are implemented in MATLAB using double-precision arithmetic.
We use the lazy policy in Appendix~\ref{app:distance-policies}: after screening, distances are evaluated only between retained samples, in blocks of at most 256 rows, and each block is discarded after its pairs are processed. A distance is reused across destination choices; no dataset-wide sample-pair cache is precomputed or retained across rounds. Remaining pairs whose certificates cannot exceed the best gain found so far may also be skipped, which can reduce work below the worst-case bound. Reported total runtimes include all distance calculations and state updates; the Search phase in Appendix~\ref{app:runtime-phases} includes candidate-pair distance computation.
The clustering-quality and heuristic-enhancement experiments were run in MATLAB R2026a on an Intel Xeon Gold 6248R CPU at 3.00 GHz, with memory operating at 2666 MHz. The separate runtime-phase study is described in Appendix~\ref{app:runtime-phases}.

\section{Supplementary Experimental Discussion}
\label{app:additional-experiments}

\subsection{WCSS Improvements in Larger Clusters}
A plausible explanation is the diminishing influence of a few samples on well-populated clusters.
For a sample with $s=z_i$ and destination $a$, rewriting~\eqref{eq:one-point-gain} gives $g_i(a)=D_{is}-D_{ia}+D_{is}/(n_s-1)+D_{ia}/(n_a+1)$.
At a Lloyd fixed point, $D_{is}-D_{ia}\le0$; if residual distances remain on comparable scales, the two positive corrections shrink as the affected clusters grow, restricting improving relocations to samples with small assignment margins.
Two-point moves similarly induce smaller centroid changes in larger clusters.
Unless these moves trigger substantial subsequent descent, their gains can remain small relative to WCSS summed over all samples.
This interpretation concerns the one- and two-point neighborhoods; larger coordinated rearrangements may still yield improvements.

\subsection{Heuristic Refinement Insertion Points}
\label{app:heuristic-refinements}
For each baseline, the \emph{Post} variant applies BPS-2PLS to its final clustering.
This terminal refinement measures the residual improvement available after the original heuristic stops.

For MultiSwap, J-Means, and MergeSplit, we also consider an \emph{Inner} variant that embeds BPS-2PLS within the search.
In MultiSwap and J-Means, BPS-2PLS replaces the initial Lloyd refinement and the Lloyd refinement following each accepted outer move; candidate generation and scoring rules remain unchanged.
In MergeSplit, BPS-2PLS is applied with $k=2$ to every trial split of the selected cluster pair, before choosing the best trial and deciding whether to accept it; global Lloyd refinement is retained.
LocalSearch is evaluated only with terminal enhancement.
The comparison separates improvements available at termination from changes produced by refining intermediate states.
Terminal refinement cannot increase WCSS for a given baseline output, whereas internal refinement changes the subsequent search trajectory and need not outperform the baseline on every initialization.

\subsection{Candidate-Set Scaling Protocol and Interpretation}
\label{app:candidate-scaling-protocol}
We study how the retained count $m$ changes with $n$ on 20News, Sensorless, CIFAR-10, CIFAR-100, MNIST, and Fashion-MNIST.
For each dataset, we sample subsets uniformly without replacement at $n\in\{200,500,1000,2000,5000,10000,20000\}$; 20News is evaluated only up to $n=10000$.
At each size, we generate five independent subsets and run three $k$-means++ initializations per subset.
For each subset, we first average $m$ over the three initializations.
Figure~\ref{fig:candidate-scaling} then reports the mean and standard deviation of the five subset averages.

\paragraph{Implications for the theoretical bound.}
These observations raise the possibility that the $\mathcal O_{\mathbb P}(\log n)$ upper bound in Theorem~\ref{thm:candidate-logarithmic} is conservative for states reached by the algorithm.
The theorem controls all D-local states satisfying the occupancy condition through global diameter bounds and uniform slab counting, which may overestimate the number of candidates at a typical computed state.
It is therefore natural to investigate whether a sharper analysis can establish $m=\mathcal O_{\mathbb P}(1)$ for algorithmic outputs under suitable distributional and occupancy assumptions.
This remains a conjecture: averages from finite-dataset subsampling cannot establish an asymptotic rate or reliably distinguish constant behavior from slow logarithmic growth.

\subsection{Additional Clustering Quality for Sample-Reassignment Methods}
\label{app:quality-acc-nmi}
Tables~\ref{tab:family-acc} and~\ref{tab:family-nmi} complement the WCSS and runtime comparison in Table~\ref{tab:quality-cost} with ACC and NMI.
Both metrics are expressed as percentages and summarized by their means and standard deviations across the same ten initializations; timing repetitions do not add independent quality measurements.
The Yale results in these tables use 165 samples and 15 classes.
The comparisons use the available results only, without imputing missing methods.

\begin{table}[htbp]
\centering
\caption{ACC (\%) for the sample-reassignment methods over ten shared initializations. Entries are mean $\pm$ standard deviation; higher is better. Bold marks the highest available mean on each dataset, including ties at the displayed precision. Dashes mark Hartigan runs omitted because of excessive runtime.}
\label{tab:family-acc}
\footnotesize
\setlength{\tabcolsep}{2.25pt}
\begin{tabular}{@{}l*{6}{c}@{}}
\toprule
Dataset & Lloyd & lite-KM & Hartigan & D-LO & Min-D-LO & BPS-2PLS \\
\midrule
Yale & $38.67\pm2.35$ & $38.67\pm2.35$ & $41.39\pm3.82$ & $41.03\pm2.46$ & $41.03\pm2.46$ & $\mathbf{41.64}\pm2.16$ \\
ORL & $48.85\pm4.12$ & $48.85\pm4.12$ & $54.73\pm3.09$ & $53.88\pm4.36$ & $53.88\pm4.36$ & $\mathbf{54.88}\pm3.09$ \\
USPS-1000 & $45.69\pm3.06$ & $45.69\pm3.06$ & $\mathbf{47.67}\pm2.34$ & $47.59\pm2.74$ & $47.59\pm2.74$ & $47.48\pm2.54$ \\
COIL20 & $60.76\pm5.20$ & $60.76\pm5.20$ & $61.69\pm4.33$ & $61.88\pm5.64$ & $61.88\pm5.64$ & $\mathbf{62.03}\pm5.46$ \\
MNIST-2000 & $54.78\pm3.31$ & $54.78\pm3.31$ & $54.29\pm3.50$ & $\mathbf{56.12}\pm4.21$ & $\mathbf{56.12}\pm4.21$ & $56.07\pm4.13$ \\
COIL100 & $46.11\pm1.68$ & $46.11\pm1.68$ & --- & $46.72\pm1.64$ & $46.72\pm1.64$ & $\mathbf{46.99}\pm1.80$ \\
20News & $37.39\pm2.32$ & $37.39\pm2.32$ & $\mathbf{37.46}\pm1.61$ & $37.35\pm2.42$ & $37.36\pm2.43$ & $37.34\pm2.43$ \\
Sensorless & $21.71\pm2.35$ & $21.71\pm2.35$ & $\mathbf{22.40}\pm1.93$ & $21.72\pm2.35$ & $21.72\pm2.35$ & $21.72\pm2.36$ \\
CIFAR-10 & $\mathbf{78.23}\pm5.93$ & $\mathbf{78.23}\pm5.93$ & --- & $78.22\pm5.93$ & $78.22\pm5.93$ & $78.22\pm5.93$ \\
CIFAR-100 & $66.09\pm2.16$ & $66.09\pm2.16$ & --- & $\mathbf{66.23}\pm2.23$ & $\mathbf{66.23}\pm2.23$ & $\mathbf{66.23}\pm2.23$ \\
MNIST & $\mathbf{55.50}\pm2.78$ & $\mathbf{55.50}\pm2.78$ & --- & $55.43\pm2.69$ & $55.43\pm2.69$ & $55.43\pm2.69$ \\
Fashion-MNIST & $51.67\pm2.97$ & $51.67\pm2.97$ & --- & $51.67\pm2.97$ & $\mathbf{51.68}\pm2.97$ & $\mathbf{51.68}\pm2.97$ \\
\bottomrule
\end{tabular}
\end{table}

\begin{table}[htbp]
\centering
\caption{NMI (\%) for the sample-reassignment methods over ten shared initializations. Entries are mean $\pm$ standard deviation; higher is better. Bold marks the highest available mean on each dataset, including ties at the displayed precision. Dashes mark Hartigan runs omitted because of excessive runtime.}
\label{tab:family-nmi}
\footnotesize
\setlength{\tabcolsep}{2.25pt}
\begin{tabular}{@{}l*{6}{c}@{}}
\toprule
Dataset & Lloyd & lite-KM & Hartigan & D-LO & Min-D-LO & BPS-2PLS \\
\midrule
Yale & $47.06\pm2.14$ & $47.06\pm2.14$ & $50.32\pm2.52$ & $49.95\pm1.84$ & $49.95\pm1.84$ & $\mathbf{50.56}\pm2.74$ \\
ORL & $72.92\pm2.10$ & $72.92\pm2.10$ & $77.08\pm1.23$ & $76.63\pm1.92$ & $76.63\pm1.92$ & $\mathbf{77.29}\pm1.87$ \\
USPS-1000 & $44.10\pm1.22$ & $44.10\pm1.22$ & $\mathbf{46.94}\pm2.76$ & $46.12\pm2.30$ & $46.12\pm2.30$ & $46.06\pm2.00$ \\
COIL20 & $76.51\pm1.65$ & $76.51\pm1.65$ & $76.77\pm1.62$ & $77.28\pm2.11$ & $77.28\pm2.11$ & $\mathbf{77.38}\pm2.04$ \\
MNIST-2000 & $48.75\pm2.65$ & $48.75\pm2.65$ & $49.01\pm2.76$ & $\mathbf{49.49}\pm2.80$ & $\mathbf{49.49}\pm2.80$ & $49.45\pm2.76$ \\
COIL100 & $76.26\pm0.34$ & $76.26\pm0.34$ & --- & $76.74\pm0.36$ & $76.74\pm0.36$ & $\mathbf{76.92}\pm0.38$ \\
20News & $37.48\pm1.50$ & $37.48\pm1.50$ & $37.45\pm1.45$ & $\mathbf{37.56}\pm1.57$ & $\mathbf{37.56}\pm1.57$ & $\mathbf{37.56}\pm1.57$ \\
Sensorless & $16.12\pm1.45$ & $16.12\pm1.45$ & $\mathbf{16.35}\pm1.44$ & $16.13\pm1.45$ & $16.13\pm1.45$ & $16.13\pm1.47$ \\
CIFAR-10 & $\mathbf{85.61}\pm1.45$ & $\mathbf{85.61}\pm1.45$ & --- & $85.60\pm1.45$ & $85.60\pm1.45$ & $85.60\pm1.45$ \\
CIFAR-100 & $\mathbf{79.40}\pm0.42$ & $\mathbf{79.40}\pm0.42$ & --- & $79.37\pm0.40$ & $79.37\pm0.40$ & $79.37\pm0.40$ \\
MNIST & $49.93\pm1.14$ & $49.93\pm1.14$ & --- & $\mathbf{49.99}\pm1.19$ & $\mathbf{49.99}\pm1.19$ & $\mathbf{49.99}\pm1.19$ \\
Fashion-MNIST & $\mathbf{51.72}\pm1.54$ & $\mathbf{51.72}\pm1.54$ & --- & $\mathbf{51.72}\pm1.55$ & $\mathbf{51.72}\pm1.55$ & $51.71\pm1.55$ \\
\bottomrule
\end{tabular}
\end{table}

\paragraph{Improvements and their limits.}
BPS-2PLS improves both mean ACC and mean NMI over Lloyd on Yale, ORL, USPS-1000, COIL20, MNIST-2000, COIL100, and Sensorless.
The ACC increases are $2.97$ and $6.03$ percentage points on Yale and ORL, respectively, and the corresponding NMI increases are $3.50$ and $4.37$ percentage points.
BPS-2PLS has the highest available mean of both metrics on Yale, ORL, COIL20, and COIL100.
However, lower WCSS need not yield better agreement with reference labels: on USPS-1000, Hartigan attains $47.67\%$ ACC and $46.94\%$ NMI, compared with $47.48\%$ and $46.06\%$ for BPS-2PLS, despite the lower WCSS of BPS-2PLS in Table~\ref{tab:quality-cost}.

On the six larger datasets from 20News through Fashion-MNIST, the absolute changes in mean ACC and NMI between BPS-2PLS and D-LO are below $0.01$ percentage points before rounding.
Thus, the additional WCSS reduction on these datasets is accompanied by nearly unchanged label agreement.
The local-optimality guarantee concerns WCSS and does not imply an improvement in ACC or NMI.

\subsection{Additional Clustering Quality for Heuristic Enhancement}
\label{app:enhancement-acc-nmi}
Tables~\ref{tab:enhancement-acc} and~\ref{tab:enhancement-nmi} supplement Table~\ref{tab:heuristic-enhancement} with ACC and NMI on the five datasets, using the same ten shared initializations.
The \emph{Post} and \emph{Inner} variants are defined in Appendix~\ref{app:heuristic-refinements}.
The Min-D-LO reference block uses Min-D-LO as Baseline and BPS-2PLS as Inner. J-Means was omitted on COIL20 and MNIST-2000 because of excessive runtime.

\begin{table}[htbp]
\centering
\caption{ACC (\%) for heuristic enhancement: mean $\pm$ standard deviation over ten initializations. Bold marks the highest mean within each method and dataset, including ties at the displayed precision. Dashes mark J-Means runs omitted because of excessive runtime. For Min-D-LO, Inner reports BPS-2PLS.}
\label{tab:enhancement-acc}
\footnotesize
\setlength{\tabcolsep}{2.25pt}
\begin{tabular}{@{}ll*{5}{c}@{}}
\toprule
Method & Variant & Yale & ORL & USPS-1000 & COIL20 & MNIST-2000 \\
\midrule
Min-D-LO & Baseline & $41.03\pm2.46$ & $53.88\pm4.36$ & $\mathbf{47.59}\pm2.74$ & $61.88\pm5.64$ & $\mathbf{56.12}\pm4.21$ \\
 & Inner & $\mathbf{41.64}\pm2.16$ & $\mathbf{54.88}\pm3.09$ & $47.48\pm2.54$ & $\mathbf{62.03}\pm5.46$ & $56.07\pm4.13$ \\
\midrule
LocalSearch & Baseline & $38.61\pm3.16$ & $53.00\pm2.39$ & $37.59\pm2.64$ & $57.72\pm3.64$ & $45.31\pm3.42$ \\
 & Post & $\mathbf{43.15}\pm4.33$ & $\mathbf{57.55}\pm2.79$ & $\mathbf{47.86}\pm2.07$ & $\mathbf{63.01}\pm4.79$ & $\mathbf{53.05}\pm3.99$ \\
\midrule
MultiSwap & Baseline & $36.97\pm3.64$ & $52.10\pm0.78$ & $46.14\pm2.92$ & $58.34\pm4.89$ & $\mathbf{54.65}\pm5.78$ \\
 & Post & $40.61\pm4.13$ & $\mathbf{56.33}\pm1.63$ & $\mathbf{48.12}\pm2.92$ & $\mathbf{59.63}\pm4.25$ & $54.62\pm5.33$ \\
 & Inner & $\mathbf{40.79}\pm4.23$ & $55.45\pm1.74$ & $\mathbf{48.12}\pm2.92$ & $\mathbf{59.63}\pm4.25$ & $54.62\pm5.33$ \\
\midrule
J-Means & Baseline & $40.24\pm2.49$ & $57.18\pm2.22$ & $46.06\pm3.00$ & --- & --- \\
 & Post & $\mathbf{42.73}\pm2.93$ & $58.05\pm2.17$ & $48.02\pm3.01$ & --- & --- \\
 & Inner & $42.42\pm3.32$ & $\mathbf{58.20}\pm2.02$ & $\mathbf{48.12}\pm2.92$ & --- & --- \\
\midrule
MergeSplit & Baseline & $38.67\pm3.49$ & $58.75\pm1.93$ & $48.27\pm2.90$ & $68.65\pm4.22$ & $54.72\pm5.24$ \\
 & Post & $39.94\pm2.79$ & $\mathbf{58.90}\pm1.87$ & $48.31\pm3.05$ & $68.72\pm4.21$ & $54.70\pm5.27$ \\
 & Inner & $\mathbf{42.18}\pm3.89$ & $58.33\pm1.87$ & $\mathbf{48.46}\pm3.24$ & $\mathbf{68.76}\pm3.76$ & $\mathbf{54.82}\pm5.07$ \\
\bottomrule
\end{tabular}
\end{table}

\begin{table}[htbp]
\centering
\caption{NMI (\%) for heuristic enhancement: mean $\pm$ standard deviation over ten initializations. Bold marks the highest mean within each method and dataset, including ties at the displayed precision. Dashes mark J-Means runs omitted because of excessive runtime. For Min-D-LO, Inner reports BPS-2PLS.}
\label{tab:enhancement-nmi}
\footnotesize
\setlength{\tabcolsep}{2.25pt}
\begin{tabular}{@{}ll*{5}{c}@{}}
\toprule
Method & Variant & Yale & ORL & USPS-1000 & COIL20 & MNIST-2000 \\
\midrule
Min-D-LO & Baseline & $49.95\pm1.84$ & $76.63\pm1.92$ & $\mathbf{46.12}\pm2.30$ & $77.28\pm2.11$ & $\mathbf{49.49}\pm2.80$ \\
 & Inner & $\mathbf{50.56}\pm2.74$ & $\mathbf{77.29}\pm1.87$ & $46.06\pm2.00$ & $\mathbf{77.38}\pm2.04$ & $49.45\pm2.76$ \\
\midrule
LocalSearch & Baseline & $45.08\pm1.87$ & $72.18\pm1.06$ & $33.05\pm1.68$ & $70.72\pm1.04$ & $35.68\pm3.67$ \\
 & Post & $\mathbf{50.86}\pm2.88$ & $\mathbf{77.83}\pm1.26$ & $\mathbf{46.60}\pm1.99$ & $\mathbf{77.46}\pm1.10$ & $\mathbf{47.72}\pm2.19$ \\
\midrule
MultiSwap & Baseline & $45.47\pm2.79$ & $74.48\pm0.62$ & $44.44\pm2.65$ & $75.72\pm1.50$ & $49.44\pm2.84$ \\
 & Post & $49.44\pm2.87$ & $\mathbf{77.68}\pm0.90$ & $\mathbf{46.65}\pm2.67$ & $\mathbf{76.61}\pm1.19$ & $\mathbf{49.79}\pm2.52$ \\
 & Inner & $\mathbf{49.61}\pm2.93$ & $77.53\pm1.15$ & $\mathbf{46.65}\pm2.67$ & $\mathbf{76.61}\pm1.19$ & $\mathbf{49.79}\pm2.52$ \\
\midrule
J-Means & Baseline & $47.37\pm2.12$ & $78.13\pm0.73$ & $44.47\pm2.63$ & --- & --- \\
 & Post & $\mathbf{50.95}\pm2.00$ & $\mathbf{78.87}\pm0.85$ & $\mathbf{46.72}\pm2.61$ & --- & --- \\
 & Inner & $50.64\pm2.55$ & $78.60\pm0.80$ & $46.65\pm2.67$ & --- & --- \\
\midrule
MergeSplit & Baseline & $48.04\pm2.57$ & $78.66\pm1.03$ & $46.73\pm2.49$ & $79.46\pm1.26$ & $49.73\pm2.59$ \\
 & Post & $49.56\pm2.34$ & $\mathbf{78.83}\pm0.77$ & $\mathbf{46.97}\pm2.63$ & $79.59\pm1.32$ & $49.77\pm2.61$ \\
 & Inner & $\mathbf{50.74}\pm2.65$ & $78.65\pm0.54$ & $46.66\pm2.79$ & $\mathbf{79.63}\pm0.97$ & $\mathbf{49.90}\pm2.40$ \\
\bottomrule
\end{tabular}
\end{table}

\paragraph{Terminal refinement.}
Among the four heuristic methods, Post improves mean ACC in sixteen of eighteen available method--dataset comparisons and mean NMI in all eighteen.
ACC changes range from $-0.03$ to $10.27$ percentage points, and NMI increases range from $0.04$ to $13.55$ percentage points. The two ACC decreases occur for MultiSwap and MergeSplit on MNIST-2000.
The largest gains occur for LocalSearch on USPS-1000, where ACC rises from $37.59\%$ to $47.86\%$ and NMI from $33.05\%$ to $46.60\%$.
These results show that the WCSS improvements reported in the main text can also improve agreement with reference labels, although label agreement is not part of the refinement objective.

\paragraph{Internal refinement.}
Among the three heuristic methods with an Inner variant, Inner improves mean ACC in eleven of thirteen comparisons with the corresponding baseline and likewise improves mean NMI in eleven of thirteen comparisons.
Its relative performance against Post depends on the dataset and metric.
For example, MergeSplit--Inner on Yale attains $42.18\%$ ACC and $50.74\%$ NMI, compared with $39.94\%$ and $49.56\%$ for Post.
On ORL, however, MergeSplit--Inner has lower mean ACC than both the baseline and Post ($58.33\%$, versus $58.75\%$ and $58.90\%$), even though it has the lowest mean WCSS among the three variants.
For MergeSplit on USPS-1000, Inner has the highest ACC but lower NMI than the other variants.
On COIL20, MergeSplit--Inner has higher mean ACC and NMI than Post despite a higher mean WCSS.
These differences reflect changes in the search trajectory and the distinction between optimizing WCSS and matching reference classes.
The tables provide descriptive comparisons across initializations; they do not establish statistical significance.

\subsection{Runtime by Algorithmic Phase}
\label{app:runtime-phases}
We measure the runtime composition of BPS-2PLS on all twelve datasets in Table~\ref{tab:experimental-datasets} with screening enabled.
This separate timing study was run in MATLAB R2026b.

\paragraph{Measurement protocol.}
We use three $k$-means++ initializations per dataset and five timed repetitions per initialization after warm-up.
All four phase timers and the total wall-clock time are collected from the same instrumented run, with MATLAB's function-level Profiler disabled.
We first average repetitions within each initialization, then compute means and sample standard deviations across the three initialization averages.
A phase's percentage is its mean runtime divided by the mean total runtime.
These measurements characterize this timing batch and are not pooled with the ten-initialization quality experiments.

\paragraph{Phase definitions.}
\emph{Descent} includes Lloyd iterations and one-point local descent throughout the run.
\emph{Screen} constructs and evaluates the screening bounds, and \emph{Search} performs the exact search over the retained sample pairs, including their distance calculations.
\emph{Other} includes initialization of the clustering state, updates for accepted two-point moves, and remaining overhead.
The four phases partition the measured total; pair-distance computation is included in Search and is not counted again in Other.

\begin{table}[htbp]
\centering
\caption{Runtime by phase with screening enabled. Times are in seconds, reported as mean $\pm$ standard deviation across three initializations after averaging five repetitions within each initialization. Parentheses give each phase's percentage of the mean total time; percentages below $0.1\%$ are shown to two decimal places. Bold marks the largest mean phase time for each dataset.}
\label{tab:runtime-phases}
\footnotesize
\setlength{\tabcolsep}{3pt}
\begin{tabular}{@{}l*{5}{c}@{}}
\toprule
Dataset & Total & Descent & Screen & Search & Other \\
\midrule
Yale & $0.268\pm0.013$ & \makecell{$\mathbf{0.149}\pm0.025$\\$(55.5\%)$} & \makecell{$0.077\pm0.016$\\$(28.9\%)$} & \makecell{$0.036\pm0.015$\\$(13.5\%)$} & \makecell{$0.006\pm0.002$\\$(2.1\%)$} \\[3pt]
ORL & $2.090\pm0.538$ & \makecell{$0.695\pm0.091$\\$(33.3\%)$} & \makecell{$\mathbf{1.269}\pm0.442$\\$(60.7\%)$} & \makecell{$0.100\pm0.054$\\$(4.8\%)$} & \makecell{$0.026\pm0.010$\\$(1.2\%)$} \\[3pt]
USPS-1000 & $1.032\pm0.582$ & \makecell{$\mathbf{0.923}\pm0.582$\\$(89.4\%)$} & \makecell{$0.080\pm0.043$\\$(7.7\%)$} & \makecell{$0.021\pm0.011$\\$(2.0\%)$} & \makecell{$0.009\pm0.005$\\$(0.9\%)$} \\[3pt]
COIL20 & $1.256\pm0.546$ & \makecell{$\mathbf{1.055}\pm0.584$\\$(84.0\%)$} & \makecell{$0.176\pm0.062$\\$(14.0\%)$} & \makecell{$0.005\pm0.001$\\$(0.4\%)$} & \makecell{$0.020\pm0.008$\\$(1.6\%)$} \\[3pt]
MNIST-2000 & $1.681\pm0.565$ & \makecell{$\mathbf{1.590}\pm0.544$\\$(94.6\%)$} & \makecell{$0.059\pm0.025$\\$(3.5\%)$} & \makecell{$0.015\pm0.006$\\$(0.9\%)$} & \makecell{$0.018\pm0.010$\\$(1.1\%)$} \\[3pt]
COIL100 & $76.852\pm6.224$ & \makecell{$\mathbf{40.142}\pm3.222$\\$(52.2\%)$} & \makecell{$34.024\pm4.556$\\$(44.3\%)$} & \makecell{$2.027\pm0.842$\\$(2.6\%)$} & \makecell{$0.660\pm0.087$\\$(0.9\%)$} \\[3pt]
20News & $29.293\pm17.179$ & \makecell{$\mathbf{26.952}\pm16.185$\\$(92.0\%)$} & \makecell{$2.024\pm0.917$\\$(6.9\%)$} & \makecell{$0.254\pm0.112$\\$(0.9\%)$} & \makecell{$0.062\pm0.027$\\$(0.2\%)$} \\[3pt]
Sensorless & $3.247\pm0.409$ & \makecell{$\mathbf{2.068}\pm0.761$\\$(63.7\%)$} & \makecell{$1.145\pm0.508$\\$(35.3\%)$} & \makecell{$0.001\pm0.001$\\$(0.04\%)$} & \makecell{$0.034\pm0.015$\\$(1.0\%)$} \\[3pt]
CIFAR-10 & $13.215\pm6.050$ & \makecell{$\mathbf{12.228}\pm6.059$\\$(92.5\%)$} & \makecell{$0.804\pm0.011$\\$(6.1\%)$} & \makecell{$0.004\pm0.002$\\$(0.03\%)$} & \makecell{$0.179\pm0.007$\\$(1.4\%)$} \\[3pt]
CIFAR-100 & $222.695\pm42.560$ & \makecell{$\mathbf{145.161}\pm23.062$\\$(65.2\%)$} & \makecell{$69.356\pm54.717$\\$(31.1\%)$} & \makecell{$6.569\pm5.205$\\$(2.9\%)$} & \makecell{$1.609\pm1.292$\\$(0.7\%)$} \\[3pt]
MNIST & $59.946\pm40.294$ & \makecell{$\mathbf{57.949}\pm40.096$\\$(96.7\%)$} & \makecell{$1.608\pm0.543$\\$(2.7\%)$} & \makecell{$0.012\pm0.004$\\$(0.02\%)$} & \makecell{$0.376\pm0.152$\\$(0.6\%)$} \\[3pt]
Fashion-MNIST & $34.921\pm8.596$ & \makecell{$\mathbf{32.518}\pm8.565$\\$(93.1\%)$} & \makecell{$1.916\pm0.032$\\$(5.5\%)$} & \makecell{$0.005\pm0.002$\\$(0.01\%)$} & \makecell{$0.482\pm0.017$\\$(1.4\%)$} \\[3pt]
\bottomrule
\end{tabular}
\end{table}

\paragraph{Dominant costs.}
Descent is the largest phase on eleven of the twelve datasets, accounting for $52.2\%$--$96.7\%$ of total runtime on those datasets.
ORL is the exception, where screening accounts for $60.7\%$ of runtime compared with $33.3\%$ for descent.
Screening is also substantial on COIL100, Sensorless, CIFAR-100, and Yale, where it accounts for $44.3\%$, $35.3\%$, $31.1\%$, and $28.9\%$ of runtime, respectively.
Exact pair search remains relatively small: it accounts for at most $13.5\%$ on Yale and no more than $4.8\%$ on any other dataset; on Sensorless, CIFAR-10, MNIST, and Fashion-MNIST it is below $0.05\%$.
Other overhead remains at or below $2.1\%$ across all datasets.
These phase shares suggest that further runtime reductions on the evaluated workloads depend primarily on accelerating descent and screening rather than the remaining exact pair search.
Because the timing study contains only screening-enabled runs, the breakdown characterizes runtime composition rather than the net speedup attributable to screening.

\end{document}